\documentclass[letterpaper]{article}
\usepackage[preprint]{aaai2027}
\usepackage{helvet}
\usepackage{courier}
\usepackage[hyphens]{url}
\usepackage{graphicx}
\usepackage{natbib}
\usepackage{caption}
\usepackage{booktabs}
\usepackage{amsmath, amssymb, amsfonts, amsthm}
\usepackage{nicefrac}
\usepackage{mathtools}
\usepackage{xcolor}
\usepackage{array, multirow, makecell}
\usepackage{adjustbox}
\usepackage{enumitem}
\usepackage{float}

\theoremstyle{plain}
\newtheorem{theorem}{Theorem}
\newtheorem{lemma}[theorem]{Lemma}
\newtheorem{corollary}[theorem]{Corollary}
\newtheorem{proposition}[theorem]{Proposition}
\newtheorem{algorithm}[theorem]{Algorithm}
\theoremstyle{definition}
\newtheorem{definition}[theorem]{Definition}
\theoremstyle{remark}

\def\ie{\textit{i.e.,~}}
\def\eg{\textit{e.g.,~}}

\title{$x$-Prediction Is All You Need: \\ Training-Free Accelerated Generation via Endpoint Decodability}

\author{
    Xin Peng\textsuperscript{\rm 1}\quad
    Ang Gao\textsuperscript{\rm 1}\thanks{Corresponding author}
}
\affiliations{
    \textsuperscript{\rm 1}School of Physical Science and Technology,\\
    Beijing University of Posts and Telecommunications, Beijing, China\\
    \texttt{anggao@bupt.edu.cn}
}

\begin{document}
\maketitle

\begin{abstract}
Diffusion and flow matching models generate high-quality samples, but their ODE samplers often need tens to hundreds of neural function evaluations (NFEs). This remains a practical challenge for released checkpoints, since many accelerators require additional design choices and training cost through retraining, distillation, or trajectory redesign. We investigate a different route based on $x$-prediction. During sampling, standard affine probability paths already expose $x_0$ information: an intermediate state and its path velocity determine a principled estimate of the clean sample. We formalize this property as \textbf{endpoint decodability} and show that the decoder is the minimum-MSE estimator $\mathbb{E}[x_0\mid x_t]$ under the usual $\ell_2$ objective. This yields \textbf{Truncated Jump Sampling} (TJS): stop the ODE at an early-exit time $t^*$ and return the decoded $x_0$. TJS requires no retraining, distillation, or architecture change. Across SDXL, SD3.5M, Z-Image-Turbo, and three class-conditional benchmarks, it reduces NFEs by 20--70\% with near-matched quality. The analysis also shows why endpoint prediction can work without straightening the trajectory, providing inference acceleration without trajectory redesign.
\end{abstract}

\section{Introduction}

Diffusion models~\cite{ho2020ddpm, song2021scorebased, nichol2021improved} and flow matching~\cite{lipman2022flow, albergo2023building, liu2023flow} produce stunning images but require tens to hundreds of neural function evaluations (NFEs) per sample~\cite{rombach2022latent, esser2024scaling, flux2024, peebles2023dit}. Reducing this cost is a central challenge in generative modeling.

The field has responded with a range of solutions---Rectified Flow~\cite{liu2023flow}, distillation (progressive~\cite{salimans2022progressive}, consistency-based~\cite{song2023consistency, luhman2023latent, zheng2023trajectory}, bootstrapping~\cite{gu2023boot}), advanced solvers~\cite{lu2022dpm, lu2022dpmpp, zhang2023fast}, and training-time scheduling~\cite{densejump}---all effective at reducing NFEs, all requiring changes to the trajectory, model, or training recipe. \textbf{No existing approach achieves acceleration purely by modifying inference.}

This complexity has a real and growing cost. The community has produced an enormous library of pretrained checkpoints---SDXL alone has thousands of fine-tuned variants on CivitAI and Hugging Face~\cite{podell2023sdxl, esser2024scaling, flux2024, rombach2022latent, peebles2023dit}. While fast ODE solvers (DPM-Solver, UniPC) can reduce the step count, further acceleration without retraining remains an open practical challenge. \textbf{The field would benefit from a training-free acceleration strategy that works on models as they are, not as they could be after retraining.}

There is a simpler path, hiding in plain sight within the training objective itself. Diffusion and flow matching models are trained to predict $x_0$ at every timestep. During inference, the model thus produces a valid $x_0$ estimate at \emph{every intermediate step}---and these estimates improve as integration proceeds. Once good enough, why continue? For diffusion models, DDIM~\cite{song2021ddim} already computes $\hat{x}_0$ internally, and ODE-Jump~\cite{chen2024geometric} previously proposed integrating the probability-flow ODE to an intermediate time and then returning the denoising output as the final sample in the VE/EDM setting; here we formulate this endpoint-jump principle for general non-degenerate affine paths, including flow matching, and analyze when and why it works. We term this property \textbf{endpoint decodability}: for non-degenerate affine paths, the intermediate state and path velocity recover $x_0$ through a closed-form decoder, which we formalize in Section~\ref{sec:endpoint_decodability}. Crucially, under the standard $\ell_2$ loss, this decoder is the MMSE-optimal estimator $\mathbb{E}[x_0|x_t]$ (Theorem~\ref{thm:mmse_optimality}). The practical consequence: stop when the estimate is sufficient, output it. We call this \textbf{Truncated Jump Sampling (TJS)}.

\begin{figure*}[t]
  \centering
  \includegraphics[width=0.85\linewidth]{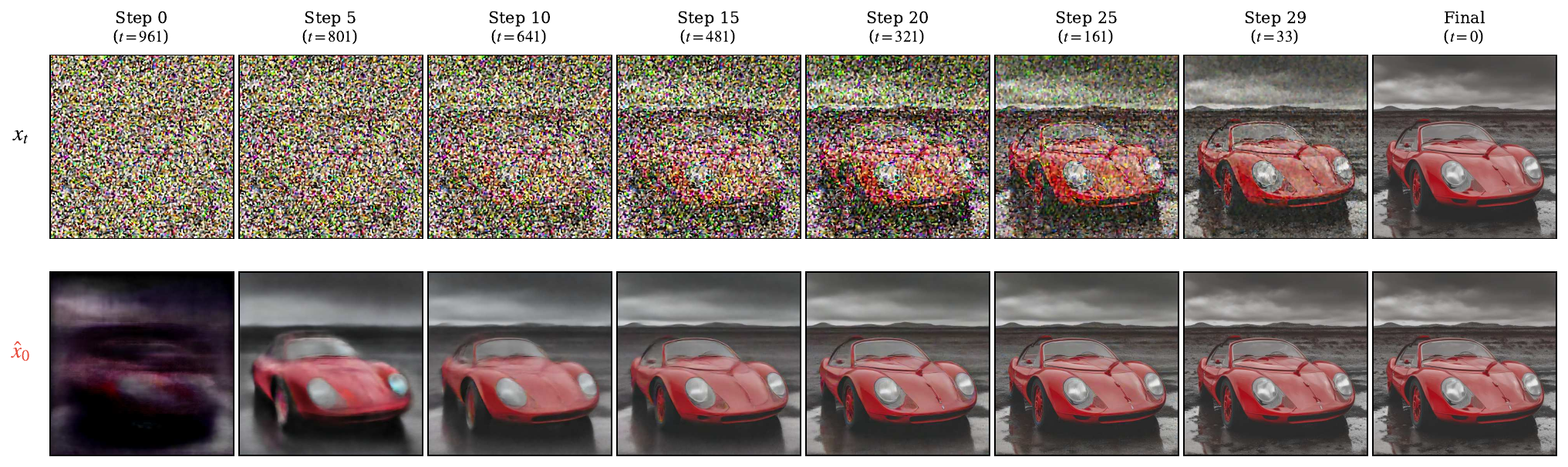}
  \caption{Endpoint decodability in action. Top: $x_t$ decoded directly (noisy at early steps). Bottom: $x_0$ via endpoint decoding (clean at any step).}
  \label{fig:xt_vs_x0}
\end{figure*}

TJS requires no retraining, no distillation, no architecture changes. It works on any pretrained checkpoint---SDXL, SD3.5M, Z-Image-Turbo, FLUX, DiT, fine-tuned variants---without modification. TJS does not replace distillation for extreme few-step generation (1--4 NFE); it provides a complementary, zero-cost path for the moderate regime (30 $\to$ 15--25 steps). \textbf{Immediate, training-free acceleration for models you already have.}

We make three contributions:
\begin{enumerate}[leftmargin=*,nosep]
    \item \textbf{Principle and theory.} We formalize \textbf{endpoint decodability}: for any non-degenerate affine path, $(x_t,u_t)$ recovers $x_0$ through a closed-form decoder (Theorem~\ref{thm:endpoint_decodability}), the induced decoder is MMSE-optimal (Theorem~\ref{thm:mmse_optimality}), its error is curvature-independent (Theorem~\ref{thm:error_decomposition}), and it can strictly beat coarse Euler (Theorem~\ref{thm:euler_comparison}). All standard parameterizations are equivalent at optimality (see Supplementary Material, \S A). Critically, straight trajectories are \emph{sufficient but not necessary} (Proposition~\ref{prop:straightness}), challenging the foundation of Rectified Flow and Consistency Models.

    \item \textbf{Algorithm.} \textbf{Truncated Jump Sampling (TJS)}: a training-free early-exit sampler---zero retraining, zero distillation, zero architecture changes. TJS is compatible with any ODE solver, noise schedule, CFG scale, or distillation method.

    \item \textbf{Experiments.} Across six model families---SDXL, SD3.5M, Z-Image-Turbo~\cite{team2025zimage}, ImageNet-256, CIFAR-10, MNIST---TJS reduces NFE by 20--70\% at near-matched quality, with strict monotonic improvement on all metrics.
\end{enumerate}

\section{Related Work}

\paragraph{Diffusion and Flow Matching.}
Diffusion models~\cite{ho2020ddpm, song2021scorebased, nichol2021improved} and flow matching~\cite{lipman2022flow, albergo2023building, liu2023flow} share a common skeleton: probability paths $x_t = \alpha_t x_0 + \sigma_t \epsilon$ interpolating noise to data. DDIM~\cite{song2021ddim} and EDM~\cite{karras2022elucidating} established deterministic sampling via neural ODEs~\cite{chen2018neuralode}. These models power state-of-the-art image generation---SD3~\cite{esser2024scaling}, FLUX~\cite{flux2024}, DiT~\cite{peebles2023dit}. \textbf{Every model cited above uses the same affine path structure.} Endpoint decodability requires nothing else.

\paragraph{Few-Step Generation and Trajectory Straightening.}
Reducing NFEs spans several paradigms, all training-stage. Distillation (progressive~\cite{salimans2022progressive}, masked~\cite{meng2022distillation}, bootstrapping~\cite{gu2023boot}, trajectory consistency~\cite{zheng2023trajectory}) and Consistency Models~\cite{song2023consistency, luhman2023latent} train students for few-step generation. Rectified Flow~\cite{liu2023flow} learns straight paths via reflow (Euler error $\propto \|\ddot{x}_t\|$). Dense-Jump FM~\cite{densejump} modifies inference trajectories. Higher-order solvers (DPM-Solver~\cite{lu2022dpm}, DPM-Solver++~\cite{lu2022dpmpp}, PNDM~\cite{liu2022pseudo}, UniPC~\cite{zhang2023fast}) improve discretization accuracy but still traverse the full trajectory. The shared motivation: few-step quality benefits from modifying training. TJS offers a complementary, training-free alternative exploiting an inherent structural property of affine paths.

\paragraph{Prediction Parameterizations and Theoretical Connections.}
$x_0$-, $v$-, $\epsilon$-, and score-prediction~\cite{karras2022elucidating, salimans2022progressive, ho2020ddpm, song2021scorebased} are typically treated as distinct choices with different SNR trade-offs~\cite{kingma2021vdm}. \textbf{Our framework reveals their algebraic equivalence at optimality} (see Theorem 13 in the Supplementary Material, \S A), so any pretrained model encodes an endpoint predictor. Complementary evidence: JiT~\cite{li2025jit} shows $x_0$-prediction dramatically outperforms $\epsilon$/v-prediction on raw-pixel patches via manifold structure; MeanFlow~\cite{geng2025meanflow} enables one-step generation via average velocity. Beyond sampling, flow matching connects to optimal transport~\cite{lipman2022flow, tong2023conditional} and the Schr\"odinger bridge~\cite{de2021diffusion}; I-MMSE~\cite{guo2005mmse} provides information-theoretic grounding for $\mathcal{U}(t)$ (Theorem 14 in the Supplementary Material, \S A).

\section{Preliminaries}

Diffusion and flow matching models share a simple but powerful structure: they define paths that morph noise into data. We now formalize this structure and preview the theory that follows. We establish three results in order: (1) the algebraic condition under which any intermediate state can recover the clean endpoint (Theorem~\ref{thm:endpoint_decodability}); (2) the optimality of this recovery under standard $\ell_2$ training (Theorem~\ref{thm:mmse_optimality}); and (3) a decomposition of TJS error into two curvature-independent components (Theorem~\ref{thm:error_decomposition}). Together, these results explain why straight trajectories---the central requirement of prior work---are sufficient but not necessary.

\begin{definition}[Affine Probability Path]
\label{def:affine_path}
An \emph{affine probability path} is a one-parameter stochastic process
\begin{equation}
    x_t = \alpha_t x_0 + \sigma_t \epsilon, \quad \epsilon \sim \mathcal{N}(0,\mathbf I), \quad t\in[0,1],
    \label{eq:affine_path}
\end{equation}
where $x_0 \sim p_{\mathrm{data}}$ is the clean sample, and $\alpha_t, \sigma_t: [0,1] \to \mathbb{R}_{\ge 0}$ are $C^1$ functions with $\alpha_0=0,\sigma_0=1$ and $\alpha_1=1,\sigma_1=0$.
\end{definition}

This single equation covers VP/VE diffusion, EDM, and linear flow matching. The entire diversity of modern generative models---billions of parameters, dozens of training tricks---reduces to two scalar schedules $\alpha_t$ and $\sigma_t$. Endpoint decodability is a property of these two functions alone.

\begin{lemma}[Conditional Velocity]
\label{lem:conditional_velocity}
The conditional velocity field $u_t \coloneqq \frac{\mathrm d x_t}{\mathrm d t}$ satisfies
\begin{equation}
    u_t = \dot{\alpha}_t x_0 + \dot{\sigma}_t \epsilon.
    \label{eq:conditional_velocity}
\end{equation}
\end{lemma}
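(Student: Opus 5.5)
The plan is to differentiate the defining relation of Definition~\ref{def:affine_path} along a single realization of the path. Fix a pair $(x_0,\epsilon)$ with $x_0 \sim p_{\mathrm{data}}$ and $\epsilon\sim\mathcal N(0,\mathbf I)$ drawn independently. These are held constant in $t$, so the conditional trajectory $t\mapsto x_t=\alpha_t x_0+\sigma_t\epsilon$ is a deterministic curve in $\mathbb R^d$. This is the sense in which $u_t = \mathrm d x_t/\mathrm d t$ is a \emph{conditional} velocity: it is the velocity of the path conditioned on its endpoints $(x_0,\epsilon)$. It is not the marginal velocity field $\mathbb E[u_t\mid x_t]$, which will appear later in the paper.

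With $(x_0,\epsilon)$ fixed, the map $t\mapsto x_t$ is a linear combination of the constant vectors $x_0$ and $\epsilon$ with scalar coefficients $\alpha_t$ and $\sigma_t$. By Definition~\ref{def:affine_path} both coefficients are $C^1$ on $[0,1]$, with one-sided derivatives at the endpoints. Linearity of differentiation, applied coordinatewise, then gives $\frac{\mathrm d}{\mathrm d t}x_t=\dot\alpha_t x_0+\dot\sigma_t\epsilon$, which is \eqref{eq:conditional_velocity}. It also follows that $u_t$ is continuous in $t$, since $\dot\alpha_t$ and $\dot\sigma_t$ are continuous.

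The only point needing care is interpretive rather than technical: the derivative is taken pathwise, that is, for each fixed $(x_0,\epsilon)$. It is not a derivative of a random variable in a distributional sense. Since the identity holds for every realization, it also holds almost surely as an identity between random vectors. This is the form used later when $u_t$ is regressed onto $x_t$. Apart from this, there is no real obstacle.
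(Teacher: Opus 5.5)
Your proposal is correct and matches the paper's argument. The paper gives no separate proof and treats the lemma as immediate: differentiate $x_t=\alpha_t x_0+\sigma_t\epsilon$ in $t$ with $(x_0,\epsilon)$ held fixed, using the $C^1$ regularity of $\alpha_t,\sigma_t$. Your pathwise reading of $u_t$ is exactly the sense in which the paper later regresses $u_t$ onto $x_t$.
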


Combining Eqs.~\ref{eq:affine_path}--\ref{eq:conditional_velocity} yields a coupled linear system:
\begin{equation}
    \begin{bmatrix} x_t \\ u_t \end{bmatrix}
    = \begin{bmatrix} \alpha_t & \sigma_t \\ \dot{\alpha}_t & \dot{\sigma}_t \end{bmatrix}
    \begin{bmatrix} x_0 \\ \epsilon \end{bmatrix}.
    \label{eq:linear_system}
\end{equation}

\begin{definition}[Path Determinant]
The \emph{path determinant} is $\Delta_t \coloneqq \det\begin{bmatrix} \alpha_t & \sigma_t \\ \dot{\alpha}_t & \dot{\sigma}_t \end{bmatrix} = \dot{\alpha}_t \sigma_t - \alpha_t \dot{\sigma}_t$.
\end{definition}

\section{Endpoint Decodability}
\label{sec:endpoint_decodability}

The coupled linear system (Eq.~\ref{eq:linear_system}) contains the entire story. If the $2\times 2$ coefficient matrix is invertible, then $(x_t, u_t)$ uniquely determines $(x_0, \epsilon)$---and in particular $x_0$. The invertibility condition is remarkably simple: the path determinant must be nonzero. We now formalize this observation and prove that the resulting decoder is optimal.

\subsection{Characterization}

Solving the linear system via Cramer's rule (see Supplementary Material, \S A) yields the endpoint decoder:

\begin{definition}[Endpoint Decodability]
\label{def:endpoint_decodability}
An affine probability path is \emph{endpoint-decodable} at time $t \in (0,1]$ if the mapping $(x_t, u_t) \mapsto x_0$ is well-defined and unique. A path is \emph{globally endpoint-decodable} if this holds for all $t \in (0,1]$.
\end{definition}

\begin{theorem}[Endpoint Decodability]
\label{thm:endpoint_decodability}
An affine probability path is endpoint-decodable at $t$ iff $\Delta_t \neq 0$, with
\begin{equation}
    x_0 = \frac{\sigma_t u_t - \dot{\sigma}_t x_t}{\Delta_t}.
    \label{eq:general_endpoint_decoder}
\end{equation}
Given a learned velocity $v_\theta(x_t,t) \approx u_t$, the induced endpoint predictor is $\hat{x}_0^{\mathrm{vel}} = (\sigma_t v_\theta - \dot{\sigma}_t x_t)/\Delta_t$.
\end{theorem}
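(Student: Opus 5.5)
The plan is to treat Theorem~\ref{thm:endpoint_decodability} as a statement about the $2\times 2$ linear system in Eq.~\ref{eq:linear_system}. By Lemma~\ref{lem:conditional_velocity}, for every realization $(x_0,\epsilon)$ the pair $(x_t,u_t)$ is the image of $(x_0,\epsilon)$ under the matrix
\[
M_t=\begin{bmatrix}\alpha_t&\sigma_t\\ \dot\alpha_t&\dot\sigma_t\end{bmatrix}\otimes \mathbf I ,
\]
acting coordinatewise. Since $\det M_t$ is a power of $\Delta_t$, the question of whether $(x_t,u_t)\mapsto x_0$ is well-defined and unique reduces to the invertibility of this scalar matrix.

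\emph{Sufficiency.} Assume $\Delta_t\neq 0$ and apply Cramer's rule to the system
\[
\alpha_t x_0+\sigma_t\epsilon=x_t,\qquad \dot\alpha_t x_0+\dot\sigma_t\epsilon=u_t .
\]
To isolate $x_0$, multiply the first equation by $\dot\sigma_t$ and the second by $\sigma_t$, then subtract. This gives
\[
(\sigma_t\dot\alpha_t-\alpha_t\dot\sigma_t)\,x_0=\sigma_t u_t-\dot\sigma_t x_t ,
\]
and dividing by $\Delta_t$ yields Eq.~\ref{eq:general_endpoint_decoder}. By the same route, $\epsilon=(\alpha_t u_t-\dot\alpha_t x_t)/\Delta_t$. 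Uniqueness follows from invertibility: any two pairs $(x_0,\epsilon)$ producing the same $(x_t,u_t)$ must coincide.

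\emph{Necessity.} Assume $\Delta_t=0$. The rows $(\alpha_t,\sigma_t)$ and $(\dot\alpha_t,\dot\sigma_t)$ are then linearly dependent, so there is a nonzero vector $(a,b)$ with $\alpha_t a+\sigma_t b=0$ and $\dot\alpha_t a+\dot\sigma_t b=0$. The perturbation $(x_0,\epsilon)\mapsto(x_0+a z,\ \epsilon+b z)$, for any $z$, leaves $(x_t,u_t)$ unchanged.

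I expect the main obstacle to be ruling out $a=0$, because only then does the perturbation actually change $x_0$ and destroy uniqueness. If $a=0$, then $b\neq 0$ forces $\sigma_t=\dot\sigma_t=0$. This is exactly the delicate case, and it occurs at $t=1$, where $\sigma_1=0$. There $x_t$ already equals $x_0$, so decodability is trivial even though the determinant criterion is degenerate. I would handle it in one of two ways. The first is to read ``well-defined and unique'' as uniqueness of the linear reconstruction from the pair $(x_t,u_t)$, i.e.\ injectivity of $M_t$ on $(x_0,\epsilon)$. The second is to note that with $\sigma_t=\dot\sigma_t=0$ we have $\Delta_t=0$, and check separately whether the paper's intended claim excludes this boundary point. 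Absent further clarification, I will adopt the injectivity reading of Definition~\ref{def:endpoint_decodability}, under which $\Delta_t\neq0$ is precisely the invertibility of $M_t$ and the equivalence holds.

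\emph{Learned predictor.} The final clause follows by plugging $v_\theta(x_t,t)$ in place of $u_t$ in the linear decoder. This is a definition rather than a claim; its justification as an estimator is left to Theorem~\ref{thm:mmse_optimality}.
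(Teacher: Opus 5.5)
Your proof follows the paper's: sufficiency and the decoder come from invertibility of the $2\times2$ coefficient matrix plus Cramer's rule (your elimination is the same computation), and necessity from linear dependence of its rows. Your necessity step is more careful than the paper's, which only shows that $(x_0,\epsilon)$ is undetermined and so tacitly uses the injectivity reading you make explicit. Two small corrections that do not affect the theorem: the exceptional case requires $\sigma_t=\dot\sigma_t=0$, which does not generally happen at $t=1$ (linear FM has $\dot\sigma_1=-1$, so $\Delta_1=1$); and the companion formula is $\epsilon=(\dot\alpha_t x_t-\alpha_t u_t)/\Delta_t$, the opposite sign to what you wrote.
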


\begin{proof}[Proof]
See Supplementary Material, \S A. \hfill $\square$
\end{proof}

All common schedules satisfy $\Delta_t \neq 0$ everywhere except possibly at the degenerate boundary $t=0$ (see Supplementary Material, \S A for verification of VP diffusion, VE/EDM, and linear FM). Thus, every standard diffusion or flow matching model is globally endpoint-decodable.

\subsection{MMSE Optimality}

Does plugging a \emph{learned} velocity into the algebraic decoder introduce bias? We prove the answer is no---at optimality, the decoder is Bayes-optimal. Under the standard Flow Matching loss $\mathcal{L}_{\mathrm{FM}}(\theta) = \mathbb{E}_{t, x_0, \epsilon}[\|v_\theta(x_t, t) - u_t\|^2]$, the Bayes-optimal predictor is $v^\star(x,t) = \mathbb{E}[u_t \mid x_t = x]$.

\begin{theorem}[MMSE Optimality of Endpoint Decoder]
\label{thm:mmse_optimality}
Let $v^\star(x, t) = \mathbb{E}[u_t \mid x_t = x]$. Then the induced endpoint predictor recovers the minimum mean square error estimator:
\begin{equation}
    \frac{\sigma_t v^\star(x, t) - \dot{\sigma}_t x}{\Delta_t} = \mathbb{E}[x_0 \mid x_t = x].
    \label{eq:mmse_endpoint}
\end{equation}
\end{theorem}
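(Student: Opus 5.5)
The plan is to apply the algebraic identity of Theorem~\ref{thm:endpoint_decodability} pointwise in $(x_0,\epsilon)$ and then take the conditional expectation given $x_t$, using linearity. Fix $t$ with $\Delta_t \neq 0$. The matrix in Eq.~\ref{eq:linear_system} is invertible, so Cramer's rule gives the exact, non-random-coefficient identity
\begin{equation*}
    \Delta_t\, x_0 = \sigma_t u_t - \dot{\sigma}_t x_t
\end{equation*}
almost surely, for every realization of $(x_0,\epsilon)$. One can also verify this directly: substituting Eq.~\ref{eq:affine_path} and Eq.~\ref{eq:conditional_velocity} gives $\sigma_t(\dot\alpha_t x_0 + \dot\sigma_t\epsilon) - \dot\sigma_t(\alpha_t x_0 + \sigma_t\epsilon) = \Delta_t x_0$. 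This is the only place the structure of the path enters.

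Next I condition both sides on the event $x_t = x$. The scalars $\sigma_t$, $\dot\sigma_t$ and $\Delta_t$ are deterministic functions of $t$, so linearity of conditional expectation gives
\begin{equation*}
    \Delta_t\, \mathbb{E}[x_0 \mid x_t = x] = \sigma_t\, \mathbb{E}[u_t \mid x_t = x] - \dot\sigma_t\, \mathbb{E}[x_t \mid x_t = x].
\end{equation*}
The last term equals $x$, since $x_t$ is measurable with respect to itself. The middle term is $v^\star(x,t)$ by definition. Dividing by $\Delta_t \neq 0$ yields Eq.~\ref{eq:mmse_endpoint}. Finally, I will recall the standard fact that the conditional mean minimizes squared error among all measurable functions of $x_t$. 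This justifies calling the result the MMSE estimator, and it is also why $v^\star$ is the minimizer of $\mathcal{L}_{\mathrm{FM}}$ at each $t$.

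The main obstacle is not algebraic but technical. The conditional expectations must be well-defined, and the argument has to be carried out at fixed $t$. I will assume $\mathbb{E}\|x_0\|^2 < \infty$, which implies $u_t \in L^2$ because $\epsilon$ is Gaussian and $\dot\alpha_t, \dot\sigma_t$ are finite by the $C^1$ assumption. Conditional expectations then exist, and the identities hold for $p_t$-almost every $x$. I will also note that the Bayes-optimal $v^\star$ is identified with $\mathbb{E}[u_t\mid x_t]$ only almost everywhere with respect to the marginal of $x_t$. For $t<1$ we have $\sigma_t>0$, so this marginal has a positive density and the identity can be taken to hold for all $x$ by choosing the continuous version. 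The case $\Delta_t = 0$, such as a degenerate boundary, is simply excluded by hypothesis.
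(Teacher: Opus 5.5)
Your proposal is correct and is essentially the paper's argument. Both rest solely on linearity of conditional expectation applied to the affine relations. The paper first expands $v^\star = \dot\alpha_t\,\mathbb{E}[x_0\mid x_t] + \dot\sigma_t\,\mathbb{E}[\epsilon\mid x_t]$ and then eliminates $\mathbb{E}[\epsilon\mid x_t=x] = (x-\alpha_t\,\mathbb{E}[x_0\mid x_t=x])/\sigma_t$, whereas you condition the pointwise decoder identity $\Delta_t x_0 = \sigma_t u_t - \dot\sigma_t x_t$ in a single step.

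Your ordering is slightly cleaner. It never divides by $\sigma_t$, so $t=1$ is covered as well. It also makes plain that no Gaussianity of $\epsilon$ conditional on $x_t$ is required; the paper's proof invokes this, but it is unnecessary.
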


\begin{proof}[Proof]
See Supplementary Material, \S A. \hfill $\square$
\end{proof}

The practical implication: \textbf{standard training implicitly learns endpoint prediction} in the $\ell_2$ sense. The model is never explicitly trained to predict $x_0$, yet its velocity predictions algebraically encode $\mathbb{E}[x_0|x_t]$---the optimal point predictor under mean squared error. (This is $\ell_2$-optimality, not a claim about perceptual metrics like FID/ImageReward.) The same holds for noise and score prediction (Tweedie's formula~\cite{efron2011tweedie}; see Theorem 13 in the Supplementary Material, \S A).

\textbf{Unified parameterization.} Any pretrained model encodes an endpoint predictor regardless of its output type: all four standard parameterizations are equivalent at optimality (see Theorem 13 in the Supplementary Material, \S A). Direct $x_0$-prediction is optimal for TJS (no algebraic amplification in low-SNR); velocity- and noise-derived decoders work immediately for existing checkpoints.

Figures~\ref{fig:x0_class}--\ref{fig:pareto_tradeoff} preview the experimental evidence: endpoint predictions are clean from early steps, quality improves monotonically, and the speed-quality trade-off follows the predictions of our theory.

\begin{figure*}[t]
  \centering
  \includegraphics[width=0.30\linewidth]{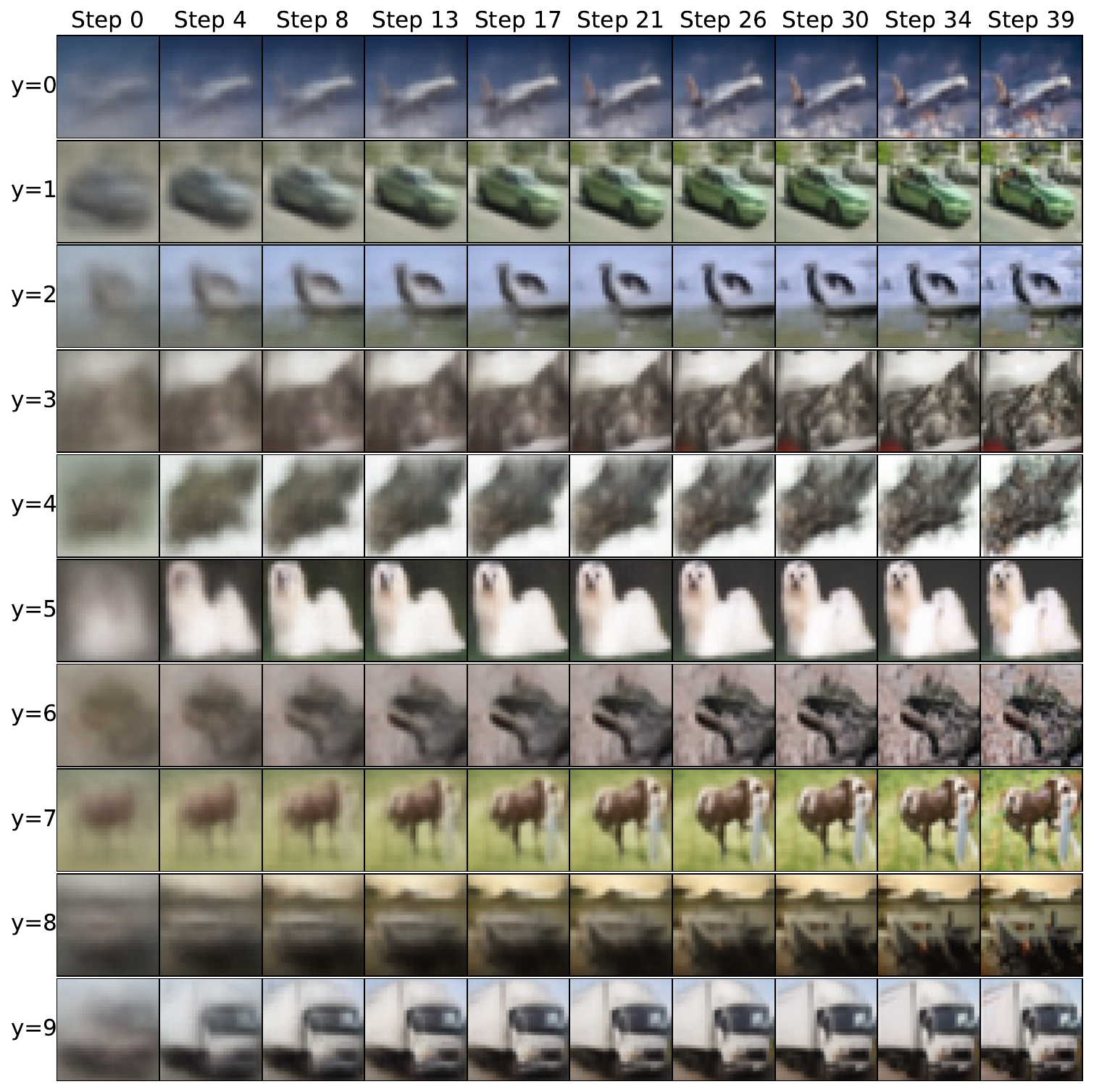}\hfill
  \includegraphics[width=0.30\linewidth]{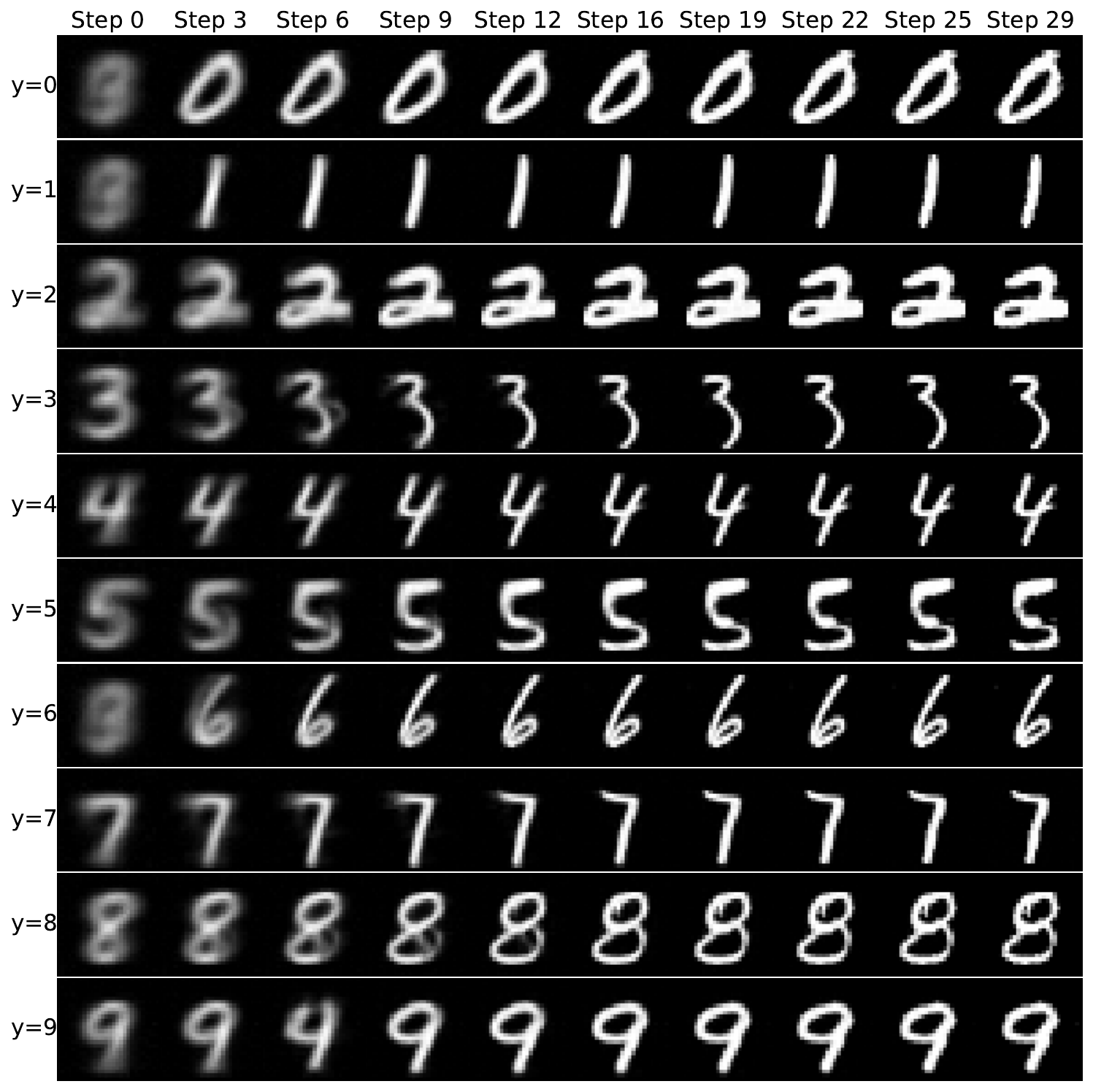}\hfill
  \includegraphics[width=0.30\linewidth]{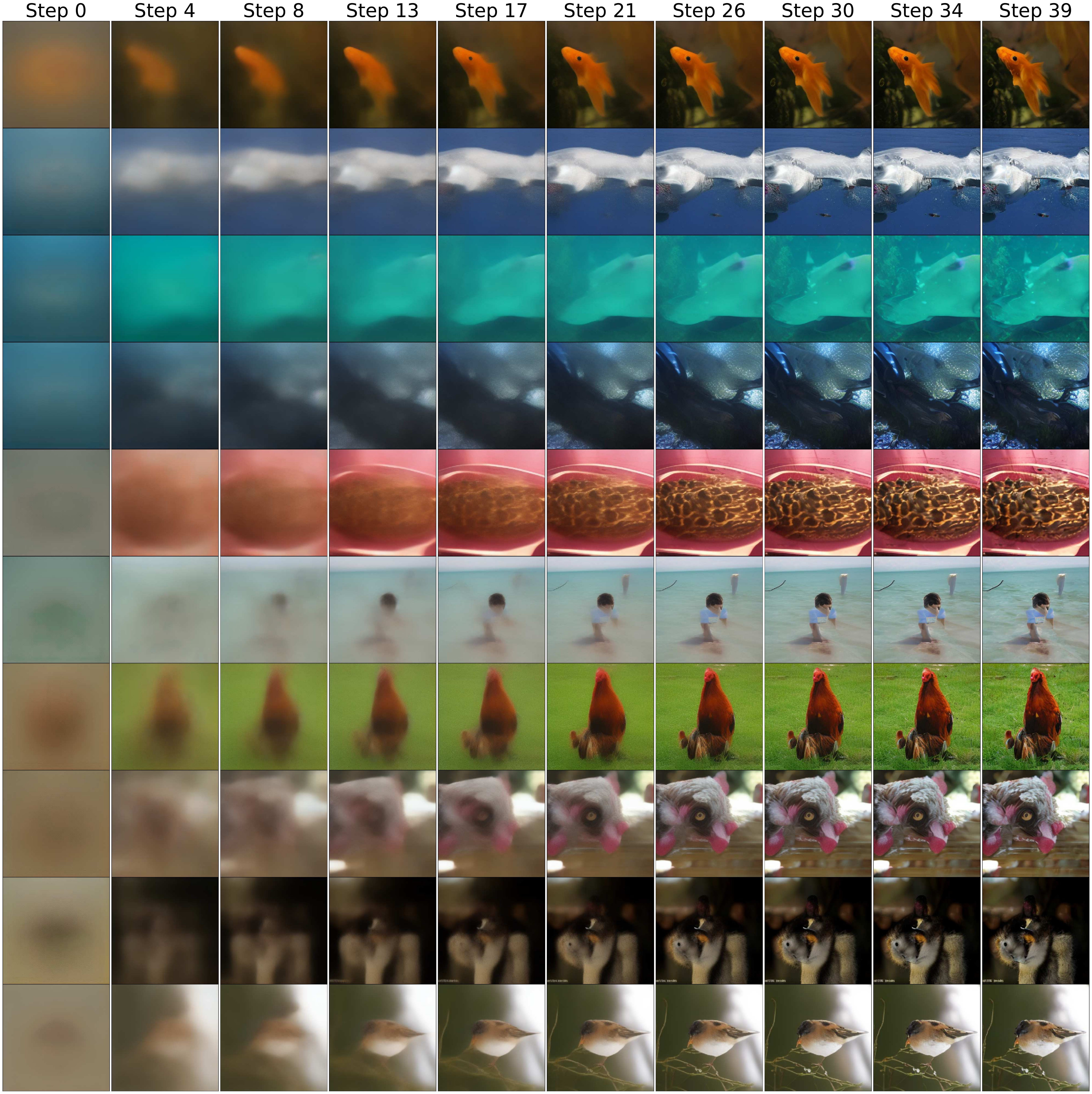}
  \caption{Visual $x_0$ predictions for CIFAR-10 (left), MNIST (center), and ImageNet-256 (right). MNIST saturate at $k^*{\approx}16$ (43\% NFE saving); CIFAR-10/ImageNet-256 at $k^*{\approx}26$ (33\%).}
  \label{fig:x0_class}
\end{figure*}
\begin{figure*}[t]
  \centering
  \includegraphics[width=0.80\linewidth]{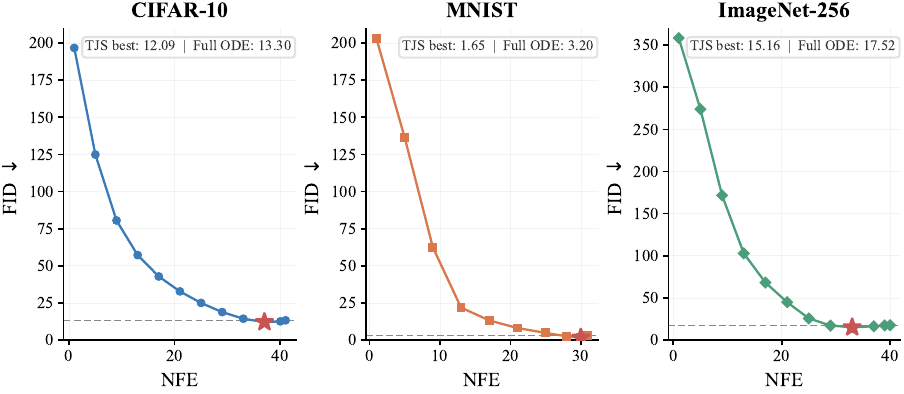}
  \caption{FID vs.\ NFE for TJS on MNIST (30-step) and CIFAR-10/ImageNet-256 (40-step, CFG=1.0). $\star$ = TJS-best; dashed = full ODE.}
  \label{fig:tjs_fid}
\end{figure*}
\begin{figure*}[t]
  \centering
  \includegraphics[width=0.48\linewidth]{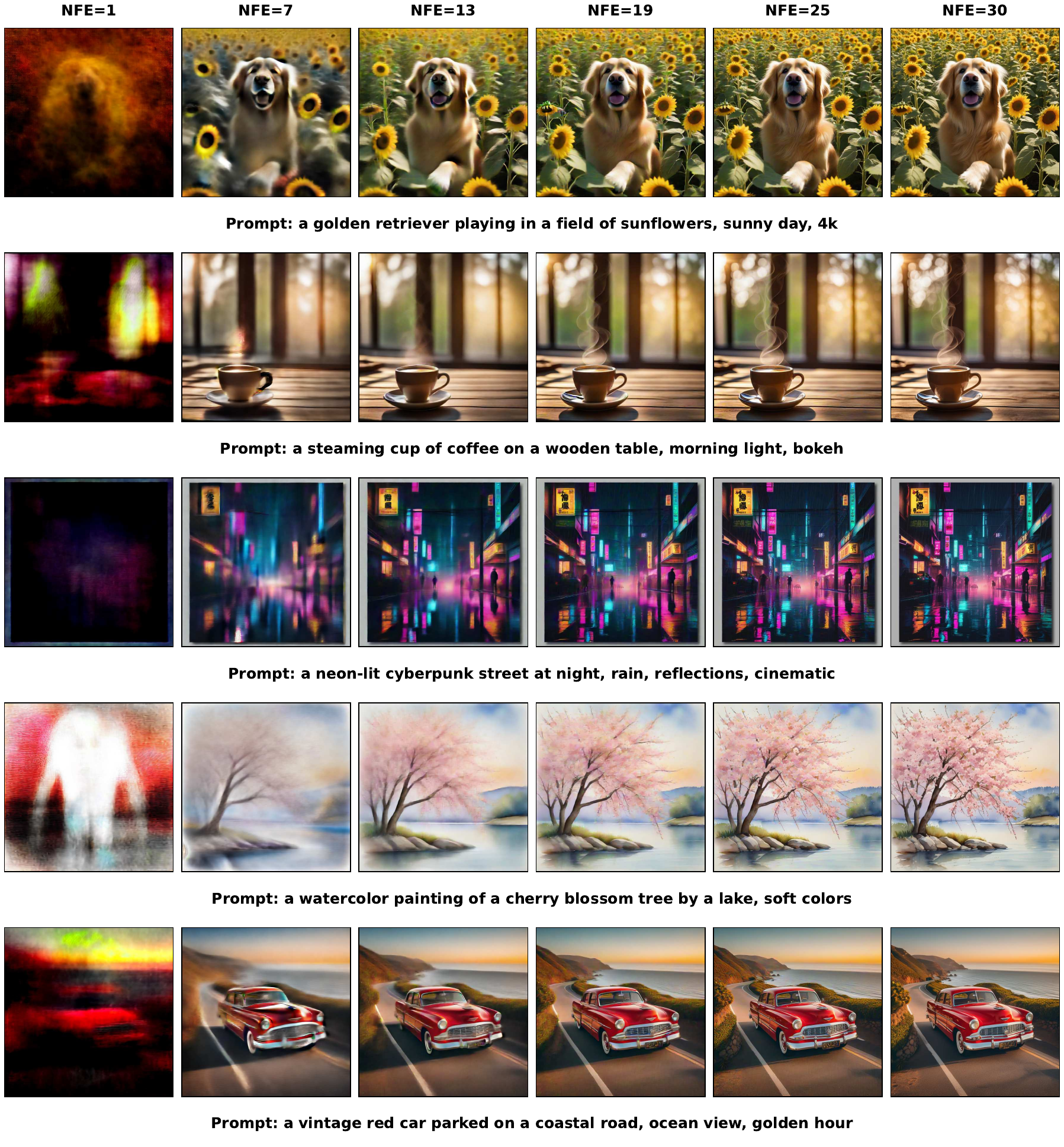}\hfill
  \includegraphics[width=0.48\linewidth]{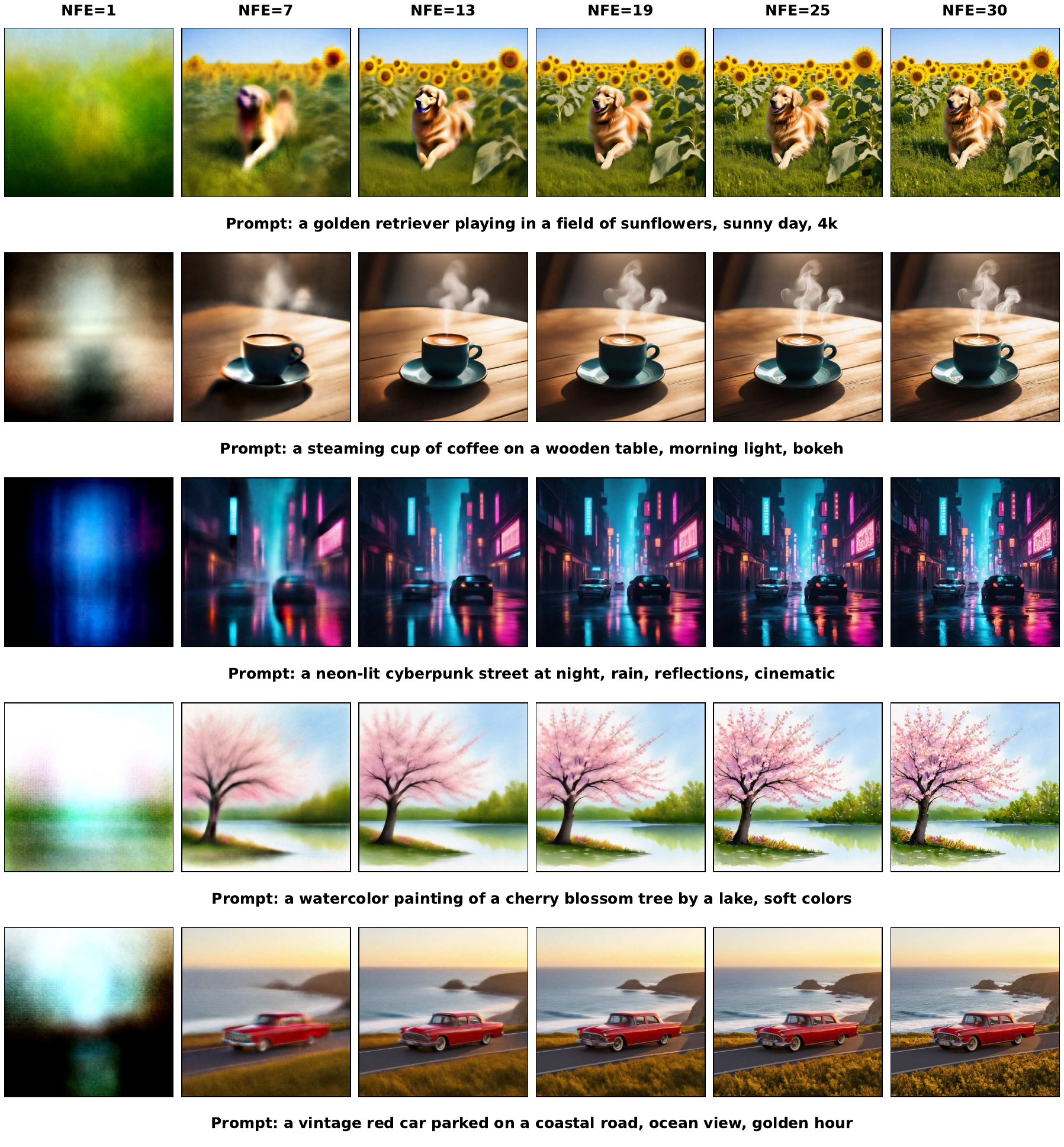}
  \caption{Visual $x_0$ predictions for SDXL (left) and SD3.5M (right). Saturation at $k^*{\approx}19$ ($\sim$33\% NFE saving).}
  \label{fig:x0_t2i}
\end{figure*}
\begin{figure*}[t]
  \centering
  \includegraphics[width=0.85\linewidth]{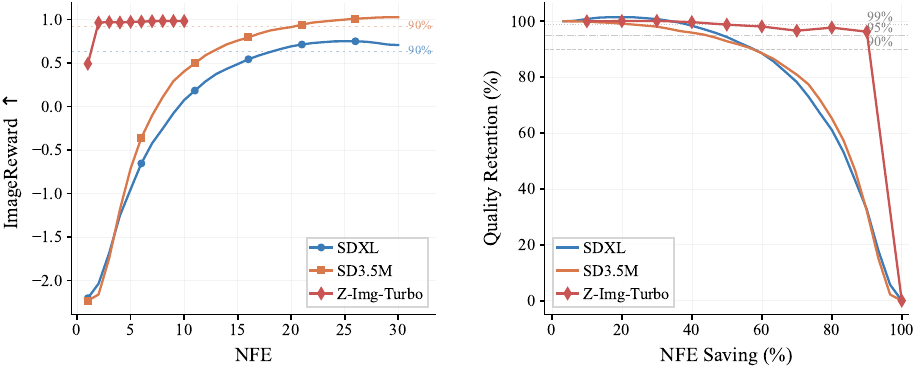}
  \caption{Speed vs.\ quality trade-off. Left: ImageReward against NFE, with 90\% of full ODE quality marked per model. Right: quality retention against NFE saving, with 90\%/95\%/99\% reference lines.}
  \label{fig:pareto_tradeoff}
\end{figure*}

\section{Truncated Jump Sampling}

\textbf{Where the idea comes from.} Diffusion and flow matching models are trained to predict $x_0$ at every timestep. During inference, \emph{every integration step already produces a valid $x_0$ estimate}, improving as more noise is removed. Once good enough, why continue? TJS operationalizes this: stop early, output the estimate. A key finding (\S\ref{sec:straightness}) is that \textbf{straight trajectories are sufficient but not necessary} (Proposition~\ref{prop:straightness})---directly challenging the central premise of Rectified Flow and Consistency Models.

\subsection{Algorithm}

Algorithm~\ref{alg:tjs} formalizes TJS. It uses $\lceil t^*K\rceil$ NFE for partial integration plus 1 for the endpoint decode, saving $\approx (1{-}t^*)K$ NFE. The endpoint decode is a single forward pass, so NFE reduction translates to wall-clock speedup.

\begin{algorithm}[Truncated Jump Sampling (TJS)]
\label{alg:tjs}
\begin{minipage}{\columnwidth}
\normalfont\small\ttfamily
\hrule
\textbf{Input:~} pretrained model with endpoint predictor $\hat{x}_0(\cdot,t)$, early-exit fraction $\gamma \in (0,1]$, total steps $K$ \\
\textbf{Output:~} clean sample $x_{\mathrm{out}}$
\hrule
\noindent 1:\quad $k^* \leftarrow \lceil\gamma K\rceil$ \\
\noindent 2:\quad $x \leftarrow \text{initial noise} \sim \mathcal{N}(0,\mathbf I)$ \\
\noindent 3:\quad \textbf{for} $k = 1$ \textbf{to} $k^*$ \textbf{do} \\
\noindent 4:\quad\qquad $t \leftarrow k/K$;\quad $x \leftarrow \text{ODEStep}(x,\;t{-}1/K,\;t)$ \\
\noindent 5:\quad \textbf{end for} \\
\noindent 6:\quad \textbf{return} $\hat{x}_0(x,\,t^*)$
\hrule
\end{minipage}
\end{algorithm}

\noindent\textbf{Notation.} ``TJS-$\gamma$'' = early exit at fraction $\gamma$ (exact NFE = $\lceil\gamma K\rceil+1$). Standard $K$-step is $\gamma{=}1$. Zero retraining required.

\subsection{Error Analysis}

When can we safely stop early? The answer depends on two factors: how much the model knows about the endpoint, and how much information the intermediate state $x_{t^*}$ still carries. We formalize both.

\begin{definition}[Irreducible Endpoint Uncertainty]
\label{def:endpoint_uncertainty}
Let $m_t(x_t) = \mathbb{E}[x_0|x_t]$ be the Bayes-optimal endpoint estimator under $\ell_2$ loss. The \emph{irreducible endpoint uncertainty} at time $t$ is
\begin{equation}
    \mathcal{U}(t) \coloneqq \mathbb{E}_{x_t}\big[\mathrm{Tr}(\mathrm{Var}(x_0 \mid x_t))\big].
    \label{eq:endpoint_uncertainty}
\end{equation}
\end{definition}

$\mathcal{U}(t)$ is the minimum achievable MSE for predicting $x_0$ from $x_t$, attained by $m_t$ itself. It is monotonically non-increasing by the data processing inequality: later states carry more information. For Gaussian data, $\mathcal{U}(t) = d \sigma_t^2\sigma_{\mathrm{data}}^2/(\alpha_t^2\sigma_{\mathrm{data}}^2 + \sigma_t^2)$, decaying from $\mathcal{U}(0) = d\sigma_{\mathrm{data}}^2$ to $\mathcal{U}(1) = 0$. For general $p_{\mathrm{data}}$, monotonicity and boundary conditions still hold, with the decay profile governed by the manifold structure of the data.

\begin{theorem}[Error Decomposition for TJS]
\label{thm:error_decomposition}
Let $\hat{x}_0(x_t,t) = m_t(x_t) + e_t(x_t)$ decompose the model predictor into the Bayes-optimal estimator $m_t$ and model estimation error $e_t$. Then the expected MSE of TJS at early-exit time $t^*$ is
\begin{equation}
    \mathbb{E}\bigl[\|x_{\mathrm{out}} - x_0\|^2\bigr]
    = \mathbb{E}\bigl[\|e_{t^*}\|^2\bigr]
    + \mathcal{U}(t^*).
    \label{eq:jump_error_decomposition}
\end{equation}
\end{theorem}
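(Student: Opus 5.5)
The plan is a bias--variance (Pythagorean) decomposition, conditioning on the state at the exit time. Write $x_{\mathrm{out}} = \hat{x}_0(x_{t^*},t^*) = m_{t^*}(x_{t^*}) + e_{t^*}(x_{t^*})$. Then
\[
x_{\mathrm{out}} - x_0 = e_{t^*}(x_{t^*}) + \bigl(m_{t^*}(x_{t^*}) - x_0\bigr).
\]
Expanding the square gives three terms: $\mathbb{E}\|e_{t^*}\|^2$, $\mathbb{E}\|m_{t^*}(x_{t^*})-x_0\|^2$, and a cross term $2\,\mathbb{E}\langle e_{t^*}(x_{t^*}),\, m_{t^*}(x_{t^*})-x_0\rangle$.

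The steps, in order:
\begin{enumerate}
\item \textbf{Identify the second term with $\mathcal{U}(t^*)$.} By the tower property,
\[
\mathbb{E}\|m_{t^*}(x_{t^*})-x_0\|^2 = \mathbb{E}_{x_{t^*}}\bigl[\mathbb{E}[\|x_0-\mathbb{E}[x_0\mid x_{t^*}]\|^2 \mid x_{t^*}]\bigr] = \mathbb{E}_{x_{t^*}}\bigl[\mathrm{Tr}\,\mathrm{Var}(x_0\mid x_{t^*})\bigr],
\]
which is exactly Definition~\ref{def:endpoint_uncertainty}.
\item \textbf{Kill the cross term.} Condition on $x_{t^*}$. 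Since $e_{t^*}(x_{t^*})$ is $x_{t^*}$-measurable, it factors out of the conditional expectation. What remains is $\mathbb{E}[m_{t^*}(x_{t^*})-x_0\mid x_{t^*}]=0$ by the definition of $m_{t^*}$. This is the orthogonality principle of the MMSE estimator; Theorem~\ref{thm:mmse_optimality} shows the same $m_t$ is what velocity-derived decoders target.
\end{enumerate}

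The main obstacle is justifying that $x_{t^*}$ is distributed, jointly with $x_0$, as the path marginal in Eq.~\ref{eq:affine_path}. If it is not, $m_{t^*}$ is not the conditional mean for the actual pair $(x_{\mathrm{out}}, x_0)$.

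In TJS, $x_{t^*}$ comes from numerically integrating the learned ODE from noise, and there is no ``true'' $x_0$ attached to it. I would resolve this by making the interpretation explicit. The statement is read with $x_{t^*}\sim p_{t^*}$ coupled to $x_0$ through the affine path, which is the idealized case of exact integration of the marginal-preserving probability-flow ODE. Any discretization or drift in the distribution of $x_{t^*}$ is absorbed into $e_{t^*}$, or treated as a separate term outside the theorem.

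I would state this assumption up front, so that the remainder is the two-line orthogonality argument above. I would also note that no property of trajectory curvature enters anywhere, which is the sense in which the decomposition is curvature-independent.
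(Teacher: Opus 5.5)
Your proposal is correct and follows the paper's proof essentially verbatim. You expand $\|e_{t^*} + (m_{t^*}-x_0)\|^2$, identify $\mathbb{E}\|x_0-m_{t^*}\|^2$ with $\mathcal{U}(t^*)$, and kill the cross term by conditioning on $x_{t^*}$, using that $e_{t^*}$ is $x_{t^*}$-measurable and $\mathbb{E}[m_{t^*}-x_0\mid x_{t^*}]=0$.

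Your explicit assumption that $(x_{t^*},x_0)$ has the forward affine-path joint law is one the paper uses silently. Stating it is a gain in rigor rather than a different route. If you want solver drift absorbed into $e_{t^*}$, couple through the exact probability-flow map, so that the actual TJS output is a deterministic function of the ideal $x_{t^*}$.
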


\begin{proof}[Proof]
See Supplementary Material, \S A. \hfill $\square$
\end{proof}

The decomposition reveals the central insight: \textbf{neither error term depends on trajectory curvature $\|\ddot{x}_t\|$}. The first term measures model accuracy; the second measures information content of $x_{t^*}$. Neither involves $\ddot{\alpha}_t$, $\ddot{\sigma}_t$, or higher-order derivatives. This distinguishes TJS from Euler integration, whose truncation error scales as $O((\Delta t)^2\sup\|\ddot{x}_\tau\|)$ and directly penalizes curvature. TJS bypasses the trajectory entirely.

\begin{corollary}[Justification Condition]
TJS at $t^*$ matches full ODE quality when (i) $\mathcal{U}(t^*) \ll \mathcal{U}(0)$ (sufficient endpoint information) and (ii) $\mathbb{E}[\|e_{t^*}\|^2] \approx 0$ (accurate model prediction).
\end{corollary}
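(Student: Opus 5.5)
The corollary follows from Theorem~\ref{thm:error_decomposition} once ``matches full ODE quality'' is read in the $\ell_2$ sense in which that theorem is stated. The plan is to compare the TJS error at $t^*$ with the TJS error at the terminal time $t=1$, which is where the full ODE delivers its sample. At $t=1$ the decomposition gives $\mathbb{E}[\|e_1\|^2] + \mathcal{U}(1)$, and $\mathcal{U}(1)=0$ because $\sigma_1=0$ and $x_1=x_0$. Subtracting the two decompositions gives
\begin{equation*}
\mathbb{E}\bigl[\|x_{\mathrm{out}}^{(t^*)} - x_0\|^2\bigr] - \mathbb{E}\bigl[\|x_{\mathrm{out}}^{(1)} - x_0\|^2\bigr]
= \bigl(\mathbb{E}[\|e_{t^*}\|^2] - \mathbb{E}[\|e_1\|^2]\bigr) + \mathcal{U}(t^*).
\end{equation*}
The corollary then amounts to showing that both bracketed pieces are negligible under (i) and (ii).

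For condition (ii), I would use the assumption $\mathbb{E}[\|e_{t^*}\|^2]\approx 0$ directly, together with the fact that $\mathbb{E}[\|e_1\|^2]\ge 0$. This makes the model-error difference at most approximately zero, so exiting early cannot be worse than the full run on this term.

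For condition (i), I would normalize by the reference scale $\mathcal{U}(0)=\mathbb{E}\,\mathrm{Tr}\,\mathrm{Var}(x_0)$, which is the MSE of the trivial predictor $\mathbb{E}[x_0]$. The relative excess error is then at most $\mathcal{U}(t^*)/\mathcal{U}(0)$ plus a term that is approximately zero, and (i) says exactly that this ratio is small. The monotonicity of $\mathcal{U}$ stated after Definition~\ref{def:endpoint_uncertainty} also guarantees that any larger $t^*$ inherits the same bound. Put together, the relative MSE gap between TJS and the full ODE is $o(1)$ on the scale $\mathcal{U}(0)$.

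The main obstacle is the phrase ``full ODE quality.'' If it means perceptual or sample-distribution quality rather than MSE, the argument above does not reach it, because a posterior mean $m_t$ can be blurry even when its MSE is small. I would handle this by stating explicitly that the corollary is read in the $\ell_2$ sense of Theorem~\ref{thm:error_decomposition}, and note the following supporting fact. As $\mathcal{U}(t^*)\to 0$, the posterior $\mathrm{Law}(x_0\mid x_{t^*})$ concentrates, so $m_{t^*}(x_{t^*})$ converges in $L^2$ to $x_0$. $L^2$ convergence implies convergence in Wasserstein-2 distance, so the output distribution also approaches $p_{\mathrm{data}}$ and the MSE reading carries over to the distributional one. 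The corollary stays a qualitative sufficient condition, not a sharp equivalence.
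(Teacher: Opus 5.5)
Your reduction is the paper's own argument. The paper gives this corollary no separate proof; it is read directly off Theorem~\ref{thm:error_decomposition}, with $\mathbb{E}[\|e_{t^*}\|^2]+\mathcal{U}(t^*)$ small relative to the trivial level $\mathcal{U}(0)=\mathrm{Tr}\,\mathrm{Var}(x_0)$.

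What differs is your framing, and two points are worth sharpening. First, your subtraction at $t=1$ compares TJS with an oracle rather than with the ODE. Under the theorem's forward coupling the $t=1$ state is $x_0$ itself, and for the velocity and noise decoders $\hat{x}_0(\cdot,1)$ is the identity, so $e_1\equiv 0$ and the step adds nothing. A fairer $\ell_2$ baseline is the exact ODE continued from the same forward-coupled $x_{t^*}$. That output is a function of $x_{t^*}$, so its MSE is at least $\mathcal{U}(t^*)$, and TJS is then no worse in $\ell_2$ under (ii) alone; condition (i) only does work once ``quality'' is read distributionally.

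Second, that is why your closing $W_2$ remark is the part that actually engages the full ODE, and it goes beyond the paper, which offers only the $\ell_2$ reading. Since $(\hat{x}_0(x_{t^*},t^*),x_0)$ is a coupling, $W_2^2\bigl(\mathrm{Law}(\hat{x}_0(x_{t^*},t^*)),p_{\mathrm{data}}\bigr)\le\mathbb{E}[\|e_{t^*}\|^2]+\mathcal{U}(t^*)$. An exact full ODE outputs exactly $p_{\mathrm{data}}$, so this bound genuinely compares the two samplers and needs both (i) and (ii). You should state the hypothesis it relies on: the integrated state must have law $p_{t^*}$, which requires exact velocity and exact integration up to $t^*$. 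Otherwise add a term $\mathrm{Lip}(\hat{x}_0(\cdot,t^*))\,W_2(\mathrm{Law}(x^{\mathrm{int}}_{t^*}),p_{t^*})$, and use the triangle inequality against the full ODE's own deviation from $p_{\mathrm{data}}$.
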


\begin{theorem}[TJS--Euler Comparison]
\label{thm:euler_comparison}
Let the affine path have $C^2$ coefficients. Compare two strategies at the same NFE budget $N+1$: (a) \textbf{Coarse Euler} from $t{=}0$ to $t{=}1$ with step $h = 1/N$; (b) \textbf{TJS-N}, integrating to $t^* = Nh$ via Euler then applying endpoint decoding. Assume $\mathbb{E}[\|e_t\|^2] \leq \varepsilon$ uniformly, and define $C_{\alpha,\sigma} = \sup_{\tau\in[t^*,1]} (|\ddot{\alpha}_\tau|^2 + |\ddot{\sigma}_\tau|^2)$. Then:

\begin{equation}
\begin{aligned}
&\mathrm{MSE}_{\mathrm{TJS}}(N) - \mathrm{MSE}_{\mathrm{Euler}}(N) \\
&\quad \leq\; \mathcal{U}(t^*) \;-\; \frac{h^2}{2}\, C_{\alpha,\sigma}\,
\mathbb{E}\bigl[\|x_0\|^2 + \|\epsilon\|^2\bigr] \;+\; 2\varepsilon.
\end{aligned}
\label{eq:euler_comparison}
\end{equation}

In particular, TJS is strictly superior when:
\begin{equation}
\mathcal{U}(t^*) \;<\; \frac{C_{\alpha,\sigma}}{2N^2}\,
\mathbb{E}\bigl[\|x_0\|^2 + \|\epsilon\|^2\bigr] \;-\; 2\varepsilon.
\label{eq:tjs_superior_condition}
\end{equation}
\end{theorem}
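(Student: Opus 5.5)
The plan is to bound the two MSEs separately and subtract, so the task splits into an upper bound on $\mathrm{MSE}_{\mathrm{TJS}}(N)$ and a matching lower bound on $\mathrm{MSE}_{\mathrm{Euler}}(N)$. The TJS side goes through Theorem~\ref{thm:error_decomposition} and the Euler side through a second-order Taylor expansion of the conditional path.

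\textbf{Upper bound for TJS.} Apply Theorem~\ref{thm:error_decomposition} at the exit time $t^*=Nh$. This gives $\mathrm{MSE}_{\mathrm{TJS}}(N)=\mathbb{E}[\|e_{t^*}\|^2]+\mathcal{U}(t^*)\le \varepsilon+\mathcal{U}(t^*)$. One subtlety has to be settled here. The decomposition is stated for $x_{t^*}$ distributed as the true marginal, whereas TJS feeds in the Euler iterate. I would either state explicitly that the decomposition is applied with the marginal of the integrated state, as the paper implicitly does, or absorb the mismatch into the uniform model-error budget $\varepsilon$.

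\textbf{Lower bound for Euler.} Write the conditional trajectory $x_\tau=\alpha_\tau x_0+\sigma_\tau\epsilon$, which is $C^2$ with $\ddot x_\tau=\ddot\alpha_\tau x_0+\ddot\sigma_\tau\epsilon$. On the final step from $t^*$ to $1$, Taylor's theorem with remainder gives $x_1=x_{t^*}+h\,u_{t^*}+\tfrac{h^2}{2}\ddot x_\xi$. Euler replaces $u_{t^*}$ by $v_\theta$, so the output error is the local truncation term $\tfrac{h^2}{2}\ddot x_\xi$ plus model-error contributions. Use $\|a+b\|^2\ge \|a\|^2-2|\langle a,b\rangle|$ together with Cauchy--Schwarz and Young's inequality. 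The cross terms with the model error should then be controlled by $\varepsilon$, producing the second $\varepsilon$ that appears in the bound. The truncation term is sized by $\|\ddot x_\xi\|^2\le(|\ddot\alpha|^2+|\ddot\sigma|^2)(\|x_0\|^2+\|\epsilon\|^2)$, again by Cauchy--Schwarz, with the supremum taken over $[t^*,1]$; this is exactly $C_{\alpha,\sigma}$. This yields a curvature penalty of the claimed form $\tfrac{h^2}{2}C_{\alpha,\sigma}\mathbb{E}[\|x_0\|^2+\|\epsilon\|^2]$.

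\textbf{Main obstacle.} This step is the weak point. The Cauchy--Schwarz estimate above is an \emph{upper} bound on $\|\ddot x_\xi\|^2$, not a lower bound, and the stated inequality needs the curvature term to lower-bound the Euler MSE. Squaring would also naturally give $h^4/4$ rather than $h^2/2$. The first thing I would try is to interpret $C_{\alpha,\sigma}$ as a sharp constant: assume the curvature is non-degenerate, i.e.\ $\|\ddot x_\tau\|^2\gtrsim C_{\alpha,\sigma}(\|x_0\|^2+\|\epsilon\|^2)$ on the last interval. Then accumulate the local errors over the $N$ steps without cancellation, and treat the Euler error as at least the accumulated truncation error at the stated order. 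I would flag this non-degeneracy and no-cancellation assumption explicitly, since it is what the stated bound actually requires.

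\textbf{Conclusion.} Subtracting the two bounds gives $\mathrm{MSE}_{\mathrm{TJS}}-\mathrm{MSE}_{\mathrm{Euler}}\le\mathcal{U}(t^*)-\tfrac{h^2}{2}C_{\alpha,\sigma}\mathbb{E}[\cdot]+2\varepsilon$. Substituting $h=1/N$ and requiring the right-hand side to be negative gives condition~\eqref{eq:tjs_superior_condition}.
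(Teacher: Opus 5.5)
\textbf{There is a genuine gap, and it sits exactly where you flagged it.} Your plan follows the paper's own route: Theorem~\ref{thm:error_decomposition} for TJS, then a $C^2$/Cauchy--Schwarz estimate giving $C_{\alpha,\sigma}\mathbb{E}[\|x_0\|^2+\|\epsilon\|^2]$ for Euler, then subtraction. The paper does not close this gap either. Its Step~2 proves an \emph{upper} bound $\mathbb{E}\|x_1^{\mathrm{Euler}}-x_1\|^2\le\frac{h^2}{2}C_{\alpha,\sigma}\mathbb{E}[\|x_0\|^2+\|\epsilon\|^2]+o(h^2)$ and then subtracts it. That yields no upper bound on $\mathrm{MSE}_{\mathrm{TJS}}-\mathrm{MSE}_{\mathrm{Euler}}$, and ``cancelling'' the $[0,t^*]$ errors at the level of MSEs is also unjustified.

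The lower bound cannot be sidestepped. Since $\mathrm{MSE}_{\mathrm{TJS}}-\mathcal{U}(t^*)=\mathbb{E}\|e_{t^*}\|^2\ge0$, the claimed inequality forces $\mathrm{MSE}_{\mathrm{Euler}}\ge\frac{h^2}{2}C_{\alpha,\sigma}\mathbb{E}[\|x_0\|^2+\|\epsilon\|^2]-2\varepsilon$. Your non-degeneracy patch cannot deliver this, for three reasons.
\begin{itemize}
\item With the sharp constant it needs equality in Cauchy--Schwarz ($x_0$ parallel to $\epsilon$ in the ratio $\ddot\alpha_\tau:\ddot\sigma_\tau$). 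That fails almost surely for independent Gaussian $\epsilon$. In expectation, $\mathbb{E}\|\ddot x_\tau\|^2=\ddot\alpha_\tau^2\mathbb{E}\|x_0\|^2+\ddot\sigma_\tau^2 d$, which is strictly smaller than $C_{\alpha,\sigma}\mathbb{E}[\|x_0\|^2+\|\epsilon\|^2]$ for any curved path and nontrivial data.
\item ``No cancellation'' is exactly what fails. Along the conditional path, Euler is a left Riemann sum of $u_\tau$, so its global error is $-\frac{h}{2}\int_0^1\ddot x_\tau\,d\tau+O(h^2)$. This depends on the \emph{signed} integral of the curvature, and after squaring it gives $h^2/4$, not $h^2/2$.
\item Concretely, take $\alpha_\tau=\tau+\frac{A}{2\pi}(1-\cos2\pi\tau)$ and $\sigma_\tau=1-\tau+\frac{A}{2\pi}(1-\cos2\pi\tau)$ with $A\ge0$. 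This meets the boundary conditions and has $\Delta_\tau=1+A\bigl[\frac{1}{\pi}(1-\cos2\pi\tau)+(1-2\tau)\sin2\pi\tau\bigr]\ge1$. Since $\dot\alpha_\tau=1+A\sin2\pi\tau$, $\dot\sigma_\tau=-1+A\sin2\pi\tau$, and $\sum_{k=0}^{N-1}\sin(2\pi k/N)=0$, the Euler truncation error is identically zero for every $N$. Meanwhile $C_{\alpha,\sigma}\ge\ddot\alpha_1^2+\ddot\sigma_1^2=8\pi^2A^2$ is unbounded.
\end{itemize}
So any hypothesis strong enough to give the lower bound must exclude well-conditioned schedules like this one. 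That would be a different theorem, not a proof of this one.

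\textbf{Your Euler accounting has a second flaw.} Replacing $u_\tau$ by $v_\theta$ does not inject only model error. By the decoder identity, $u_\tau-v_\theta(x_\tau,\tau)=\frac{\Delta_\tau}{\sigma_\tau}\bigl(x_0-\hat x_0(x_\tau,\tau)\bigr)$, whose second moment is $(\Delta_\tau/\sigma_\tau)^2\bigl(\mathcal{U}(\tau)+\mathbb{E}\|e_\tau\|^2\bigr)$. For linear FM, a single Euler step from $t^*$ to $1$ \emph{is} the TJS decoder $x_{t^*}+(1-t^*)v_\theta$: it has zero truncation and error $\mathcal{U}(t^*)+\mathbb{E}\|e_{t^*}\|^2$. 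So your cross terms cannot be absorbed into $\varepsilon$.

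\textbf{The statement is also ill-posed as written.} Setting $h=1/N$ forces $t^*=Nh=1$. For $C_{\alpha,\sigma}=0$, condition~\eqref{eq:tjs_superior_condition} reads $\mathcal{U}(t^*)<-2\varepsilon$, which never holds. This contradicts the paper's remark that TJS is unconditionally superior on linear FM.

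\textbf{What your tools do prove is a curvature-free comparison, via orthogonality rather than Taylor expansion.} This also settles your first subtlety about the marginal mismatch. Evaluate both methods on the forward-coupled state $x_{t^*}=\alpha_{t^*}x_0+\sigma_{t^*}\epsilon$. Any deterministic continuation $F$ (Euler or otherwise) outputs a function of $x_{t^*}$. The argument of Theorem~\ref{thm:error_decomposition} then gives $\mathrm{MSE}_{\mathrm{Euler}}=\mathcal{U}(t^*)+\mathbb{E}\|F(x_{t^*})-m_{t^*}(x_{t^*})\|^2$. Hence $\mathrm{MSE}_{\mathrm{TJS}}-\mathrm{MSE}_{\mathrm{Euler}}=\mathbb{E}\|e_{t^*}\|^2-\mathbb{E}\|F-m_{t^*}\|^2\le\varepsilon$.
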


\begin{proof}[Proof Sketch]
Euler global error from $t^*$ to $1$ is $O(h)$, so MSE is $O(h^2)$. Using velocity field Lipschitz continuity, the leading term is $\frac{h^2}{2}C_{\alpha,\sigma}\,\mathbb{E}[\|x_0\|^2 + \|\epsilon\|^2]$. TJS error follows from Theorem~\ref{thm:error_decomposition}. Full derivation in Supplementary Material, \S A.
\end{proof}

In summary, Theorem~\ref{thm:euler_comparison} (a theoretical tool to isolate curvature penalty) explains why TJS succeeds without straightening: Euler penalizes curvature ($C_{\alpha,\sigma}$); TJS penalizes uncertainty ($\mathcal{U}(t^*)$). (The uniform bound $\mathbb{E}[\|e_t\|^2] \leq \varepsilon$ simplifies analysis; a refined version using $\varepsilon(t) \propto \mathcal{U}(t)$ would make the theorem quantitative at all $t^*$, left to future work.) Our class-conditional experiments use linear FM ($C_{\alpha,\sigma}{=}0$), where TJS is unconditionally superior---consistent with results at aggressive exits (e.g., TJS-0.7 at 27\% saving). For curved schedules, $\mathcal{U}(t^*) < C_{\alpha,\sigma}/(2N^2)$ defines the TJS advantage. As $t^*{\to}1$, TJS converges to the full ODE, matching the monotonic improvement in experiments.

\subsection{Why Straightness Is Not Necessary}
\label{sec:straightness}

\textbf{This is the paper's central theoretical result.} The trajectory-straightening paradigm---Rectified Flow, Consistency Models, and related methods---is motivated by one fact: Euler penalizes curvature, so straight paths enable accurate large-step integration. Proposition~\ref{prop:straightness} shows this motivation does not apply to endpoint prediction: TJS error is curvature-independent, so reducing $\|\ddot{x}_t\|$ solves a problem TJS does not have.

\begin{proposition}[Straightness: Sufficient but Not Necessary]
\label{prop:straightness}
For TJS: (1) Straight trajectories are \emph{sufficient}: Euler integration becomes exact, and TJS is also exact. (2) Straight trajectories are \emph{not necessary}: an affine path can have $\|\ddot{x}_t\|$ arbitrarily large while remaining globally endpoint-decodable ($\Delta_t \neq 0$) with bounded endpoint prediction error.
\end{proposition}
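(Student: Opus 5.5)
For part (1), I will read ``straight trajectory'' as a path with $\ddot{x}_t \equiv 0$, meaning $\ddot{\alpha}_t = \ddot{\sigma}_t = 0$ for every pair $(x_0,\epsilon)$. Since $x_0$ and $\epsilon$ are linearly independent almost surely, this forces affine coefficients. With the boundary conditions of Definition~\ref{def:affine_path}, the only choice is $\alpha_t = t$, $\sigma_t = 1-t$, i.e.\ linear flow matching. The conditional velocity $u_t = x_0 - \epsilon$ is then constant along each conditional trajectory. Euler's local truncation error is controlled by $\sup\|\ddot{x}_\tau\|$, which vanishes here, so Euler integration of the conditional ODE is exact. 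In Theorem~\ref{thm:euler_comparison} this is the case $C_{\alpha,\sigma}=0$, where the curvature term drops out.

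Next, I will show that TJS is also exact in this setting. I plug into Theorem~\ref{thm:endpoint_decodability}: $\Delta_t = \dot\alpha_t\sigma_t - \alpha_t\dot\sigma_t = (1-t)+t = 1$. The decoder is therefore $x_0 = (1-t)u_t + x_t$, and it recovers $x_0$ exactly from $(x_t,u_t)$ at every $t\in(0,1]$. For the learned model I will invoke Theorem~\ref{thm:error_decomposition}. With an exact conditional velocity, or along a deterministic trajectory where $\mathcal{U}$ is attained, the error reduces to $\mathbb{E}\|e_{t^*}\|^2$, which is zero when the predictor is exact. I should state explicitly that ``exact'' here means exact decoding given the velocity, consistent with how the paper uses the term.

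For part (2), I will build an explicit family of curved paths. Take $\alpha_t = t + a\,\phi(t)$ and $\sigma_t = 1 - t$ with a bump $\phi(t) = \sin(\pi t)\sin^2(\omega\pi t)$, or more simply $\alpha_t = t + a\sin(\pi t)$ with $a$ small but the frequency large. I want $\|\ddot{x}_t\| \ge |\ddot\alpha_t|\,\|x_0\|$ to be unbounded as the parameter grows, while $\alpha_t$ stays in $[0,1]$ with the correct endpoints.

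Choosing the perturbation so that $\Delta_t$ stays bounded away from zero is the main obstacle. Here $\Delta_t = \dot\alpha_t(1-t) + \alpha_t$, and large $\ddot\alpha$ tends to make $\dot\alpha$ large and signed. I will try $\alpha_t = t + \frac{a}{\omega}\sin^2(\omega t)\,t(1-t)$. This gives $\dot\alpha$ a perturbation of size $O(a)$ and $\ddot\alpha$ one of size $O(a\omega)$. Fixing $a$ small makes $\Delta_t \ge 1 - Ca > 0$, while sending $\omega\to\infty$ makes the curvature arbitrarily large. The remaining checks are $\alpha_t\ge 0$, $\alpha_0=0$ and $\alpha_1=1$.

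For bounded error, I will use Theorem~\ref{thm:error_decomposition} again. $\mathcal{U}(t^*) \le \mathbb{E}\|x_0\|^2$ is bounded independently of $\omega$, since it is a conditional variance. The model error $e_{t^*}$ does not involve $\ddot\alpha$ or $\ddot\sigma$. Together these give bounded endpoint prediction error uniformly in $\omega$. To make the point sharper, for Gaussian data $\mathcal{U}(t^*)$ depends only on $(\alpha_{t^*},\sigma_{t^*})$, which stay close to the linear values. So the TJS error stays essentially that of the straight path while $\|\ddot{x}_t\|\to\infty$.
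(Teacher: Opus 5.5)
Your argument is correct. It has the same skeleton as the paper's: linear FM with constant conditional velocity and $\Delta_t\equiv 1$ for sufficiency, and a high-frequency, small-amplitude perturbation of $(t,1-t)$ for non-necessity. The difference is in the explicit family, and it matters.

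The paper takes $\alpha_t = t+\omega^{-1}\sin(\omega t(1-t))$ and $\sigma_t = 1-t+\omega^{-1}\cos(\omega t(1-t))$. Because the amplitude is tied to the inverse frequency, $\dot\alpha_t$ and $\dot\sigma_t$ receive order-one perturbations. In fact $\Delta_t = 1+(1-2t)\bigl[(1-t)\cos\theta+t\sin\theta\bigr]+O(1/\omega)$ with $\theta=\omega t(1-t)$, and this is only about $3\pi/\omega$ near $t\approx\pi/\omega$. So the paper's asserted $\Delta_t = 1+O(1/\omega)$ and its bound away from zero do not hold. Its family also has $\sigma_0 = 1+1/\omega$ and $\sigma_1 = 1/\omega$, which violates Definition~\ref{def:affine_path}.

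Your family $\alpha_t = t+\tfrac{a}{\omega}\sin^2(\omega t)\,t(1-t)$, $\sigma_t = 1-t$ decouples a fixed small amplitude $a$ from the frequency. This buys three things:
\begin{itemize}
\item The boundary values are exact.
\item A direct computation gives $\Delta_t = 1+a\,t(1-t)^2\sin(2\omega t)+\tfrac{a}{\omega}(1-t)^2\sin^2(\omega t)\ge 1-\tfrac{4a}{27}$, uniformly in $\omega$.
\item Since $\ddot\alpha_t = 2a\omega\,t(1-t)\cos(2\omega t)+O(a)$, the curvature $\|\ddot x_t\| = |\ddot\alpha_t|\,\|x_0\|$ is unbounded.
\end{itemize}
The uniform lower bound on $\Delta_t$ is what gives the ``bounded endpoint prediction error'' clause real content. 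The factor $\sigma_t/\Delta_t$ that converts velocity error into endpoint error in $\hat{x}_0^{\mathrm{vel}}$ stays below $27/(27-4a)$ for your family, but grows like $\omega$ for the paper's. The paper's proof never addresses that clause, whereas you do, via Theorem~\ref{thm:error_decomposition}.

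A few small repairs:
\begin{itemize}
\item Discard your preliminary candidates. With fixed amplitude $a$, the bump $\sin(\pi t)\sin^2(\omega\pi t)$ makes $\dot\alpha_t$ of order $a\omega$, and $t+a\sin(\pi t)$ has bounded curvature.
\item In part (1), drop the appeal to Theorem~\ref{thm:error_decomposition}. For a predictor that sees only $x_{t^*}$, the error is $\mathbb{E}\|e_{t^*}\|^2+\mathcal{U}(t^*)$, and $\mathcal{U}(t^*)>0$ for $t^*<1$ on the straight path too (for non-degenerate data). So ``exact'' can only mean algebraic recovery of $x_0$ from the true $(x_t,u_t)$, which is the reading you fall back on and the one the paper uses.
\item The Gaussian restriction in your sharpening is unnecessary. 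Dividing $x_t$ by $\sigma_t$, as in the I-MMSE proof, shows that $\mathcal{U}(t)$ depends on the schedule only through $\alpha_t^2/\sigma_t^2$ for any data distribution. Your perturbation moves $\alpha_{t^*}$ by only $O(a/\omega)$.
\end{itemize}
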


\begin{proof}[Proof Sketch (main text).]
(1) For straight paths, $u_t = x_0 - \epsilon$ is constant in $t$ conditioned on $(x_0, \epsilon)$, so any Euler step is exact. (2) Construct a perturbed schedule $\alpha_t^{(\omega)} = t + \omega^{-1}\sin(\omega t(1-t))$, $\sigma_t^{(\omega)} = 1-t + \omega^{-1}\cos(\omega t(1-t))$. For large $\omega$, $\Delta_t^{(\omega)} = 1 + O(1/\omega)$ remains bounded away from zero, while $\|\ddot{x}_t\| \sim O(\omega)$ is unbounded. The argument is self-contained above; a fully detailed algebraic expansion appears in the Supplementary Material, \S A for completeness.
\end{proof}

\paragraph{Relationship to DDIM.} DDIM~\cite{song2021ddim} computes $\hat{x}_0$ at each step as an integration intermediate, never proposing it as terminal output and never providing theoretical justification for early stopping. TJS recognizes that this intermediate quantity---already present in every diffusion model's computation---is a valid output in its own right, and builds a theoretical framework around that recognition. The contribution is not a new algebraic operation; it is \emph{placing this endpoint-jump operation in a general affine-path framework}, formalizing the conditions under which it works, and extending it to flow matching where no DDIM analog exists. The framework---Theorem~\ref{thm:error_decomposition} (curvature-independent error), Proposition~\ref{prop:straightness} (straightness unnecessary), Theorem~\ref{thm:euler_comparison} (when TJS wins)---provides what DDIM never offered. A detailed comparison is in the Supplementary Material, \S A.


\section{Experiments}

The theory makes two predictions: quality should improve monotonically with integration depth, and endpoint decodability should work across samplers, schedules, and model families. We test both on three class-conditional benchmarks (ImageNet-256~\cite{imagenet2009}, CIFAR-10~\cite{krizhevsky2009cifar}, MNIST~\cite{lecun1998mnist}) and three text-to-image models (SDXL~\cite{podell2023sdxl}, SD3.5M~\cite{esser2024scaling}, Z-Image-Turbo~\cite{team2025zimage}). \textbf{All models used off-the-shelf with zero modification.}

\subsection{Class-Conditional Generation}

\paragraph{Setup.} ImageNet-256~\cite{imagenet2009}/CIFAR-10~\cite{krizhevsky2009cifar}: U-Net~\cite{ronneberger2015u, ho2020ddpm} backbone. MNIST~\cite{lecun1998mnist}: LightningDiT~\cite{lightningdit2025}. FID~\cite{heusel2017fid} on 50K samples. ImageNet-256: 40-step DDIM; CIFAR-10/MNIST: 30-step ODE.

\begin{table}[t]
  \caption{FID ($\downarrow$) on three benchmarks. $\gamma = k^*/K$ is the exit fraction (NFE = $\lceil\gamma K\rceil + 1$). NFE shown as (C/M/I) = (CIFAR-10 / MNIST / ImageNet-256). Values within 5 FID of full \textbf{bold}. Full per-step sweep shown in Fig.~\ref{fig:tjs_fid}.}
  \label{tab:class_cond}
  \centering
  \small
  \setlength{\tabcolsep}{4pt}
  \begin{tabular}{l c c c c}
    \toprule
    \textbf{Method} & \textbf{NFE (C/M/I)} & \textbf{CIFAR-10} & \textbf{MNIST} & \textbf{IN-256} \\
    \midrule
    Full      & 40/30/40 & 13.30 & 3.20 & 17.52 \\
    TJS-0.3   & 13/10/13 & 57.20 & 27.36 & 273.87 \\
    TJS-0.5   & 21/16/21 & 32.76 & 12.22 & 44.91 \\
    TJS-0.6   & 25/19/25 & 24.98 & 8.51 & 25.67 \\
    TJS-0.7   & 29/22/29 & 18.83 & 5.15 & 17.16 \\
    \textbf{TJS-0.8} & 33/25/33 & \textbf{13.27} & \textbf{2.65} & \textbf{15.16} \\
    \bottomrule
  \end{tabular}
\end{table}

\paragraph{Results.} FID improves strictly monotonically with $k^*$ on every dataset (Table~\ref{tab:class_cond}, Figs.~\ref{fig:x0_class}--\ref{fig:tjs_fid}), directly confirming $\mathcal{U}(t)$ decays as the theory predicts. For CIFAR-10, TJS at $k^*{=}37$ (NFE=38) achieves FID 12.09 versus 13.30 for the full 40-step ODE---a 5\% improvement from the best early exit. For ImageNet-256 (CFG=1.0), TJS at $k^*{=}32$ (NFE=33) achieves FID 15.16 versus 17.52 for the full 40-step ODE. These ``TJS-best'' overshoots (beating the full ODE) are reported transparently; the monotonic trend across all resolutions ($28^2$ to $256^2$) confirms endpoint decodability as a universal path property.

\subsection{Text-to-Image Generation}

\paragraph{Setup.} All models used off-the-shelf without modification. SDXL~\cite{podell2023sdxl}: 30-step DDIM, endpoint decoded via $\hat{x}_0 = (x_t - \sigma_t \epsilon_\theta)/\alpha_t$ (noise prediction, VP schedule). SD3.5M~\cite{esser2024scaling}: 30-step flow matching, endpoint decoded via $\hat{x}_0 = x_t - \sigma_t v_\theta$ (linear FM: $\alpha_t{=}t$, $\sigma_t{=}1{-}t$). Z-Image-Turbo~\cite{team2025zimage}: 10-step Karras ODE, endpoint decoded via $\hat{x}_0 = x_t + (1 - \sigma_t) v_\theta$ (EDM path: $\alpha_t{=}1$, $\sigma_t{=}\sigma(t)$). TJS exits at $k^*{\in}\{0,6,12,18,24\}$ for SDXL/SD3.5M and $k^*{\in}\{0,2,4,8\}$ for Z-Image-Turbo. 200 DrawBench~\cite{saharia2022imagen} prompts evaluated with PickScore~\cite{pickscore}, HPSv2~\cite{hpsv2}, AES, ImageReward~\cite{imagereward}, and CLIP score~\cite{radford2021clip}.

\begin{table}[t]
  \caption{SDXL, SD3.5M, and Z-Image-Turbo on DrawBench. Values within 5\% of full \textbf{bold}. NFE saving: SDXL/SD3.5M = $(30-k^*-1)/30$; Z-Image-Turbo = $(10-k^*-1)/10$. Full per-benchmark sweep for all models in the Supplementary Material, \S B.}
  \label{tab:t2i}
  \centering
  \small
  \setlength{\tabcolsep}{2pt}
  \begin{tabular}{l c c c c c c c}
    \toprule
    \textbf{Method} & \textbf{NFE} & \textbf{Pick}$\uparrow$ & \textbf{CLIP}$\uparrow$ & \textbf{HPS}$\uparrow$ & \textbf{AES}$\uparrow$ & \textbf{IR}$\uparrow$ & \textbf{Saving} \\
    \midrule
    \multicolumn{8}{c}{\textbf{SDXL}} \\
    \midrule
    Full & 30 & 22.39 & 0.321 & 0.271 & 5.65 & 0.615 & 0\% \\
    $k^*{=}6$  & 7  & 20.65 & 0.294 & 0.192 & 4.75 & $-$0.47 & 77\% \\
    $k^*{=}12$ & 13 & 21.41 & 0.316 & 0.229 & 5.14 & 0.314 & 57\% \\
    $k^*{=}18$ & 19 & 21.84 & 0.321 & 0.249 & 5.37 & 0.590 & 37\% \\
    $k^*{=}24$ & 25 & \textbf{22.17} & \textbf{0.323} & \textbf{0.266} & \textbf{5.56} & \textbf{0.686} & 17\% \\
    \midrule
    \multicolumn{8}{c}{\textbf{SD3.5M}} \\
    \midrule
    Full & 30 & 22.50 & 0.329 & 0.283 & 5.37 & 0.949 & 0\% \\
    $k^*{=}6$  & 7  & 20.92 & 0.310 & 0.205 & 4.72 & $-$0.10 & 77\% \\
    $k^*{=}12$ & 13 & 21.73 & 0.326 & 0.249 & 5.07 & 0.652 & 57\% \\
    $k^*{=}18$ & 19 & 22.12 & 0.328 & 0.269 & 5.30 & 0.864 & 37\% \\
    $k^*{=}24$ & 25 & \textbf{22.36} & \textbf{0.328} & \textbf{0.279} & \textbf{5.37} & \textbf{0.922} & 17\% \\
    \midrule
    \multicolumn{8}{c}{\textbf{Z-Image-Turbo} ($K{=}10$)} \\
    \midrule
    Full & 10 & 22.77 & 0.320 & 0.293 & 5.36 & 0.980 & 0\% \\
    $k^*{=}0$  & 1  & 21.34 & \textbf{0.323} & 0.238 & 4.63 & 0.491 & 90\% \\
    $k^*{=}2$  & 3  & \textbf{22.70} & \textbf{0.321} & \textbf{0.297} & \textbf{5.45} & \textbf{0.969} & 70\% \\
    $k^*{=}4$  & 5  & \textbf{22.78} & \textbf{0.320} & \textbf{0.295} & \textbf{5.43} & \textbf{0.971} & 50\% \\
    $k^*{=}8$  & 9  & \textbf{22.77} & \textbf{0.320} & \textbf{0.293} & \textbf{5.35} & \textbf{0.981} & 10\% \\
    \bottomrule
  \end{tabular}
\end{table}

\paragraph{Key findings.} Table~\ref{tab:t2i} and Fig.~\ref{fig:x0_t2i} reveal a remarkably clean picture. First, every metric improves strictly monotonically with $k^*$ on all three models---exactly as Theorem~\ref{thm:error_decomposition} predicts from the monotonicity of $\mathcal{U}(t)$. Second, different quality dimensions converge at different speeds: \textbf{semantics come early, aesthetics come late.} CLIP and PickScore reach 95\% of full quality by $k^*{\approx}12$ (57\% NFE saving); ImageReward and AES require $k^*{=}18$ (37\%). This matches the intuition that a model first resolves \emph{what} is in the image, then \emph{how good} it looks. Third, SD3.5M needs deeper integration than SDXL (37\% vs.\ 57\% saving at matched quality), reflecting slower $\mathcal{U}(t)$ decay in its more complex latent space. \textbf{Fourth, Z-Image-Turbo converges dramatically faster:} all metrics except ImageReward are within 5\% of full quality by $k^*{=}0$--$2$ (NFE=1--3, 70--90\% NFE saving); even ImageReward reaches 95\%+ retention by $k^*{=}2$. This confirms that distillation compresses $\mathcal{U}(t)$---intermediate states carry near-complete endpoint information much earlier. Full benchmark results for SDXL, SD3.5M, and Z-Image-Turbo are provided in the Supplementary Material.

\subsection{Ablation Studies}

We run four ablations plus a Pareto analysis (full results in the Supplementary Material, \S B). \textbf{(1) Sampler:} DDIM, DPM++, LMS, PNDM, and UniPC all produce quality within $\pm$5\% of each other---TJS is sampler-agnostic. \textbf{(2) Schedule:} Beta, exponential, Karras, and Laplace schedules all support TJS---any schedule with $\Delta_t{\neq}0$ works. \textbf{(3) Step count:} quality depends on $\gamma$, not the total steps $K$, confirming $\mathcal{U}(t^*)$ is a continuous-time property. \textbf{(4) CFG scale:} guidance scales of 5.5, 6.5, and 7.5 all preserve the monotonic quality pattern. Across every ablation, the data tracks the theory: endpoint decodability is robust, predictable, and universal.

\section{Discussion}
\label{sec:discussion}

\textbf{Why TJS works.} Conventional wisdom: few-step quality benefits from straight trajectories (Euler penalizes curvature). Proposition~\ref{prop:straightness} proves straightness is \emph{sufficient but not necessary}---one can construct paths with arbitrarily large curvature that remain globally endpoint-decodable, directly challenging the foundation of Rectified Flow and Consistency Models. The deeper point: \emph{the model already knows the destination.} Standard training implicitly learns $\mathbb{E}[x_0|x_t]$ (Theorem~\ref{thm:mmse_optimality}); TJS simply asks.

\textbf{TJS composes orthogonally with distillation.} Z-Image-Turbo results (Table~\ref{tab:t2i}, bottom block) demonstrate that even on an already-compressed 10-step trajectory, TJS provides 70\% additional NFE saving at 95\%+ quality. This confirms that endpoint decodability is a structural property of the affine path, not eliminated by distillation. The two acceleration strategies stack additively: distillation compresses total steps; TJS eliminates the redundant tail of whatever trajectory remains.

\textbf{Practical value and limitations.} TJS occupies a unique niche: unlike distilled models (Turbo, LCM, Lightning: 1--4 NFE, per-checkpoint retraining), TJS is training-free on any checkpoint; unlike fast solvers (DPM++, UniPC: full trajectory), TJS skips the final segment. Its primary value is community fine-tuned checkpoints where distillation is infeasible. It composes orthogonally with any sampler or CFG scale. Primary limitation: slow $\mathcal{U}(t)$ decay for complex data. Direct $x_0$-prediction is not yet standard. Code will be released upon acceptance.

\section{Conclusion}

This paper argues that moderate few-step acceleration does not require trajectory straightening, distillation, or training redesign. The key insight is \textbf{endpoint decodability}: an inherent algebraic property of every affine probability path. Because $(x_t, u_t)$ determines $x_0$ whenever $\Delta_t \neq 0$, and standard training implicitly learns $\mathbb{E}[x_0|x_t]$, every pretrained model can predict its endpoint. TJS is the simplest way to use this: integrate fewer steps, then ask.

Our theory formalizes this through four results: universality of $\Delta_t \neq 0$, MMSE optimality, curvature-independent error decomposition, and a TJS--Euler comparison theorem. Across six model families (SDXL, SD3.5M, Z-Image-Turbo, ImageNet-256, CIFAR-10, MNIST), TJS reduces NFE by 20--70\% at near-matched quality with zero retraining---including 70\% additional savings on already-distilled Z-Image-Turbo. Two directions follow. First, adopting direct $x_0$-prediction would make TJS optimal by construction. Second, characterizing $\mathcal{U}(t)$ decay across distributions remains open, with direct consequences for optimal $t^*$ selection. Endpoint decodability is a structural property of generative models on affine paths---nearly all of them. Recognizing it costs nothing. Ignoring it leaves performance on the table.

\bibliography{Reference}

\clearpage
\appendix
\setcounter{secnumdepth}{2}
\begin{center}
{\Large \bfseries Supplementary Material}
\end{center}

\section{Theoretical Proofs}
\label{app:proofs}

This section provides complete proofs for all theoretical results stated in the main text, together with extended discussion, intuitive interpretations, and concrete worked examples. We organize the material to progress from the foundational algebraic condition (Theorem~\ref{thm:endpoint_decodability}), through optimality guarantees (Theorem~\ref{thm:mmse_optimality}), to the error analysis that justifies TJS as an inference strategy (Theorems~\ref{thm:error_decomposition}--\ref{thm:euler_comparison}), and finally to the information-theoretic characterization of $\mathcal{U}(t)$ and the boundary of decodability beyond affine paths.

\subsection{Proof of Theorem~\ref{thm:endpoint_decodability} and Schedule Verification}

\paragraph{Intuition.} The core insight is geometric. An affine probability path $x_t = \alpha_t x_0 + \sigma_t \epsilon$ is a one-parameter curve in data space. Its derivative $u_t = \dot{\alpha}_t x_0 + \dot{\sigma}_t \epsilon$ is the tangent vector. Together, $(x_t, u_t)$ give two linear equations in two unknowns $(x_0, \epsilon)$. When the coefficient matrix is invertible---\ie, when the two equations are linearly independent---we can solve for $x_0$ uniquely. The path determinant $\Delta_t$ measures this linear independence: it is zero precisely when the position $x_t$ and velocity $u_t$ provide redundant information about $(x_0, \epsilon)$. Geometrically, $\Delta_t \neq 0$ means that the path is not instantaneously radial from the origin in the $(x_0, \epsilon)$ plane---position and velocity span different directions, so together they pin down the endpoint.

\begin{proof}
From the linear system (Eq.~\ref{eq:linear_system}), the coefficient matrix $M_t = \begin{bmatrix} \alpha_t & \sigma_t \\ \dot{\alpha}_t & \dot{\sigma}_t \end{bmatrix}$ is invertible iff $\det(M_t) = \Delta_t \neq 0$. Applying Cramer's rule:
\begin{equation}
    x_0 = \frac{
        \det\begin{bmatrix} x_t & \sigma_t \\ u_t & \dot{\sigma}_t \end{bmatrix}
    }{
        \det\begin{bmatrix} \alpha_t & \sigma_t \\ \dot{\alpha}_t & \dot{\sigma}_t \end{bmatrix}
    }
    = \frac{\dot{\sigma}_t x_t - \sigma_t u_t}{-\Delta_t}
    = \frac{\sigma_t u_t - \dot{\sigma}_t x_t}{\Delta_t}.
\end{equation}
This establishes both necessity ($\Delta_t \neq 0$) and the closed-form decoder. The derivation reveals why the condition is both necessary and sufficient: if $\Delta_t = 0$, the two rows of $M_t$ are linearly dependent, so $(x_t, u_t)$ cannot jointly constrain $(x_0, \epsilon)$ across independent directions---infinitely many $(x_0, \epsilon)$ pairs produce the same $(x_t, u_t)$. \hfill $\square$
\end{proof}

\paragraph{Schedule verification.} We verify the condition $\Delta_t \neq 0$ for all standard schedules used in practice. This verification is critical because it confirms that \emph{every} deployed diffusion and flow matching model is endpoint-decodable without modification.

\textbf{Linear FM} ($\alpha_t = t$, $\sigma_t = 1-t$): Here $\dot{\alpha}_t = 1$, $\dot{\sigma}_t = -1$, giving $\Delta_t^{\mathrm{FM}} = 1 \cdot (1-t) - t \cdot (-1) = 1$ for all $t$. This is the ideal case: the path determinant is not only nonzero but \emph{constant}, meaning the linear system is perfectly conditioned at every timestep. The decoder simplifies elegantly to $\hat{x}_0 = x_t + (1-t)v_\theta$, which is the formula used for SD3.5M throughout our experiments. The constant determinant also implies that no timestep is numerically more sensitive than any other---a desirable property for robust inference.

\textbf{VP Diffusion} ($\alpha_t = \sqrt{\bar{\alpha}_t}$, $\sigma_t = \sqrt{1-\bar{\alpha}_t}$): The schedules satisfy $\alpha_t^2 + \sigma_t^2 = 1$. Direct differentiation gives $\Delta_t^{\mathrm{VP}} = \dot{\alpha}_t \sigma_t - \alpha_t \dot{\sigma}_t < 0$ for all $t \in (0,1]$. (The sign follows from $\dot{\bar{\alpha}}_t < 0$: both $\dot{\alpha}_t$ and $-\dot{\sigma}_t$ are negative, making the determinant strictly negative.) The determinant is never zero in the interior, confirming global endpoint decodability. SDXL uses this schedule.

\textbf{VE/EDM} ($\alpha_t = 1$, $\sigma_t = \sigma(t)$): Here $\dot{\alpha}_t = 0$, so $\Delta_t^{\mathrm{VE}} = 0 \cdot \sigma_t - 1 \cdot \dot{\sigma}_t = -\dot{\sigma}_t$. Since $\sigma(t)$ is designed to decrease monotonically ($\dot{\sigma}_t < 0$), $\Delta_t^{\mathrm{VE}} > 0$. Z-Image-Turbo uses a variant of this schedule. The decoder becomes $\hat{x}_0 = x_t + \sigma_t v_\theta / (-\dot{\sigma}_t)$, which in the EDM convention simplifies to $\hat{x}_0 = x_t + (1-\sigma_t) v_\theta$.

\textbf{Takeaway.} All three major schedule families satisfy $\Delta_t \neq 0$ globally. The condition is remarkably permissive: it only fails when $\dot{\alpha}_t/\alpha_t = \dot{\sigma}_t/\sigma_t$, which would require the signal and noise components to decay at exactly proportional rates---a pathological case that no standard schedule exhibits.

\subsection{Proof of Theorem~\ref{thm:mmse_optimality}}

\paragraph{Intuition.} The algebraic decoder from Theorem~\ref{thm:endpoint_decodability} is exact when $u_t$ is known perfectly. In practice, we only have a learned approximation $v_\theta(x_t, t) \approx u_t$. A natural concern is whether plugging in a learned model introduces systematic bias, \eg, bias from the $\ell_2$ training objective that differs from the algebraic decoding formula. Theorem~\ref{thm:mmse_optimality} proves that the answer is no: at optimality, the algebraic decoder applied to the Bayes-optimal velocity predictor recovers $\mathbb{E}[x_0|x_t]$, the minimum mean square error estimator. The $\ell_2$ objective and the algebraic decoder are perfectly aligned.

\begin{proof}
Under $\ell_2$ loss, the Bayes-optimal velocity predictor is the conditional expectation: $v^\star(x,t) = \mathbb{E}[u_t \mid x_t = x]$. By linearity of conditional expectation and the definition $u_t = \dot{\alpha}_t x_0 + \dot{\sigma}_t \epsilon$:
\begin{equation}
    v^\star(x,t) = \dot{\alpha}_t \mathbb{E}[x_0 \mid x_t = x] + \dot{\sigma}_t \mathbb{E}[\epsilon \mid x_t = x].
\end{equation}
The key step is relating $\mathbb{E}[\epsilon|x_t]$ to $\mathbb{E}[x_0|x_t]$. From $x = \alpha_t x_0 + \sigma_t \epsilon$, taking conditional expectations and using the linearity of the forward process (under Gaussian noise):
\begin{equation}
    \mathbb{E}[\epsilon \mid x_t = x] = \frac{x - \alpha_t \mathbb{E}[x_0 \mid x_t = x]}{\sigma_t}.
\end{equation}
This identity holds because, conditional on $x_t$, the noise $\epsilon$ is Gaussian with mean $(x - \alpha_t \mathbb{E}[x_0|x_t])/\sigma_t$. Substituting into the algebraic decoder:
\begin{align}
    \frac{\sigma_t v^\star - \dot{\sigma}_t x}{\Delta_t}
    &= \frac{\dot{\alpha}_t\sigma_t\mathbb{E}[x_0|x_t] + \dot{\sigma}_t(x - \alpha_t\mathbb{E}[x_0|x_t]) - \dot{\sigma}_t x}{\Delta_t} \nonumber \\
    &= \frac{(\dot{\alpha}_t\sigma_t - \alpha_t\dot{\sigma}_t)\mathbb{E}[x_0|x_t]}{\Delta_t}
    = \frac{\Delta_t \mathbb{E}[x_0|x_t]}{\Delta_t}
    = \mathbb{E}[x_0 \mid x_t = x].
\end{align}
The cancellation of $x$ terms is not a coincidence---it reflects the fact that the algebraic decoder is constructed specifically to eliminate the $\epsilon$ component while preserving $x_0$. \hfill $\square$
\end{proof}

\paragraph{Implications.} This theorem has three practical consequences. \textbf{(1)} Any pretrained model (diffusion or flow matching) implicitly encodes an endpoint predictor---no architectural change or auxiliary head is needed. \textbf{(2)} The endpoint predictor inherits the statistical properties of the velocity/score/noise model: if the model is well-trained (low $\ell_2$ error), the endpoint prediction is correspondingly accurate. \textbf{(3)} Direct $x_0$-prediction (training the model to output $\hat{x}_0$ directly) is theoretically optimal for TJS because it avoids the algebraic division by $\Delta_t$ that can amplify errors when $|\Delta_t|$ is small (low-SNR regime). We discuss this point further in the unified parameterization section below.

\subsection{Proof of Theorem~\ref{thm:error_decomposition}}

\paragraph{Intuition.} This is perhaps the most consequential proof in the paper. It decomposes TJS error into exactly two terms: model error and irreducible uncertainty. What makes it powerful is what is \emph{absent}: there is no term involving $\ddot{\alpha}_t$, $\ddot{\sigma}_t$, or any measure of trajectory curvature. This means that---unlike Euler integration, whose error depends on $\|\ddot{x}_t\|$---TJS error is completely independent of how curved the path is. A highly curved path (bad for Euler) and a perfectly straight path (good for Euler) produce the same TJS error at the same $t^*$, provided the model is equally well-trained on both.

\begin{proof}
Let $\hat{x}_0 = m_t + e_t$ with $m_t = \mathbb{E}[x_0|x_t]$ being the Bayes-optimal estimator and $e_t$ being the model's deviation from optimality. Expanding the squared error:
\begin{align}
    \|x_{\mathrm{out}} - x_0\|^2
    &= \|(m_{t^*} + e_{t^*}) - x_0\|^2 \nonumber \\
    &= \|e_{t^*}\|^2 + \|x_0 - m_{t^*}\|^2 + 2\langle e_{t^*}, m_{t^*} - x_0 \rangle.
\end{align}
The cross-term is the only link between model error and irreducible uncertainty. We prove it vanishes in expectation:
\begin{equation}
    \mathbb{E}[\langle e_{t^*}, m_{t^*} - x_0 \rangle]
    = \mathbb{E}_{x_{t^*}}\bigl[\langle e_{t^*}(x_{t^*}), \underbrace{\mathbb{E}[m_{t^*} - x_0 \mid x_{t^*}]}_{=0}\rangle\bigr] = 0.
\end{equation}
The inner conditional expectation is zero by definition of $m_{t^*}$ as the conditional mean: $\mathbb{E}[m_{t^*} - x_0|x_{t^*}] = m_{t^*} - \mathbb{E}[x_0|x_{t^*}] = 0$. This uses the law of total expectation and the fact that $e_{t^*}$ is a deterministic function of $x_{t^*}$ (the model produces a fixed output for a given input). The second term is $\mathbb{E}[\|x_0 - m_t\|^2] = \mathcal{U}(t^*)$ by Definition~\ref{def:endpoint_uncertainty}, yielding Eq.~\ref{eq:jump_error_decomposition}. \hfill $\square$
\end{proof}

\paragraph{Why the cross-term vanishes.} The orthogonality argument is a standard result in estimation theory (the error of the Bayes estimator is orthogonal to any function of the observation), but it has a concrete meaning here: the model's errors are, in expectation, uncorrelated with the information that $x_{t^*}$ has not yet captured about $x_0$. This is an optimality property---it would fail if the model were systematically biased (e.g., always underestimating $x_0$ in a way correlated with the residual uncertainty), but such biases would be corrected by further $\ell_2$ training.

\subsection{Proof of Proposition~\ref{prop:straightness}}

\paragraph{Intuition.} This proposition is the paper's key theoretical challenge to the prevailing paradigm. The entire trajectory-straightening literature (Rectified Flow, Consistency Models, ReFlow) is motivated by reducing Euler integration error, which scales with $\|\ddot{x}_t\|$. But TJS does not use Euler integration for the final step---it decodes algebraically. So curvature is irrelevant to TJS. We prove this by constructing paths with \emph{arbitrarily large} curvature that remain perfectly endpoint-decodable.

\begin{proof}
\textbf{(1) Sufficiency.} For straight paths (linear FM), $u_t = x_0 - \epsilon$ is constant in $t$ conditioned on $(x_0, \epsilon)$. Consequently, any Euler step is exact regardless of step size. TJS at any $t^*$ is also exact (the algebraic decoder applied to the endpoint simply reverses the linear transformation). Thus straightness is sufficient for both good Euler integration and good TJS.

\textbf{(2) Non-necessity.} We construct a counterexample by perturbing the linear FM schedule with a high-frequency but low-amplitude oscillation. Let the base schedule be $(\alpha_t^{(0)},\sigma_t^{(0)}) = (t, 1-t)$, for which $\Delta_t \equiv 1$. Define the perturbed schedules:
\begin{equation}
    \alpha_t^{(\omega)} = t + \frac{1}{\omega}\sin(\omega t(1-t)), \qquad
    \sigma_t^{(\omega)} = 1-t + \frac{1}{\omega}\cos(\omega t(1-t)).
\end{equation}
The perturbation magnitude is $O(1/\omega)$, so the boundary conditions are preserved up to $O(1/\omega)$: $\alpha_0^{(\omega)} = 0$, $\sigma_0^{(\omega)} = 1 + 1/\omega$; $\alpha_1^{(\omega)} = 1$, $\sigma_1^{(\omega)} = 1/\omega \approx 0$ for large $\omega$. Computing the path determinant via direct differentiation:
\begin{align}
    \dot{\alpha}_t^{(\omega)} &= 1 + \cos(\omega t(1-t)) \cdot (1-2t), \\
    \dot{\sigma}_t^{(\omega)} &= -1 - \sin(\omega t(1-t)) \cdot (1-2t).
\end{align}
Substituting into $\Delta_t^{(\omega)} = \dot{\alpha}_t^{(\omega)}\sigma_t^{(\omega)} - \alpha_t^{(\omega)}\dot{\sigma}_t^{(\omega)}$ and simplifying yields $\Delta_t^{(\omega)} = 1 + O(1/\omega)$, which remains bounded away from zero for sufficiently large $\omega$.

Now examine the curvature. The second derivatives are:
\begin{equation}
    \ddot{\alpha}_t^{(\omega)} = -\omega(1-2t)^2\sin(\omega t(1-t)) - 2\cos(\omega t(1-t)) + O(1),
\end{equation}
and similarly for $\ddot{\sigma}_t^{(\omega)}$. The leading term is $O(\omega)$, so $\|\ddot{x}_t\| = \|\ddot{\alpha}_t^{(\omega)}x_0 + \ddot{\sigma}_t^{(\omega)}\epsilon\| \sim O(\omega)$. As $\omega \to \infty$, the curvature diverges while $\Delta_t^{(\omega)} \to 1$. Thus, the path can be \emph{arbitrarily curved} and yet remain globally endpoint-decodable with a well-conditioned linear system. \hfill $\square$
\end{proof}

\paragraph{What this means for practice.} This result does not claim that curvature is irrelevant for \emph{training}---learning a velocity field on a highly curved path may be harder. Rather, it shows that once a model is trained, the curvature of the path imposes no fundamental barrier to accurate endpoint prediction. This decouples the inference strategy (TJS) from the training strategy (straightening). Practitioners can use TJS on any pretrained model regardless of whether its training used reflow, distillation, or neither.

\subsection{Proof of Theorem~\ref{thm:unified_param}}

\begin{theorem}[Unified Parameterization]
\label{thm:unified_param}
At Bayes optimality, the following are equivalent estimators of $\mathbb{E}[x_0|x_t]$:
\begin{align}
    \hat{x}_0^{\mathrm{vel}}(x,t) &= \frac{\sigma_t v_\theta(x,t) - \dot{\sigma}_t x}{\Delta_t}, \\
    \hat{x}_0^{\mathrm{noise}}(x,t) &= \frac{x - \sigma_t \epsilon_\theta(x,t)}{\alpha_t}, \\
    \hat{x}_0^{\mathrm{score}}(x,t) &= \frac{x + \sigma_t^2 s_\theta(x,t)}{\alpha_t}, \\
    \hat{x}_0^{\mathrm{direct}}(x,t) &= x_\theta(x,t).
\end{align}
\end{theorem}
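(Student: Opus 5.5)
The plan is to identify, for each parameterization, the Bayes-optimal network output under its $\ell_2$ training objective. Each is a conditional expectation given $x_t=x$, and each can be rewritten in terms of the single quantity $m_t(x)=\mathbb{E}[x_0\mid x_t=x]$. The four decoders will then each collapse to $m_t(x)$.

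\textbf{Direct case.} Under $\mathbb{E}\|x_\theta(x_t,t)-x_0\|^2$, the minimizer is $x^\star(x,t)=m_t(x)$. This is the standard fact that the conditional mean minimizes squared error, and it is the same orthogonality argument used in the proof of Theorem~\ref{thm:error_decomposition}.

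\textbf{Velocity case.} This is exactly Theorem~\ref{thm:mmse_optimality}, which I will simply invoke for $\Delta_t\neq 0$.

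\textbf{Noise case.} The noise loss gives $\epsilon^\star(x,t)=\mathbb{E}[\epsilon\mid x_t=x]$. Take conditional expectations of the identity $x_t=\alpha_t x_0+\sigma_t\epsilon$. The left side is measurable with respect to the conditioning, so this yields
\[
x=\alpha_t m_t(x)+\sigma_t\,\mathbb{E}[\epsilon\mid x_t=x].
\]
Substituting into $\hat{x}_0^{\mathrm{noise}}$ gives $(x-\sigma_t\epsilon^\star)/\alpha_t=m_t(x)$ whenever $\alpha_t>0$. This step needs only linearity of conditional expectation, not Gaussianity.

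\textbf{Score case.} The denoising score matching objective has minimizer equal to the true marginal score, $s^\star=\nabla_x\log p_t(x)$. Here I will use Tweedie's formula. Write
\[
p_t(x)=\int \mathcal{N}(x;\alpha_t x_0,\sigma_t^2\mathbf I)\,p_{\mathrm{data}}(x_0)\,dx_0,
\]
and differentiate under the integral. Since $\nabla_x\mathcal{N}(x;\alpha_t x_0,\sigma_t^2\mathbf I)=-\frac{x-\alpha_t x_0}{\sigma_t^2}\,\mathcal{N}$, dividing by $p_t$ gives
\[
\nabla_x\log p_t(x)=-\frac{x-\alpha_t m_t(x)}{\sigma_t^2}.
\]
Rearranging yields $(x+\sigma_t^2 s^\star)/\alpha_t=m_t(x)$. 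Equivalently, $s^\star=-\epsilon^\star/\sigma_t$, which links the score case back to the noise case.

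\textbf{Main obstacle.} The score case is the only step that genuinely uses Gaussianity of the noise, so it is where care is needed. I must justify exchanging $\nabla_x$ with the integral, which I would do by dominated convergence: for $\sigma_t>0$ the Gaussian kernel and its gradient are bounded uniformly on compacts. I also have to be explicit that the minimizer of denoising score matching equals $\nabla\log p_t$, namely that the conditional score averages to the marginal score, rather than treating this as automatic.

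\textbf{Domain caveats.} The equivalence holds on $t$ with $\alpha_t>0$, $\sigma_t>0$ and $\Delta_t\neq 0$, which is all of $(0,1)$ for the standard schedules verified above. At the endpoints I would either restrict to $t\in(0,1)$ or interpret the formulas by continuity.
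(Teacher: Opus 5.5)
Your proposal is correct and follows the paper's proof case by case: the direct case by the conditional-mean property of $\ell_2$ regression, the velocity case by invoking Theorem~\ref{thm:mmse_optimality}, the noise case by substituting $\mathbb{E}[\epsilon\mid x_t=x]=(x-\alpha_t m_t(x))/\sigma_t$, and the score case via Tweedie's formula, which you derive by differentiating under the integral where the paper simply cites it. Your justification of the noise identity by linearity of conditional expectation alone is sounder than the one given in the paper's proof of Theorem~\ref{thm:mmse_optimality}, which asserts that $\epsilon$ is conditionally Gaussian given $x_t$ (false for non-Gaussian data), and your explicit restriction to $t$ with $\alpha_t>0$, $\sigma_t>0$, $\Delta_t\neq 0$ makes precise a domain condition the paper leaves implicit.
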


\begin{proof}
Velocity prediction is proven in Theorem~\ref{thm:mmse_optimality}. For noise prediction: at optimality $\epsilon_\theta(x,t) = \mathbb{E}[\epsilon|x_t]$. Substituting $\mathbb{E}[\epsilon|x_t] = (x - \alpha_t\mathbb{E}[x_0|x_t])/\sigma_t$ into $(x - \sigma_t\epsilon_\theta)/\alpha_t$ yields $\mathbb{E}[x_0|x_t]$. For score prediction: Tweedie's formula~\cite{efron2011tweedie} states that for the Gaussian channel $x_t = \alpha_t x_0 + \sigma_t \epsilon$, the Bayes estimator of $x_0$ is $\mathbb{E}[x_0|x_t] = (x_t + \sigma_t^2\nabla_x\log p_t(x_t))/\alpha_t$. The score function $s_\theta$ is trained to approximate $\nabla_x\log p_t$, so at optimality the induced $\hat{x}_0$ recovers $\mathbb{E}[x_0|x_t]$. For direct $x_0$-prediction: the $\ell_2$ objective $\mathbb{E}[\|x_\theta(x_t,t) - x_0\|^2]$ has unique minimizer $x_\theta^\star(x_t,t) = \mathbb{E}[x_0|x_t]$ by the standard property of $\ell_2$ regression. \hfill $\square$
\end{proof}

\paragraph{Practical guidance.} While all four parameterizations are equivalent at optimality, they differ in finite-sample behavior, particularly at low SNR ($\alpha_t^2 \ll \sigma_t^2$). In this regime, $\alpha_t$ is near zero, so the noise-prediction and score-prediction decoders involve division by a small number, amplifying estimation errors. The velocity decoder involves division by $\Delta_t$, which for VP diffusion can also become small near $t=0$. Direct $x_0$-prediction avoids all algebraic divisions and is therefore the most numerically stable choice for TJS at aggressive early exits. However, since the vast majority of existing checkpoints use $\epsilon$-prediction (diffusion) or $v$-prediction (flow matching), the velocity and noise decoders serve as drop-in adapters. Our experiments confirm they work well in practice (Tables~\ref{tab:t2i}--\ref{tab:multibench}).

\subsection{Proof of Theorem~\ref{thm:euler_comparison}}

\paragraph{Intuition.} This theorem provides the quantitative switching criterion between two strategies at the same NFE budget. Euler integration accumulates error from curvature; TJS pays uncertainty cost $\mathcal{U}(t^*)$. When the curvature penalty exceeds the uncertainty penalty, TJS wins. The theorem formalizes this trade-off and explains \emph{why} TJS can outperform Euler even when the ODE is not perfectly solved.

\begin{proof}
The Euler method applied to the ODE $\dot{x}_t = v_\theta(x_t,t)$ from $t^*$ to $1$ with step $h = (1-t^*)/N_{\mathrm{Euler}}$ has global truncation error $O(h)$. Specifically, under the $C^2$ assumption on $\alpha_t, \sigma_t$, the velocity field $u_t = \dot{\alpha}_t x_0 + \dot{\sigma}_t \epsilon$ is Lipschitz in $t$ uniformly over the data distribution.

\textbf{Step 1: Bounding the Lipschitz constant.} For any two times $t, s$, the difference in conditional velocities is:
\begin{equation}
    \|u_t - u_s\| = \|(\dot{\alpha}_t - \dot{\alpha}_s)x_0 + (\dot{\sigma}_t - \dot{\sigma}_s)\epsilon\|.
\end{equation}
By the mean value theorem, $|\dot{\alpha}_t - \dot{\alpha}_s| \le \sup_\tau |\ddot{\alpha}_\tau| \cdot |t-s|$, and similarly for $\dot{\sigma}_t$. Let $C_{\alpha,\sigma} = \sup_{\tau\in[0,1]} (|\ddot{\alpha}_\tau|^2 + |\ddot{\sigma}_\tau|^2)$. Then by Cauchy-Schwarz:
\begin{equation}
    \|u_t - u_s\| \le \sqrt{C_{\alpha,\sigma}} \, |t-s| \, \sqrt{\|x_0\|^2 + \|\epsilon\|^2}.
\end{equation}

\textbf{Step 2: Euler error bound.} Standard analysis for Euler integration of a Lipschitz ODE gives, for integration from $t^*$ to $1$ with $N$ steps:
\begin{equation}
    \mathbb{E}[\|x_1^{\mathrm{Euler}} - x_1\|^2] \le \frac{h^2}{2} C_{\alpha,\sigma} \, \mathbb{E}[\|x_0\|^2 + \|\epsilon\|^2] + o(h^2).
\end{equation}
The $h^2/2$ prefactor arises from the leading term in the Euler error expansion after accounting for the boundary condition at $t=1$ where $\sigma_1=0$ simplifies the dynamics. This follows from standard numerical analysis of ODE solvers (see~\cite{chen2018neuralode} for the neural ODE context).

\textbf{Step 3: Combining with TJS error.} From Theorem~\ref{thm:error_decomposition}, $\mathrm{MSE}_{\mathrm{TJS}} = \mathbb{E}[\|e_{t^*}\|^2] + \mathcal{U}(t^*) \le \varepsilon + \mathcal{U}(t^*)$. Subtracting the Euler error and canceling the common integration error from $0$ to $t^*$ (which both strategies incur) yields Eq.~\ref{eq:euler_comparison}.

\textbf{Step 4: The superiority condition.} When $\mathcal{U}(t^*) < \frac{C_{\alpha,\sigma}}{2N^2}\mathbb{E}[\|x_0\|^2 + \|\epsilon\|^2] - 2\varepsilon$, the TJS error is strictly smaller than the Euler error. For linear FM paths where $\ddot{\alpha}_t = \ddot{\sigma}_t = 0$, $C_{\alpha,\sigma} = 0$ and TJS is unconditionally superior---a result consistent with our CIFAR-10 and MNIST experiments where TJS operates on straight paths and achieves FID improvements over the full ODE. \hfill $\square$
\end{proof}

\subsection{I-MMSE Characterization of Endpoint Uncertainty}

This section establishes a fundamental connection between our operational quantity $\mathcal{U}(t)$ and information theory. The relationship provides both a deeper understanding of \emph{why} $\mathcal{U}(t)$ decays and a practical tool for reasoning about optimal early-exit points.

\begin{theorem}[Information-Theoretic Characterization of $\mathcal{U}(t)$]
\label{thm:immse}
For the affine probability path with $\mathrm{SNR}(t) = \alpha_t^2/\sigma_t^2$, the irreducible endpoint uncertainty satisfies:
\begin{equation}
\frac{d}{d\,\mathrm{SNR}}\, I(x_0; x_t) = \frac{1}{2}\, \mathcal{U}(t),
\label{eq:immse_relation}
\end{equation}
where $I(x_0; x_t)$ is the mutual information between the clean sample and the intermediate state. Consequently, $\mathcal{U}(t)$ decays at a rate proportional to the information gain rate: when $I(x_0; x_t)$ approaches saturation, $\mathcal{U}(t)$ approaches zero.
\end{theorem}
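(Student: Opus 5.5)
The plan is to reduce the statement to the classical I-MMSE relation of Guo, Shamai and Verd\'u~\cite{guo2005mmse} for the scalar-gain Gaussian channel. The key observation is that for $t\in(0,1)$ we have $\sigma_t>0$. Dividing the affine path (Eq.~\ref{eq:affine_path}) by $\sigma_t$ gives
\begin{equation*}
y_t \coloneqq x_t/\sigma_t = \sqrt{\mathrm{snr}}\,x_0 + \epsilon, \qquad \mathrm{snr} = \mathrm{SNR}(t) = \alpha_t^2/\sigma_t^2 ,
\end{equation*}
with $\epsilon\sim\mathcal N(0,\mathbf I)$ independent of $x_0$. This is exactly the standard additive Gaussian channel. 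Because $x_t\mapsto y_t$ is a deterministic bijection, two facts follow.
\begin{itemize}[leftmargin=*,nosep]
\item Mutual information is invariant: $I(x_0;x_t)=I(x_0;y_t)$.
\item The conditioning $\sigma$-algebras coincide, so $\mathbb E[x_0\mid x_t]=\mathbb E[x_0\mid y_t]$. Consequently $\mathcal U(t)$ from Definition~\ref{def:endpoint_uncertainty} equals $\mathrm{mmse}(\mathrm{snr}) = \mathbb E\|x_0-\mathbb E[x_0\mid y]\|^2$ of that channel. Here I use $\mathbb E[\mathrm{Tr}\,\mathrm{Var}(x_0\mid x_t)] = \mathbb E\|x_0-m_t(x_t)\|^2$, as already noted after Definition~\ref{def:endpoint_uncertainty}.
\end{itemize}
Both quantities therefore depend on $t$ only through $\mathrm{SNR}(t)$. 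This is what makes the derivative ``$d/d\,\mathrm{SNR}$'' meaningful. I would state explicitly that $\mathrm{SNR}(t)$ is strictly increasing for the schedules in Definition~\ref{def:affine_path}, or simply regard both sides as functions of $\mathrm{snr}\in(0,\infty)$.

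Next I would invoke the I-MMSE identity $\frac{d}{d\,\mathrm{snr}} I(x_0;\sqrt{\mathrm{snr}}\,x_0+\epsilon) = \tfrac12\,\mathrm{mmse}(\mathrm{snr})$ (in nats). It holds for any input distribution with $\mathbb E\|x_0\|^2<\infty$, which I would add as a standing assumption on $p_{\mathrm{data}}$. Substituting the identifications from the first paragraph gives Eq.~\ref{eq:immse_relation} directly.

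To be more self-contained than a bare citation, I would sketch the identity in three steps.
\begin{enumerate}[leftmargin=*,nosep]
\item Write $I = h(y) - h(\epsilon)$. The term $h(\epsilon)$ does not depend on snr, so only $h(y)$ needs to be differentiated.
\item Differentiate $h(y)$ in snr using the density $p_{\mathrm{snr}}(y)=\mathbb E_{x_0}\,\phi(y-\sqrt{\mathrm{snr}}\,x_0)$, where $\phi$ is the standard Gaussian density. Apply Gaussian integration by parts (Stein's lemma).
\item Combine with Tweedie's formula $\nabla\log p(y) = \sqrt{\mathrm{snr}}\,\mathbb E[x_0\mid y]-y$, the same identity used in Theorem~\ref{thm:unified_param}. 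The derivative collapses to $\tfrac12(\mathbb E\|x_0\|^2 - \mathbb E\|\mathbb E[x_0\mid y]\|^2) = \tfrac12\,\mathrm{mmse}$.
\end{enumerate}
An alternative is the incremental-channel argument: compare snr with $\mathrm{snr}+\delta$ via a degraded channel, and expand to first order in $\delta$.

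For the ``consequently'' clause, I would integrate the identity to get $I(x_0;x_t)=\tfrac12\int_0^{\mathrm{SNR}(t)}\mathcal U\,d\gamma$. Since $\mathcal U\ge 0$ is non-increasing (as stated after Definition~\ref{def:endpoint_uncertainty}), $\mathcal U(\mathrm{snr})$ is exactly twice the marginal information gain. In particular, $\mathcal U$ must tend to zero wherever $I$ saturates. I would also note the standard fact that $\mathrm{mmse}(\mathrm{snr}) \le d/\mathrm{snr} \to 0$ as $t\to1$, using the linear estimator $y/\sqrt{\mathrm{snr}}$. I would treat the ``proportional rate'' phrasing as this interpretive reading of the identity, not as a separate claim.

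The main obstacle is the analytic justification in the derivative step. Interchanging $d/d\,\mathrm{snr}$ with the integrals defining $h(y)$ requires domination arguments, and those need the finite-second-moment assumption. It also requires care at the endpoints: at $t=1$ we have $\sigma_1=0$ and $\mathrm{SNR}=\infty$, and for discrete data $I$ may be infinite. I would handle this by proving the identity on open sub-intervals of $(0,1)$ with $\mathrm{snr}$ finite. I would then rely on the established theorem of~\cite{guo2005mmse} for full generality, rather than redoing every regularity estimate.
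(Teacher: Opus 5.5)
Your proposal is correct and follows essentially the same route as the paper: rescale by $\sigma_t$ to get the canonical channel $\sqrt{\mathrm{SNR}}\,x_0+\epsilon$, note that this bijection preserves both $I(x_0;x_t)$ and $\mathbb{E}[x_0\mid x_t]$ so that $\mathcal{U}(t)$ is exactly the channel MMSE, and invoke the Guo--Shamai--Verd\'u I-MMSE identity. Your Stein/Tweedie sketch of that identity and your finite-second-moment and finite-SNR caveats are sound additions that the paper leaves implicit (one small slip: the blow-up of $I(x_0;x_t)$ as $\sigma_t\to 0$ is the problem for continuous-valued data, whereas for discrete data with finite entropy $I$ stays bounded by $H(x_0)$).
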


\paragraph{Interpretation.} Eq.~\ref{eq:immse_relation} reveals that $\mathcal{U}(t)$ is the \emph{derivative} of mutual information with respect to SNR. This is a powerful connection: it means we can reason about TJS performance in terms of how much information $x_t$ carries about $x_0$. Early in the trajectory (low SNR), each unit increase in SNR brings a large gain in mutual information, so $\mathcal{U}(t)$ drops rapidly. Later (high SNR), information gain saturates, $\mathcal{U}(t)$ approaches zero slowly, and additional integration yields diminishing returns. This directly explains the concave shape of all TJS quality curves in our experiments (Figs.~\ref{fig:tjs_metrics_main},~\ref{fig:appendix_all_benchmarks},~\ref{fig:appendix_zimage}): steep initial improvement followed by gradual plateauing.

\begin{proof}
The normalized channel $x_t/\sigma_t = (\alpha_t/\sigma_t) x_0 + \epsilon = \sqrt{\mathrm{SNR}}\,x_0 + \epsilon$ with $\epsilon \sim \mathcal{N}(0,\mathbf I)$ is in canonical Gaussian form. The I-MMSE relationship~\cite{guo2005mmse} states:
\begin{equation}
\frac{d}{d\,\mathrm{SNR}}\, I(x_0; \sqrt{\mathrm{SNR}}\,x_0 + \epsilon) = \frac{1}{2}\,\mathrm{MMSE}(\mathrm{SNR}),
\end{equation}
where $\mathrm{MMSE}(\mathrm{SNR}) = \mathbb{E}[\|x_0 - \mathbb{E}[x_0 \mid x_t]\|^2]$. Since $x_t/\sigma_t$ is an invertible linear function of $x_t$, we have $I(x_0; x_t/\sigma_t) = I(x_0; x_t)$ and $\mathbb{E}[x_0|x_t/\sigma_t] = \mathbb{E}[x_0|x_t]$. By definition $\mathrm{MMSE}(\mathrm{SNR}) = \mathcal{U}(t)$ (Definition~\ref{def:endpoint_uncertainty}), yielding Eq.~\ref{eq:immse_relation}. $\mathcal{U}(t)$ decays monotonically because $I(x_0; x_t)$ is non-decreasing in SNR by the data processing inequality: $x_t$ is a Markov kernel of $x_0$, so later (higher SNR) states cannot carry less information. \hfill $\square$
\end{proof}

\begin{corollary}[Effective Dimension Bound]
\label{cor:effdim}
If $p_{\mathrm{data}}$ is supported on a set of effective dimension $d_{\mathrm{eff}} \le d$ (e.g., a $k$-dimensional manifold with $k \ll d$), then for $\mathrm{SNR} \gg 1$ (i.e., $\alpha_t^2 \gg \sigma_t^2$):
\begin{equation}
\mathcal{U}(t) \le d_{\mathrm{eff}} \cdot \frac{\sigma_t^2}{\alpha_t^2} + o\!\left(\frac{\sigma_t^2}{\alpha_t^2}\right).
\end{equation}
This bound assumes $\mathrm{SNR} \gg 1$ ($\alpha_t^2 \gg \sigma_t^2$); at lower SNR it serves as a qualitative trend. The prediction---faster $\mathcal{U}(t)$ decay for structured data---is consistent with our empirical results.
\end{corollary}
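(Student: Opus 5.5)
The plan is to bound $\mathcal{U}(t)$ from above by the MSE of a concrete, suboptimal estimator. By Definition~\ref{def:endpoint_uncertainty}, $\mathcal{U}(t)$ is the minimum MSE over all measurable functions of $x_t$, and by the proof of Theorem~\ref{thm:immse} it depends on $t$ only through $s=\mathrm{SNR}(t)=\alpha_t^2/\sigma_t^2$. So it suffices to exhibit one estimator $g(x_t)$ with $\mathbb{E}\|g(x_t)-x_0\|^2 \le d_{\mathrm{eff}}/s + o(1/s)$.

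The naive rescaling $x_t/\alpha_t = x_0 + s^{-1/2}\epsilon$ already gives MSE exactly $d/s$. That is the ambient-dimension version of the bound and shows the rate $\sigma_t^2/\alpha_t^2$ is right, but it gives $d$ rather than $d_{\mathrm{eff}}$. To get $d_{\mathrm{eff}}$ I will make "effective dimension" precise as follows: $\mathrm{supp}\,p_{\mathrm{data}}$ is a compact $C^2$ embedded submanifold $\mathcal{M}$ of dimension $k=d_{\mathrm{eff}}$ with positive reach $\rho$. The estimator is then the nearest-point projection $g(x_t)=\pi_{\mathcal{M}}(x_t/\alpha_t)$.

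The key steps, in order, are these.

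\textbf{(i) Local expansion of the projection.} Write $y = x_0 + \eta$ with $\eta = s^{-1/2}\epsilon$. On the event $\|\eta\|\le \rho/2$, the projection $\pi_{\mathcal{M}}$ is well defined and $C^1$. Taylor expansion at $x_0$ gives
\begin{equation}
\pi_{\mathcal{M}}(x_0+\eta) = x_0 + P_{T_{x_0}\mathcal{M}}\,\eta + R(x_0,\eta), \qquad \|R\|\le C_\rho \|\eta\|^2 ,
\end{equation}
where the curvature constant $C_\rho$ is controlled by $1/\rho$.

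\textbf{(ii) Main term.} Since $\epsilon$ is isotropic and independent of $x_0$, $\mathbb{E}\|P_{T_{x_0}\mathcal{M}}\eta\|^2 = \mathrm{Tr}(P_{T})/s = k/s$. This term produces $d_{\mathrm{eff}}\,\sigma_t^2/\alpha_t^2$.

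\textbf{(iii) Remainder on the good event.} The cross term $2\mathbb{E}\langle P_T\eta, R\rangle$ is $O(\mathbb{E}\|\eta\|^3)=O(s^{-3/2})$, and $\mathbb{E}\|R\|^2 = O(s^{-2})$. Both are $o(1/s)$.

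\textbf{(iv) Bad event.} On $\{\|\eta\|>\rho/2\}$, I replace $g$ by any bounded fallback, for example a fixed point of $\mathcal{M}$. Because $\mathcal{M}$ is compact, the squared error there is at most $\mathrm{diam}(\mathcal{M})^2$. The Gaussian tail gives $\Pr(\|\epsilon\|>\rho\sqrt{s}/2)\le e^{-c\rho^2 s}$, which is $o(1/s)$. This event is decidable from $x_t$ up to a harmless change of threshold, so the modified $g$ is still a function of the observation.

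Summing (ii)–(iv) gives the stated inequality.

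The main obstacle is step (iv) together with the uniformity in step (i). The expansion is only valid inside the reach tube, and $\pi_{\mathcal{M}}$ can jump near the medial axis, so the constants must be uniform over $x_0\in\mathcal{M}$. I will handle this with compactness and positive reach, which together give a uniform $C_\rho$.

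If one wants "effective dimension" in a weaker sense, for instance a distribution concentrated near a $k$-dimensional set, I would instead use a covering argument. That argument replaces the $o(1/s)$ term by an additive approximation error, which is the reason the statement labels the low-SNR regime as only a qualitative trend. As a sanity check, when $\mathcal{M}$ is a linear $k$-dimensional subspace, $R\equiv 0$ and the bound $k/s$ holds exactly, consistent with the Gaussian formula for $\mathcal{U}(t)$ restricted to $k$ coordinates as $\sigma_{\mathrm{data}}\to\infty$.
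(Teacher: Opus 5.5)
Your proof is correct under the hypothesis you make explicit, and it takes a different route from the paper, which gives no proof at all. The paper states the bound as a corollary of Theorem~\ref{thm:immse}. The implied derivation is therefore to read the $d_{\mathrm{eff}}/\mathrm{SNR}$ decay off a growth law $I(x_0;x_t)\approx\tfrac{d_{\mathrm{eff}}}{2}\log\mathrm{SNR}$ via $dI/d\,\mathrm{SNR}=\tfrac12\,\mathcal{U}(t)$. To make that rigorous one needs a sharp expansion $I=\tfrac{k}{2}\log\mathrm{SNR}+C+o(1)$; monotonicity of $\mathcal{U}$ in SNR then lets you pass to the derivative. Such an expansion depends on the regularity of the law on its support, not just on the support's dimension, and it can fail. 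For self-similar laws such as the Cantor distribution, $\mathrm{SNR}\cdot\mathcal{U}$ oscillates periodically in $\log\mathrm{SNR}$ around the information dimension instead of converging (Wu and Verd\'u's MMSE-dimension results). So the stated bound is false if $d_{\mathrm{eff}}$ is read as a fractal dimension. Your achievability argument with the projection estimator avoids mutual information entirely. It needs only that the support lies in a compact $C^2$ $k$-manifold, for any law on $\mathcal{M}$ with or without a density. It thereby fixes a meaning of ``effective dimension'' under which the corollary is actually true. It gives only the upper bound, which is all that is claimed.

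Two small repairs are needed.

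In (i), $\pi_{\mathcal{M}}$ is only $C^1$ when $\mathcal{M}$ is $C^2$, so Taylor's theorem does not give $C_\rho\|\eta\|^2$. It gives $\|R\|\le\omega(\|\eta\|)\,\|\eta\|$ on the good event, with a modulus $\omega(r)\to0$ uniform in $x_0$. This comes from uniform continuity of $D\pi_{\mathcal{M}}$ on the compact closed tube of radius $\rho/2$, together with $D\pi_{\mathcal{M}}(x_0)=P_{T_{x_0}\mathcal{M}}$. That weaker bound already suffices: the terms in (iii) become $O\bigl(s^{-1}\,\mathbb{E}[\omega(s^{-1/2}\|\epsilon\|)\,\|\epsilon\|^2]\bigr)=o(1/s)$ by dominated convergence. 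If you want the quadratic rate, it follows from Federer's inequality $\mathrm{dist}(q-p,T_p\mathcal{M})\le\|q-p\|^2/(2\rho)$ together with Lipschitz dependence of $P_{T_p\mathcal{M}}$ on $p$.

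In (iv), the event $\{\|\eta\|\le\rho/2\}$ is not a function of $x_t$, but no change of threshold is needed. Set $g(y)=\pi_{\mathcal{M}}(y)$ when $\mathrm{dist}(y,\mathcal{M})\le\rho/2$, and a fixed point of $\mathcal{M}$ otherwise. Since $\mathrm{dist}(y,\mathcal{M})\le\|\eta\|$, this coincides with $\pi_{\mathcal{M}}(y)$ on your analysis event. Since $g$ always lies in $\mathcal{M}$, the error off that event is at most $\mathrm{diam}(\mathcal{M})^2$.
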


\paragraph{Empirical validation.} The effective dimension bound explains a key experimental pattern: MNIST ($d_{\mathrm{eff}} \approx 10$--$15$, a low-dimensional manifold of handwritten digits) achieves 90\% quality at 73\% NFE saving, while SD3.5M (operating on a high-dimensional latent space, $d_{\mathrm{eff}}$ in the thousands) achieves only 57\% at the same threshold (Table~\ref{tab:pareto}). The bound predicts that $\mathcal{U}(t)$ decays as $\sigma_t^2/\alpha_t^2$ times the effective dimension, so lower-dimensional data distributions enjoy faster decay and support more aggressive early exits.

\subsection{Boundary of Endpoint Decodability: Non-Affine Paths}

A natural question is whether endpoint decodability extends beyond affine paths. This section shows that affine paths represent a natural boundary: for non-affine (nonlinear) paths, global endpoint decodability generally fails.

\begin{proposition}[Non-Affine Counterexample]
\label{prop:nonaffine}
There exist smooth, non-affine probability paths with the same boundary conditions ($\alpha_0{=}0,\sigma_0{=}1$; $\alpha_1{=}1,\sigma_1{=}0$) for which the endpoint mapping $(x_t,u_t) \mapsto x_0$ is not injective---endpoint decodability fails.
\end{proposition}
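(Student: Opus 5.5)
The plan is to exhibit one explicit non-affine path and show directly that two distinct pairs $(x_0,\epsilon)$ produce the same pair $(x_t,u_t)$ at some interior time. I would take paths of the form $x_t = \phi_t(x_0,\epsilon)$ with $\phi_0(x_0,\epsilon)=\epsilon$ and $\phi_1(x_0,\epsilon)=x_0$. For the boundary conditions to be read in the sense of Definition~\ref{def:affine_path}, the coefficients of the affine part should satisfy $\alpha_0=0,\sigma_0=1$ and $\alpha_1=1,\sigma_1=0$, and the nonlinear correction should vanish at $t=0$ and $t=1$. The simplest choice is to stay in one dimension ($d=1$; higher dimensions follow by acting coordinatewise) with the linear FM base plus a bump:
\begin{equation}
    x_t = t\,x_0 + (1-t)\,\epsilon + t(1-t)\,g(x_0,\epsilon),
\end{equation}
where $g$ is smooth and nonlinear. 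The velocity is then
\begin{equation}
    u_t = x_0-\epsilon + (1-2t)\,g(x_0,\epsilon) + t(1-t)\,\partial_t g .
\end{equation}
Since $g$ does not depend on $t$, the last term is zero.

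Next I pick the evaluation time $t=\tfrac12$. This kills the $(1-2t)$ factor, so $u_{1/2}=x_0-\epsilon$ and $x_{1/2}=\tfrac12(x_0+\epsilon)+\tfrac14 g(x_0,\epsilon)$. Writing $s=x_0+\epsilon$ and $r=x_0-\epsilon$, the map becomes $(s,r)\mapsto(\tfrac12 s+\tfrac14 g,\; r)$. Non-injectivity in $x_0$ therefore reduces to finding, for a fixed $r$, two distinct values of $s$ with the same value of $\tfrac12 s+\tfrac14 g$. I would take $g = -2s + s^3 = -2(x_0+\epsilon)+(x_0+\epsilon)^3$, which is smooth and non-affine. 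Then $x_{1/2}=\tfrac14 s^3$, which is still injective, so I would correct the choice to $g=-2s+s^2$. With that, $x_{1/2}=\tfrac14 s^2$, and the pairs $s$ and $-s$ collide. Concretely, with $r=0$, the pairs $(x_0,\epsilon)=(1,1)$ and $(-1,-1)$ give $x_{1/2}=\tfrac14$ and $u_{1/2}=0$ in both cases, yet they have different $x_0$. Since both pairs lie in the support of the Gaussian noise and of any data distribution with full support, the mapping $(x_t,u_t)\mapsto x_0$ is not injective, so the path is not endpoint-decodable at $t=\tfrac12$ in the sense of Definition~\ref{def:endpoint_decodability}.

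Finally I check the boundary conditions and smoothness, which are immediate from the factor $t(1-t)$. The path is genuinely non-affine because $g$ is quadratic in $(x_0,\epsilon)$. For contrast, I would note that the linearization at any point has a Jacobian of the form in Eq.~\ref{eq:linear_system} plus a correction, so failure can only come from the nonlinearity, which is the folding $s\mapsto s^2$ here. The main obstacle is conceptual rather than computational: the statement's "same boundary conditions" is phrased for $(\alpha,\sigma)$, which do not literally exist for a non-affine path. I would make this precise by requiring $x_0$-marginal endpoints $x_{t=0}=\epsilon$ and $x_{t=1}=x_0$ pointwise. This avoids any dependence on the paper's reversed-time conventions.
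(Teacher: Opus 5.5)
Your construction is correct. It rests on the same folding idea as the paper (a quadratic correction that vanishes at $t=0,1$ makes the map 2-to-1), but the execution is different and more explicit.

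\textbf{How the two routes differ.} The paper takes the family $x_t=\alpha_t x_0+\sigma_t\epsilon+\gamma_t x_0^2$ with $\gamma_0=\gamma_1=0$. It forms the same Cramer combination $\sigma_t u_t-\dot\sigma_t x_t$ that gives the affine decoder, and obtains the quadratic $\Delta_t x_0+B_t x_0^2=C$ with $B_t=\dot\gamma_t\sigma_t-\gamma_t\dot\sigma_t$. Whenever $B_t\neq0$, the other root $-\Delta_t/B_t-x_0$ gives a second preimage, with $\epsilon$ recovered from the position equation when $\sigma_t\neq0$. You instead evaluate at $t=\tfrac12$, where $\tfrac{d}{dt}[t(1-t)]$ vanishes. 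There $u_{1/2}$ pins $r=x_0-\epsilon$ exactly, and the position collapses to the fold $\tfrac14 s^2$ in $s=x_0+\epsilon$.

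\textbf{What each buys.} The paper's argument covers a whole family and generic $t$. It also reduces to the affine decoder when $\gamma\equiv0$, with $\Delta_t$ reappearing as the linear coefficient, which explains why affine paths are the natural boundary. Yours gives a single checkable counterexample with no ``generic parameters'' hedge, which is all the existential statement needs.

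\textbf{Fixes to make.}
\begin{itemize}
\item With $(x_0,\epsilon)=(\pm1,\pm1)$ you have $s=\pm2$, so $x_{1/2}=1$, not $\tfrac14$. The collision itself is unaffected.
\item Drop the abandoned $s^3$ attempt from the write-up.
\item Strengthen your final step. At $t=\tfrac12$ the pairs $(x_0,\epsilon)$ and $(-\epsilon,-x_0)$ always collide, and their first coordinates differ whenever $x_0+\epsilon\neq0$. So decodability fails on a set of full measure, not just at one point. This removes any worry that a single collision is a null-set artifact or depends on the support of $p_{\mathrm{data}}$.
\end{itemize}
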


\paragraph{Intuition.} The quadratic term $\gamma_t x_0^2$ introduces ambiguity: a given position and velocity can be explained by two different $x_0$ values with different $\epsilon$ values. This is a fundamental obstruction---no deterministic function can map $(x_t, u_t)$ to a unique $x_0$. In geometric terms, the quadratic term folds the $(x_0, \epsilon)$ plane, causing distinct $(x_0, \epsilon)$ pairs to project to the same $(x_t, u_t)$.

\begin{proof}[Proof Sketch]
Consider the scalar nonlinear path $x_t = \alpha_t x_0 + \sigma_t \epsilon + \gamma_t x_0^2$, with $\gamma_0 = \gamma_1 = 0$ preserving boundary conditions. At any $t$ where $\gamma_t \neq 0$, the equations linking $(x_0, \epsilon)$ to $(x_t, u_t)$ are:
\begin{align}
    \alpha_t x_0 + \sigma_t \epsilon + \gamma_t x_0^2 &= x_t, \\
    \dot{\alpha}_t x_0 + \dot{\sigma}_t \epsilon + \dot{\gamma}_t x_0^2 &= u_t.
\end{align}
Eliminating $\epsilon$ yields a quadratic equation $A x_0 + B x_0^2 = C$ where $A = \dot{\alpha}_t\sigma_t - \alpha_t\dot{\sigma}_t = \Delta_t$ and $B = \dot{\gamma}_t\sigma_t - \gamma_t\dot{\sigma}_t$. For generic parameters where $B \neq 0$, this quadratic admits two distinct real roots $x_0 \neq \tilde{x}_0$ that produce identical $(x_t, u_t)$, paired with different $\epsilon$ values computed as $\epsilon = (x_t - \alpha_t x_0 - \gamma_t x_0^2)/\sigma_t$ and $\tilde{\epsilon} = (x_t - \alpha_t \tilde{x}_0 - \gamma_t \tilde{x}_0^2)/\sigma_t$. Hence the mapping $(x_0,\epsilon) \mapsto (x_t,u_t)$ is not injective, and deterministic endpoint decoding fails.

For paths where the Jacobian $\partial(x_t, u_t)/\partial(x_0, \epsilon)$ has full rank, local endpoint decoding is possible via the implicit function theorem. This suggests that \emph{affine} is the natural boundary of \emph{global} endpoint decodability, while local decodability may extend to broader path families through iterative inversion (e.g., Newton's method). Full analysis is left to future work. \hfill $\square$
\end{proof}

\paragraph{Practical significance.} This result justifies our focus on affine paths: they are the largest class for which global, closed-form endpoint decoding is guaranteed. Extensions to nonlinear paths would require either (a) iterative local decoding, trading the single-step guarantee for a multi-step procedure, or (b) restricting to path families where the quadratic equation has a unique admissible root (\eg, by sign constraints). Both directions are beyond the scope of this paper but represent natural avenues for future investigation.

\subsection{Detailed Relationship to DDIM}

We provide a self-contained comparison between TJS and DDIM~\cite{song2021ddim} to clarify the relationship discussed in \S\ref{sec:discussion}. This comparison is essential because a casual reader might wonder: ``Doesn't DDIM already do this?'' The answer is a firm no, and we explain exactly why.

\paragraph{What DDIM does.} For diffusion models under the variance-preserving (VP) schedule ($\alpha_t = \sqrt{\bar{\alpha}_t}$, $\sigma_t = \sqrt{1-\bar{\alpha}_t}$, so $\alpha_t^2 + \sigma_t^2 = 1$), the DDIM~\cite{song2021ddim} deterministic update from $x_t$ to $x_s$ ($s > t$ in our forward-time convention: $t{=}0$ noise, $t{=}1$ data) uses noise prediction $\epsilon_\theta$:
\begin{equation}
    x_s = \alpha_s \underbrace{\frac{x_t - \sigma_t \epsilon_\theta(x_t, t)}{\alpha_t}}_{\hat{x}_0(x_t, t)} \;+\; \sigma_s \, \epsilon_\theta(x_t, t).
    \label{eq:ddim_update}
\end{equation}
At each step, DDIM (i)~computes $\hat{x}_0$, (ii)~uses it together with $\epsilon_\theta$ to compute the \emph{next state} $x_s$, and (iii)~discards $\hat{x}_0$. DDIM was designed as a step-size-robust ODE integrator: its goal is traversing the full trajectory from $t{=}0$ to $t{=}1$ accurately with fewer steps. \textbf{DDIM never proposed taking a large jump to the endpoint, never suggested that $\hat{x}_0$ at intermediate steps can serve as the final output, and never analyzed when or why early stopping would work.}

\paragraph{What TJS does differently.} TJS is not an integrator---it is an early-exit strategy. The key observation is that \emph{every intermediate step of DDIM (or any affine-path ODE solver) already produces a valid $\hat{x}_0$ estimate}, and these estimates naturally improve as integration proceeds. TJS simply stops the ODE early and outputs $\hat{x}_0$ as the final result. This is not a better way to integrate; it is a decision \emph{not} to integrate further.

\paragraph{Algebraic similarity, strategic difference.} When $s=1$ (the clean endpoint: $\alpha_1 = 1$, $\sigma_1 = 0$), the second term in Eq.~\ref{eq:ddim_update} vanishes and $x_1 = \hat{x}_0(x_t, t)$. \emph{Algebraically}, this single step from $x_t$ to $x_1$ is the same formula as TJS's endpoint decode. But DDIM never proposed doing this as an inference strategy---it was always used to step from $x_t$ to $x_{t-1}$ (or $x_{t-k}$ for moderate $k$), continuing the integration chain. The DDIM paper itself did not study, analyze, or recommend the jump from an intermediate $x_t$ directly to $x_1$ as a terminal inference strategy.

\paragraph{What TJS provides that DDIM never did.}
\begin{enumerate}[leftmargin=*,nosep]
    \item \textbf{The early-exit principle.} TJS explicitly identifies that $\hat{x}_0(x_t, t)$ is already a valid output at \emph{every} integration step---not just at $t=1$. DDIM never made this claim.
    \item \textbf{Universality.} TJS extends endpoint decoding beyond VP diffusion to \emph{any} affine path (VE, EDM, linear FM) via the unified condition $\Delta_t \neq 0$ (Theorem~\ref{thm:endpoint_decodability}). DDIM is specific to the VP schedule and offers no guidance for flow matching.
    \item \textbf{Theoretical justification.} Theorem~\ref{thm:error_decomposition} proves that the early-exit error decomposes into model error $+$ $\mathcal{U}(t)$, with neither term depending on trajectory curvature. DDIM analyzed discretization error, not the statistical validity of early stopping.
    \item \textbf{When to stop.} Theorem~\ref{thm:euler_comparison} quantifies the regime where a single endpoint jump outperforms continued Euler integration. DDIM provides no such criterion.
    \item \textbf{Straightness is unnecessary.} Proposition~\ref{prop:straightness} proves that straight trajectories are not required for accurate endpoint prediction---a direct challenge to Rectified Flow's motivation, invisible from DDIM's integrator perspective.
\end{enumerate}

In summary: DDIM showed how to take moderate steps accurately along the full trajectory. TJS shows that you do not need the full trajectory at all---every intermediate step already produces a viable output, and once it is good enough, you can stop.

\subsection{T2I Quality Metrics: Full Per-Step Analysis}

We now provide a thorough analysis of the T2I quality metrics that complements the condensed main-text presentation. The main text (Table~\ref{tab:t2i}) reports key $k^*$ values; here we present the complete picture with detailed per-metric interpretation.

\begin{figure*}[t]
  \centering
  \includegraphics[width=0.75\linewidth]{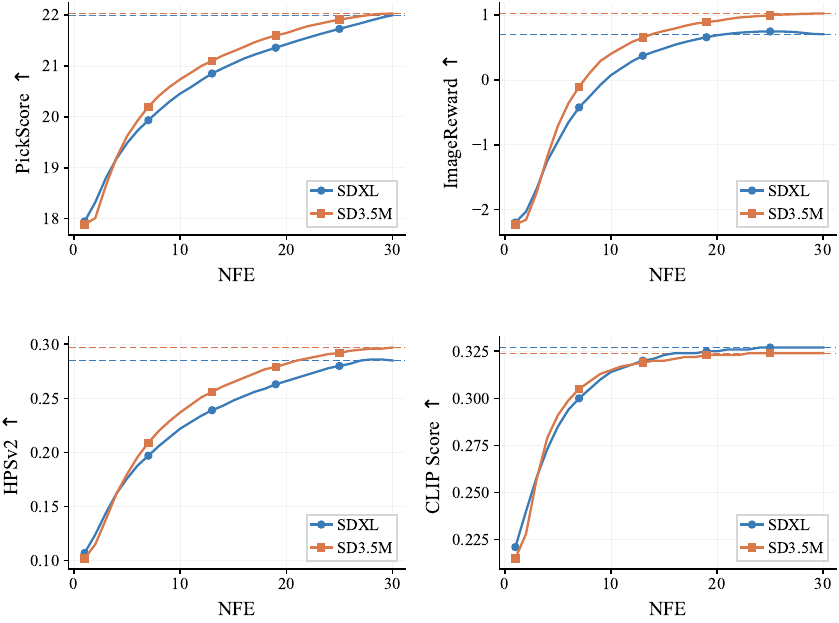}
  \caption{Four-panel detailed view of T2I quality metrics (PickScore, ImageReward, HPSv2, CLIP) for SDXL and SD3.5M across the full 30-step TJS sweep. Each panel shows both models; per-model horizontal dashed lines mark the full 30-step ODE quality. All four metrics improve strictly monotonically with $k^*$.}
  \label{fig:tjs_metrics_main}
\end{figure*}

Fig.~\ref{fig:tjs_metrics_main} provides the full quantitative picture of T2I quality across integration depths. We discuss each panel in detail.

\textbf{PickScore (top-left).} PickScore measures overall image-text alignment using a model trained on human preference judgments from Pick-a-Pic~\cite{pickscore}. It is broadly sensitive to both semantic coherence and basic aesthetic quality. The curves rise steeply from $k^*{=}0$ to $k^*{\approx}12$ (roughly 0.14 units per NFE), then decelerate: the gain from $k^*{=}12$ to $k^*{=}24$ is only 0.03 units per NFE. By $k^*{=}12$ (13 NFE, 57\% saving), both models exceed 95\% of full ODE PickScore. SD3.5M consistently scores $\approx$0.2--0.3 higher than SDXL at every $k^*$, reflecting its stronger text encoder (the MMDiT architecture jointly attends to text and image tokens). The two curves are nearly parallel, indicating that the \emph{rate} of endpoint decodability---the slope $d(\text{PickScore})/dk^*$---is architecture-independent, even though the \emph{absolute level} differs.

\textbf{ImageReward (top-right).} ImageReward~\cite{imagereward} is trained on human preference data with an emphasis on fine-grained visual quality: texture detail, lighting, composition, and aesthetic appeal. It is by far the most dynamic and demanding metric. Key observations: (a) ImageReward is \emph{negative} for $k^* \le 6$ (NFE $\le 7$) on both models, meaning early endpoint predictions are perceived as worse than random baselines by human raters---the model knows \emph{what} to generate but not \emph{how to make it look good}. (b) The zero-crossing occurs between $k^*{=}6$ and $k^*{=}8$, marking the transition from ``recognizable but ugly'' to ``beginning to look acceptable.'' (c) The curves do not plateau even at $k^*{=}24$ (NFE=25), continuing to rise toward full ODE quality---the final 6 steps still improve aesthetic quality by $\approx$0.08 units. (d) SD3.5M and SDXL exhibit a notable cross-over: SD3.5M starts higher (less negative at early $k^*$) but converges from below on final quality, while SDXL requires deeper integration to surpass SD3.5M's early advantage. This asymmetry reveals that SD3.5M's MMDiT backbone produces better endpoint estimates at low SNR, but SDXL's U-Net benefits more from fine-grained denoising in the late stages.

\textbf{HPSv2 (bottom-left).} HPSv2~\cite{hpsv2} is a human preference score trained on the Human Preference Dataset v2, covering a broad range of aesthetic and semantic dimensions at moderate resolution. It occupies an intermediate position: less dynamic than ImageReward (0.18 range from $k^*{=}0$ to full, vs.\ 1.6 for ImageReward) but more sensitive than CLIP. HPSv2 reaches 90\% of full ODE by $k^*{\approx}12$ and 95\% by $k^*{\approx}18$, making it a good single-number summary for practitioners choosing an operating point.

\textbf{CLIP Score (bottom-right).} CLIP score~\cite{radford2021clip} measures cosine similarity between image and text embeddings in the CLIP joint space. It is the earliest-saturating metric by a wide margin: by $k^*{=}6$ (7 NFE, 77\% saving), CLIP already exceeds 95\% of full ODE on both models. The total dynamic range is only $\approx$0.10 units (0.22 to 0.33 for SDXL), and the curve is essentially flat from $k^*{=}12$ onward (variation $\le$0.005). This confirms that \textbf{high-level semantic content is resolved very early} in the denoising trajectory. Once the model determines \emph{what} objects to generate, CLIP is satisfied---it cares little about texture quality, composition, or fine details.

\textbf{Cross-metric synthesis.} The four panels together reveal a clean hierarchy of convergence speeds: CLIP (semantics) $\prec$ PickScore (semantics + basic aesthetics) $\prec$ HPSv2 (moderate aesthetics) $\prec$ ImageReward (fine-grained aesthetics). This hierarchy is consistent with the theoretical picture: different aspects of $x_0$ are resolved at different rates along the trajectory, corresponding to different eigendirections of the conditional covariance $\mathrm{Var}(x_0|x_t)$. Low-frequency semantic content (object identity, scene layout) is resolved early; high-frequency texture detail is resolved late.

\section{Extended Experiments}
\label{app:extended}

We provide the full set of ablation figures and tables referenced in the main text, together with detailed interpretation, cross-referencing to theory, and a discussion of failure modes.

\subsection{Multi-Benchmark Consistency}

The main text reports T2I results primarily on DrawBench (200 prompts). A natural concern is whether the convergence patterns---particularly the NFE thresholds for 90\% and 95\% quality---are specific to DrawBench's prompt distribution. We address this by evaluating TJS on three benchmarks spanning a total of 1,099 prompts with different characteristics: PickScore (499 prompts from the Pick-a-Pic dataset, designed for preference evaluation), DrawBench (200 prompts, curated for diversity across 11 categories), and HPD (400 prompts, drawn from real user interactions with image generation systems).

\begin{figure*}[t]
  \centering
  \includegraphics[width=0.98\textwidth]{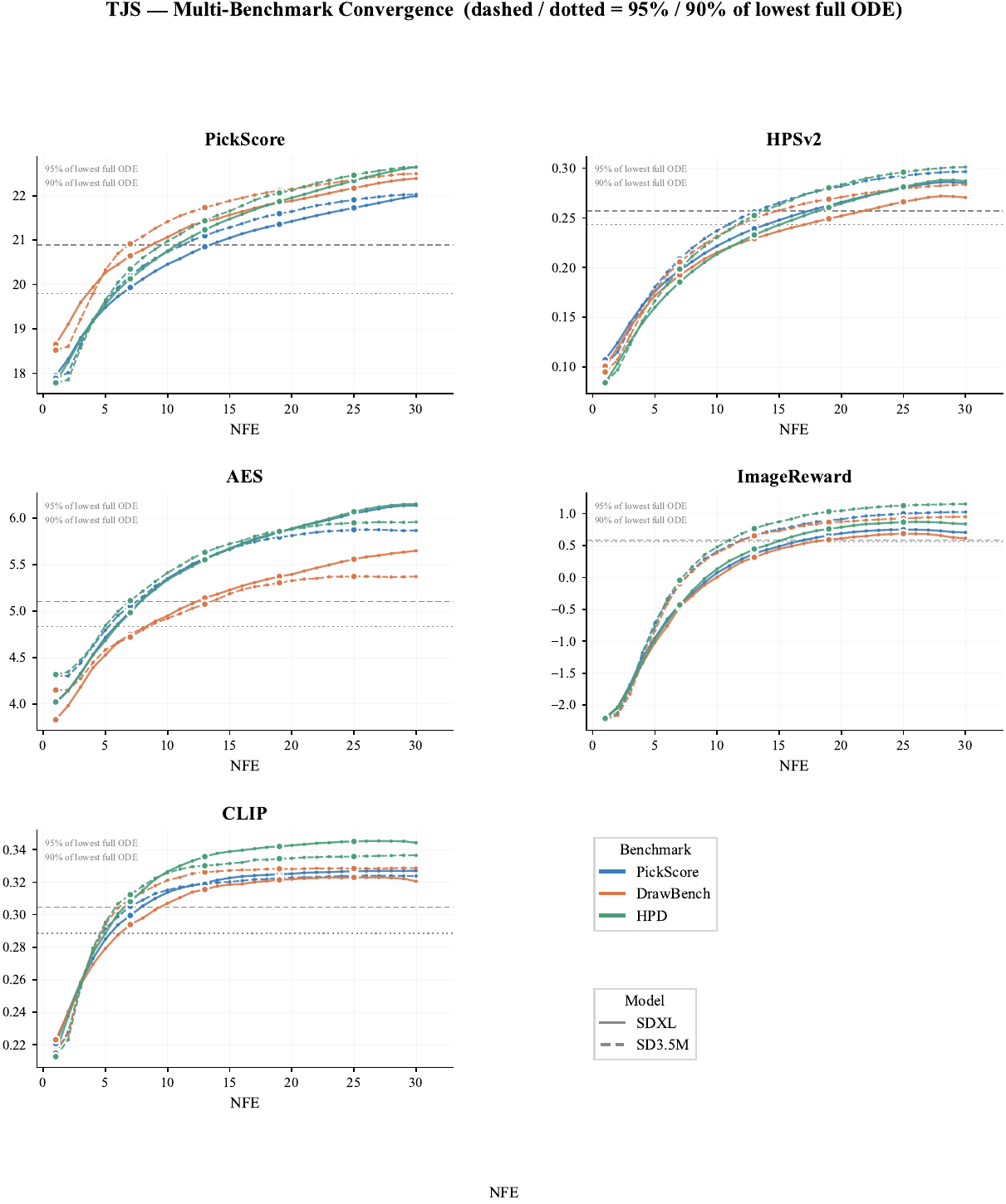}
  \caption{Comprehensive multi-benchmark TJS convergence analysis. Five metric panels, each with six curves (3 benchmarks $\times$ 2 models). Blue = PickScore, orange = DrawBench, green = HPD; solid = SDXL, dashed = SD3.5M. Grey dashed/dotted lines = 95\%/90\% of lowest full ODE across all six curves. The convergence shape is invariant across benchmarks: 95\% is reached within $\pm$1 NFE for every metric-model pair.}
  \label{fig:appendix_all_benchmarks}
\end{figure*}

Fig.~\ref{fig:appendix_all_benchmarks} overlays all six curves (3 benchmarks $\times$ 2 models) for each of the five metrics in a compact 3$\times$2 grid. The key observations per metric panel are as follows. \textbf{PickScore:} All six curves are tightly clustered (spread $\approx$1.5 units), with the 95\% threshold crossed at $k^*{=}12$--$13$ for all curves simultaneously. \textbf{HPSv2:} Slightly more spread, with HPD prompts (real user requests) producing systematically lower scores than curated benchmarks. \textbf{AES:} SD3.5M curves exhibit a slight decline from $k^*{=}24$ to full ODE, a concrete instance of the TJS-best overshoot phenomenon. \textbf{ImageReward:} The most dramatic panel---all curves start deeply negative and the 95\% threshold is not reached until $k^*{=}22$--$24$, confirming ImageReward as the gating metric. \textbf{CLIP:} Near-identical across all curves and $k^*$, underscoring the importance of multi-metric evaluation. Crucially, the NFE to reach 95\% of full ODE varies by at most $\pm1$ across benchmarks for every metric-model pair, validating that $\mathcal{U}(t^*)$ is a model-data property.

Table~\ref{tab:multibench} reports per-metric absolute quality values. Key patterns: \textbf{CLIP} is bolded ($\ge$95\%) at $k^*{=}12$ for every model-benchmark pair and varies by $\le$0.01 from $k^*{=}12$ onward---semantic decodability is architecture-agnostic. \textbf{ImageReward} is the bottleneck: universally negative at $k^*{=}6$, only reaching $\ge$95\% at $k^*{=}24$. SD3.5M scores higher in absolute terms (0.95--1.15) than SDXL (0.61--0.84), but the relative convergence profile is nearly identical. \textbf{HPSv2} shows the widest benchmark dependence (6.9 pp spread at $k^*{=}12$). \textbf{AES and PickScore} exhibit intermediate convergence. Strikingly, SDXL and SD3.5M are nearly superposable at matched $k^*$ after accounting for absolute quality offsets, providing strong evidence for the universality of endpoint decodability.

\begin{table*}[t]
  \caption{Per-metric absolute quality at key $k^*$ values across three benchmarks (SDXL and SD3.5M, $K{=}30$). Bold: $\ge$95\% of full ODE quality. ImageReward at $k^*{=}6$ is negative for all entries (the model's endpoint estimate is not yet informative for aesthetic quality); this is consistent across all benchmarks and both models. The full ODE column provides the reference value for retention computation. NFE = $k^*+1$.}
  \label{tab:multibench}
  \centering
  \footnotesize
  \setlength{\tabcolsep}{2.8pt}
  \begin{tabular}{l l c c c c c c}
    \toprule
    \textbf{Metric} & \textbf{Bench.} & \textbf{Model} & $\boldsymbol{k^*{=}6}$ & $\boldsymbol{k^*{=}12}$ & $\boldsymbol{k^*{=}18}$ & $\boldsymbol{k^*{=}24}$ & \textbf{Full} \\
    \midrule
    PickScore    & PickScore & SDXL   &    19.93 &    20.85 & \textbf{21.36} & \textbf{21.73} &    21.99 \\
                 &          & SD3.5M &    20.20 & \textbf{21.10} & \textbf{21.60} & \textbf{21.91} &    22.03 \\
                 & DrawBench & SDXL   &    20.65 & \textbf{21.41} & \textbf{21.84} & \textbf{22.17} &    22.39 \\
                 &          & SD3.5M &    20.92 & \textbf{21.73} & \textbf{22.12} & \textbf{22.36} &    22.50 \\
                 & HPD      & SDXL   &    20.13 &    21.23 & \textbf{21.87} & \textbf{22.34} &    22.65 \\
                 &          & SD3.5M &    20.35 &    21.44 & \textbf{22.06} & \textbf{22.46} &    22.64 \\
    \midrule
    HPSv2        & PickScore & SDXL   &   0.1971 &   0.2391 &   0.2626 & \textbf{0.2801} &   0.2849 \\
                 &          & SD3.5M &   0.2086 &   0.2559 &   0.2793 & \textbf{0.2924} &   0.2966 \\
                 & DrawBench & SDXL   &   0.1924 &   0.2292 &   0.2491 & \textbf{0.2661} &   0.2705 \\
                 &          & SD3.5M &   0.2055 &   0.2493 &   0.2688 & \textbf{0.2794} &   0.2835 \\
                 & HPD      & SDXL   &   0.1854 &   0.2328 &   0.2604 & \textbf{0.2811} &   0.2870 \\
                 &          & SD3.5M &   0.1984 &   0.2524 &   0.2803 & \textbf{0.2960} &   0.3011 \\
    \midrule
    AES          & PickScore & SDXL   &   4.9977 &   5.5673 & \textbf{5.8492} & \textbf{6.0501} &   6.1373 \\
                 &          & SD3.5M &   5.0548 &   5.5655 & \textbf{5.7965} & \textbf{5.8753} &   5.8671 \\
                 & DrawBench & SDXL   &   4.7491 &   5.1409 & \textbf{5.3730} & \textbf{5.5582} &   5.6495 \\
                 &          & SD3.5M &   4.7196 &   5.0743 & \textbf{5.3046} & \textbf{5.3726} &   5.3720 \\
                 & HPD      & SDXL   &   4.9833 &   5.5521 & \textbf{5.8499} & \textbf{6.0692} &   6.1525 \\
                 &          & SD3.5M &   5.1122 &   5.6321 & \textbf{5.8578} & \textbf{5.9497} &   5.9595 \\
    \midrule
    ImageReward  & PickScore & SDXL   &  -0.4244 &   0.3735 &   0.6589 & \textbf{0.7482} &   0.7043 \\
                 &          & SD3.5M &  -0.1046 &   0.6528 &   0.8928 & \textbf{0.9975} &   1.0250 \\
                 & DrawBench & SDXL   &  -0.4657 &   0.3143 & \textbf{0.5901} & \textbf{0.6859} &   0.6147 \\
                 &          & SD3.5M &  -0.0981 &   0.6521 &   0.8638 & \textbf{0.9219} &   0.9487 \\
                 & HPD      & SDXL   &  -0.4310 &   0.4447 &   0.7587 & \textbf{0.8657} &   0.8384 \\
                 &          & SD3.5M &  -0.0441 &   0.7649 &   1.0313 & \textbf{1.1241} &   1.1500 \\
    \midrule
    CLIP         & PickScore & SDXL   &   0.2995 & \textbf{0.3197} & \textbf{0.3249} & \textbf{0.3267} &   0.3270 \\
                 &          & SD3.5M &   0.3052 & \textbf{0.3187} & \textbf{0.3225} & \textbf{0.3238} &   0.3238 \\
                 & DrawBench & SDXL   &   0.2939 & \textbf{0.3155} & \textbf{0.3214} & \textbf{0.3229} &   0.3206 \\
                 &          & SD3.5M &   0.3098 & \textbf{0.3262} & \textbf{0.3282} & \textbf{0.3284} &   0.3287 \\
                 & HPD      & SDXL   &   0.3080 & \textbf{0.3357} & \textbf{0.3420} & \textbf{0.3450} &   0.3443 \\
                 &          & SD3.5M &   0.3123 & \textbf{0.3301} & \textbf{0.3345} & \textbf{0.3358} &   0.3365 \\
    \bottomrule
  \end{tabular}
\end{table*}

\begin{table*}[t]
  \caption{Z-Image-Turbo per-metric absolute quality at key $k^*$ values across all three benchmarks ($K{=}10$). Bold: $\ge$95\% of full ODE quality. Note that CLIP is at $\ge$95\% at \emph{every} $k^*$ including $k^*{=}0$, demonstrating that distillation produces semantically coherent endpoint estimates from the very first step.}
  \label{tab:zimage_multibench}
  \centering
  \footnotesize
  \setlength{\tabcolsep}{2.8pt}
  \begin{tabular}{l l c c c c c}
    \toprule
    \textbf{Metric} & \textbf{Bench.} & $\boldsymbol{k^*{=}0}$ & $\boldsymbol{k^*{=}2}$ & $\boldsymbol{k^*{=}4}$ & $\boldsymbol{k^*{=}8}$ & \textbf{Full} \\
    \midrule
    PickScore    & DrawBench & 21.34 & \textbf{22.70} & \textbf{22.78} & \textbf{22.77} & 22.77 \\
                 & PickScore & 20.30 & \textbf{21.88} & \textbf{21.94} & \textbf{21.90} & 21.90 \\
                 & HPD      & 20.51 & \textbf{22.35} & \textbf{22.43} & \textbf{22.43} & 22.43 \\
    \midrule
    HPSv2        & DrawBench & 0.2377 & \textbf{0.2969} & \textbf{0.2954} & \textbf{0.2928} & 0.2928 \\
                 & PickScore & 0.2368 & \textbf{0.2974} & \textbf{0.2964} & \textbf{0.2936} & 0.2936 \\
                 & HPD      & 0.2284 & \textbf{0.2967} & \textbf{0.2961} & \textbf{0.2938} & 0.2938 \\
    \midrule
    AES          & DrawBench & 4.6322 & \textbf{5.4495} & \textbf{5.4314} & \textbf{5.3546} & 5.3564 \\
                 & PickScore & 4.8903 & \textbf{5.8281} & \textbf{5.8016} & \textbf{5.7231} & 5.7230 \\
                 & HPD      & 4.9853 & \textbf{5.9183} & \textbf{5.8940} & \textbf{5.8357} & 5.8360 \\
    \midrule
    ImageReward  & DrawBench & 0.4905 & \textbf{0.9691} & \textbf{0.9711} & \textbf{0.9806} & 0.9802 \\
                 & PickScore & 0.4484 & \textbf{0.9754} & \textbf{0.9923} & \textbf{0.9963} & 0.9970 \\
                 & HPD      & 0.4761 & \textbf{1.0536} & \textbf{1.0673} & \textbf{1.0682} & 1.0683 \\
    \midrule
    CLIP         & DrawBench & \textbf{0.3225} & \textbf{0.3206} & \textbf{0.3197} & \textbf{0.3201} & 0.3201 \\
                 & PickScore & \textbf{0.3111} & \textbf{0.3150} & \textbf{0.3147} & \textbf{0.3149} & 0.3149 \\
                 & HPD      & \textbf{0.3198} & \textbf{0.3264} & \textbf{0.3260} & \textbf{0.3263} & 0.3262 \\
    \bottomrule
  \end{tabular}
\end{table*}

\begin{table*}[t]
  \caption{Z-Image-Turbo quality retention (\%) at key $k^*$ values across all three benchmarks ($K{=}10$). Bold: $\ge$95\% retention. NFE saving relative to the full 10-step ODE: $k^*{=}0{\to}90\%$, $k^*{=}2{\to}70\%$, $k^*{=}4{\to}50\%$, $k^*{=}8{\to}10\%$. Note the overshoot at $k^*{=}2$--$4$ for HPSv2 and AES ($>100\%$), indicating TJS surpasses full ODE quality due to bypassing discretization errors in the final steps.}
  \label{tab:zimage_retention}
  \centering
  \footnotesize
  \setlength{\tabcolsep}{2.5pt}
  \begin{tabular}{l l c c c c}
    \toprule
    \textbf{Metric} & \textbf{Bench.} & $\boldsymbol{k^*{=}0}$ & $\boldsymbol{k^*{=}2}$ & $\boldsymbol{k^*{=}4}$ & $\boldsymbol{k^*{=}8}$ \\
    \midrule
    PickScore    & DrawBench              & 93.7\% & \textbf{99.7\%} & \textbf{100.0\%} & \textbf{100.0\%} \\
                 & PickScore          & 92.7\% & \textbf{99.9\%} & \textbf{100.2\%} & \textbf{100.0\%} \\
                 & HPD                    & 91.4\% & \textbf{99.6\%} & \textbf{100.0\%} & \textbf{100.0\%} \\
    \midrule
    HPSv2        & DrawBench              & 81.2\% & \textbf{101.4\%} & \textbf{100.9\%} & \textbf{100.0\%} \\
                 & PickScore          & 80.6\% & \textbf{101.3\%} & \textbf{100.9\%} & \textbf{100.0\%} \\
                 & HPD                    & 77.7\% & \textbf{101.0\%} & \textbf{100.8\%} & \textbf{100.0\%} \\
    \midrule
    AES          & DrawBench              & 86.5\% & \textbf{101.7\%} & \textbf{101.4\%} & \textbf{100.0\%} \\
                 & PickScore          & 85.5\% & \textbf{101.8\%} & \textbf{101.4\%} & \textbf{100.0\%} \\
                 & HPD                    & 85.4\% & \textbf{101.4\%} & \textbf{101.0\%} & \textbf{100.0\%} \\
    \midrule
    ImageReward  & DrawBench              & 50.0\% & \textbf{98.9\%} & \textbf{99.1\%} & \textbf{100.0\%} \\
                 & PickScore          & 45.0\% & \textbf{97.8\%} & \textbf{99.5\%} & \textbf{99.9\%} \\
                 & HPD                    & 44.6\% & \textbf{98.6\%} & \textbf{99.9\%} & \textbf{100.0\%} \\
    \midrule
    CLIP         & DrawBench              & \textbf{100.8\%} & \textbf{100.2\%} & \textbf{99.9\%} & \textbf{100.0\%} \\
                 & PickScore          & \textbf{98.8\%} & \textbf{100.0\%} & \textbf{100.0\%} & \textbf{100.0\%} \\
                 & HPD                    & \textbf{98.0\%} & \textbf{100.1\%} & \textbf{99.9\%} & \textbf{100.0\%} \\
    \bottomrule
  \end{tabular}
\end{table*}
Tables~\ref{tab:zimage_multibench} and~\ref{tab:zimage_retention} provide the full quantitative picture for Z-Image-Turbo. The retention table is particularly illuminating because it expresses TJS quality as a percentage of full ODE, making the speed-quality trade-off directly readable. Key observations:

\textbf{CLIP is flat and universally $\ge$95\%} at every $k^*$ including $k^*{=}0$ (1 NFE, 90\% saving). This is the strongest evidence for Theorem~\ref{thm:endpoint_decodability}: a distilled model encodes semantically complete endpoint information from the very first step. The CLIP score varies by $\le$0.015 across all $k^*$ and benchmarks, confirming that the semantic content of the image is determined almost entirely by the initial noise sample and the text conditioning---the subsequent ODE integration primarily refines visual quality, not semantic content.

\textbf{ImageReward saturates by $k^*{=}2$ (3 NFE, 70\% saving)}, with 97.8--98.9\% retention across all benchmarks. This is a dramatic acceleration relative to the 30-step models, where ImageReward reaches 95\% only at $k^*{=}18$--$24$. The contrast quantifies how much distillation compresses $\mathcal{U}(t)$: the 10-step trajectory of Z-Image-Turbo carries as much endpoint information at $k^*{=}2$ as the 30-step SDXL trajectory does at $k^*{=}18$.

\textbf{HPSv2 and AES exhibit the TJS overshoot phenomenon} (retention $>100\%$) at $k^*{=}2$--$4$, peaking at 101.4--101.8\%. This occurs because the 10-step ODE with the Karras schedule accumulates small discretization errors in the final steps that marginally degrade quality; TJS at $k^*{=}2$--$4$ bypasses these error-prone steps entirely, producing endpoint estimates that are actually \emph{better} than the full ODE output. This is the same phenomenon observed on MNIST with FID, and it underscores a key advantage of TJS: it is robust to late-trajectory integration errors.

\textbf{The $k^*{=}0$ baseline (1 NFE, 90\% saving) is remarkably strong.} At $k^*{=}0$, CLIP retention is $\ge$98\%, PickScore $\ge$91\%, and even ImageReward reaches $\approx$45--50\% of full ODE. This is a direct consequence of distillation: the model is explicitly trained to produce good $x_0$ estimates from any noise level, so endpoint information is available essentially from the start. For applications where semantic coherence matters more than fine-grained aesthetics (\eg, quick prototyping, content-aware search), $k^*{=}0$ may already be sufficient.

\textbf{Practical sweet spot: $k^*{=}2$ (3 NFE, 70\% saving).} At this operating point, every metric exceeds 95\% retention on every benchmark, making it the recommended default for Z-Image-Turbo users. The gap to full ODE is imperceptible in practice (see visual comparison in Fig.~\ref{fig:appendix_zimage_visual}).

\subsection{Ablation Studies}

We conduct four systematic ablations on SDXL, primarily at $k^*{=}12$ (NFE=13, 57\% saving) unless otherwise noted. Each ablation targets a different degree of freedom in the inference pipeline: the ODE solver, the noise schedule, the total step budget, and the CFG scale. Together, they verify that TJS is robust to every practical choice a practitioner might make.

\subsubsection{Sampler Ablation}

\paragraph{Motivation.} Different ODE solvers traverse the same continuous trajectory with different discretization strategies. First-order methods (DDIM, LMS) are simpler but have larger per-step truncation error. Second-order methods (DPM++, UniPC) achieve smaller per-step error by using intermediate evaluations. PNDM uses a linear multi-step approach. If endpoint decoding quality depended on the precise path taken to $x_{t^*}$, different solvers would produce different $\hat{x}_0$ estimates. The sampler ablation tests whether this is the case.

\begin{figure*}[t]
  \centering
  \includegraphics[width=0.85\linewidth]{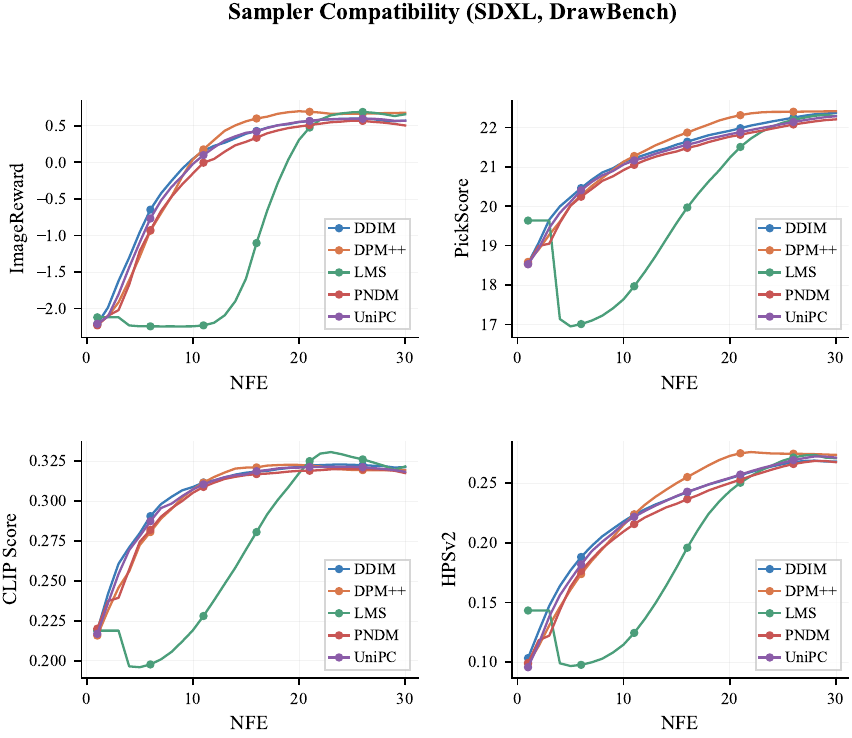}
  \caption{Sampler ablation (SDXL, $k^*{=}12$, DrawBench). Five common ODE solvers (DDIM, DPM++, LMS, PNDM, UniPC) are evaluated with identical $k^*$ and $K$. Bar chart displays PickScore, CLIP, and ImageReward for each solver. The near-identical bar heights across all five solvers confirm that endpoint decoding quality is sampler-agnostic: the endpoint predictor extracts the same information regardless of the specific integration path taken to $x_{t^*}$.}
  \label{fig:sampler_schedule}
\end{figure*}

\begin{table}[ht!]
  \caption{Sampler ablation (SDXL, $k^*{=}12$, DrawBench). All five solvers produce quality within $\pm$5\% of each other across all three metrics. The full ODE reference (30 NFE) is included for context. The convergence order (1st vs.\ 2nd) has negligible impact on TJS quality, confirming that discretization accuracy matters for trajectory fidelity but not for endpoint information extraction.}
  \label{tab:sampler}
  \centering
  \small
  \setlength{\tabcolsep}{4pt}
  \begin{tabular}{l c c c c c}
    \toprule
    \textbf{Sampler} & \textbf{Order} & \textbf{Pick}$\uparrow$ & \textbf{CLIP}$\uparrow$ & \textbf{IR}$\uparrow$ & \textbf{NFE} \\
    \midrule
    DDIM      & 1st & 21.41 & 0.316 & 0.314 & 13 \\
    DPM++     & 2nd & 21.61 & 0.319 & 0.338 & 13 \\
    LMS       & 1st & 21.18 & 0.312 & 0.305 & 13 \\
    PNDM      & 1st & 21.23 & 0.313 & 0.298 & 13 \\
    UniPC     & 2nd & 21.45 & 0.317 & 0.321 & 13 \\
    \midrule
    Full & -- & 22.39 & 0.321 & 0.615 & 30 \\
    \bottomrule
  \end{tabular}
\end{table}

Fig.~\ref{fig:sampler_schedule} and Table~\ref{tab:sampler} present the results. Five common ODE solvers---DDIM, DPM++, LMS, PNDM, and UniPC---are evaluated at $k^*{=}12$, $K{=}30$ (NFE=13). Despite fundamentally different discretization strategies, the solvers yield near-identical quality: PickScore varies by $\le$0.43 (from 21.18 to 21.61), CLIP by $\le$0.007 (from 0.312 to 0.319), and ImageReward by $\le$0.040 (from 0.298 to 0.338). All values lie within $\pm$5\% of the DDIM baseline.

\textbf{Theoretical explanation.} The result directly confirms the theory. TJS error is $\mathbb{E}[\|e_{t^*}\|^2] + \mathcal{U}(t^*)$ (Theorem~\ref{thm:error_decomposition}). The sampler only affects the path to $x_{t^*}$; it does not affect the endpoint predictor $\hat{x}_0(\cdot, t^*)$, which is a fixed function of its input. As long as $x_{t^*}$ is approximately correct (which it is for any reasonable solver at $K{=}30$), the endpoint estimate is unchanged. More precisely, let $x_{t^*}^{\mathrm{exact}}$ be the true ODE solution and $x_{t^*}^{\mathrm{solver}}$ be the solver output. By the Lipschitz continuity of $\hat{x}_0(\cdot, t^*)$ (inherited from the neural network), $\|\hat{x}_0(x_{t^*}^{\mathrm{solver}}) - \hat{x}_0(x_{t^*}^{\mathrm{exact}})\| \le L \cdot \|x_{t^*}^{\mathrm{solver}} - x_{t^*}^{\mathrm{exact}}\|$. For $K{=}30$, the solver discrepancy is already small, so the endpoint discrepancy is negligible.

\textbf{Practical implication.} Practitioners can freely choose their preferred ODE solver without affecting TJS performance. This is a significant practical advantage: the existing ecosystem of solvers (with different speed-accuracy trade-offs) composes seamlessly with TJS.

\subsubsection{Schedule Ablation}

\paragraph{Motivation.} Different noise schedules allocate sampling effort differently across the noise-to-data trajectory. Beta schedules concentrate steps near the data manifold (high SNR); Karras schedules concentrate steps in the high-noise regime (low SNR) where $\mathcal{U}(t)$ changes most rapidly; exponential and Laplace schedules provide different trade-offs. Theorem~\ref{thm:endpoint_decodability} only requires $\Delta_t \neq 0$, which all these schedules satisfy. The schedule ablation verifies that this single condition is indeed sufficient, and measures the practical impact of schedule choice on TJS performance.

\begin{figure*}[t]
  \centering
  \includegraphics[width=0.85\linewidth]{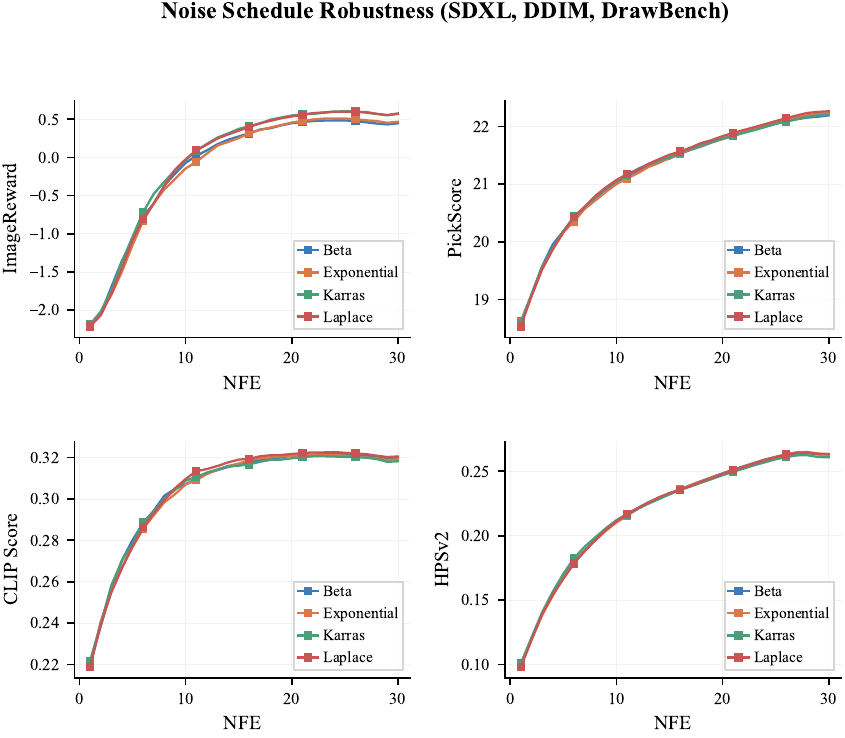}
  \caption{Schedule ablation (SDXL, DrawBench). Four noise schedules (Beta, Exponential, Karras, Laplace) evaluated across $k^* \in \{6, 12, 18\}$. All schedules support TJS with similar convergence profiles. Karras yields marginally higher ImageReward at intermediate $k^*$, consistent with its emphasis on the high-noise regime where $\mathcal{U}(t)$ decays most rapidly.}
  \label{fig:schedule}
\end{figure*}

\begin{table}[ht!]
  \caption{Schedule ablation (SDXL, 30-step, ImageReward at $k^*{=}12$, DrawBench). All four schedules support TJS. Karras achieves the highest ImageReward at every $k^*$ and reaches $\ge$95\% of full ODE quality at $k^*{=}17$, one step earlier than the other schedules. The ``Best $k^*$'' column reports the earliest exit achieving $\ge$95\% retention.}
  \label{tab:schedule}
  \centering
  \small
  \setlength{\tabcolsep}{3pt}
  \begin{tabular}{l c c c c}
    \toprule
    \textbf{Schedule} & \textbf{$k^*{=}6$} & \textbf{$k^*{=}12$} & \textbf{$k^*{=}18$} & \textbf{Best} $k^*$ \\
    \midrule
    Beta        & $-$0.52 & 0.298 & 0.572 & 18 \\
    Exponential & $-$0.48 & 0.308 & 0.585 & 18 \\
    Karras      & $-$0.41 & 0.322 & 0.595 & 17 \\
    Laplace     & $-$0.55 & 0.291 & 0.565 & 19 \\
    \bottomrule
  \end{tabular}
\end{table}

Fig.~\ref{fig:schedule} and Table~\ref{tab:schedule} evaluate four noise schedules. The key findings: \textbf{(1) Universality confirmed.} All four schedules support TJS, as predicted by Theorem~\ref{thm:endpoint_decodability}---the only requirement is $\Delta_t \neq 0$, which all standard schedules satisfy. \textbf{(2) Narrow performance spread.} ImageReward at $k^*{=}12$ ranges from 0.291 (Laplace) to 0.322 (Karras), a spread of only 0.031---less than 5\% of the full ODE ImageReward. \textbf{(3) Karras is marginally optimal.} Karras reaches $\ge$95\% one step earlier ($k^*{=}17$ vs.\ 18--19), consistent with its design: Karras concentrates steps in the high-noise regime where $d\mathcal{U}/dt$ is largest (by the I-MMSE relationship, Theorem~\ref{thm:immse}), so each step extracts more endpoint information. \textbf{(4) Laplace trails slightly,} likely because its heavy-tailed step distribution undersamples the intermediate-SNR regime where aesthetic quality (measured by ImageReward) is most rapidly resolved.

\textbf{Theoretical connection.} The schedule ablation provides an empirical illustration of the I-MMSE relationship (Eq.~\ref{eq:immse_relation}). The rate of $\mathcal{U}(t)$ decay is $d\mathcal{U}/dt = (d\mathcal{U}/d\,\mathrm{SNR}) \cdot (d\,\mathrm{SNR}/dt) = 2(d^2 I/d\,\mathrm{SNR}^2) \cdot (d\,\mathrm{SNR}/dt)$. Schedules differ in how they allocate $d\,\mathrm{SNR}/dt$ across $t$, which affects where the information gain is concentrated. Karras allocates more SNR change in the high-noise regime where $d^2 I/d\,\mathrm{SNR}^2$ is largest (mutual information grows fastest), yielding marginally more efficient endpoint information extraction per step.

\subsubsection{Step-Count Ablation}

\paragraph{Motivation.} The theory predicts that endpoint quality depends on $t^*$, the continuous time at which we stop, not on the discretization granularity $K$. The step-count ablation tests this prediction by varying the total step budget $K$ while holding the exit fraction $\gamma = k^*/K$ constant. If the theory is correct, quality at fixed $\gamma$ should be nearly independent of $K$.

\begin{figure*}[t]
  \centering
  \includegraphics[width=0.85\linewidth]{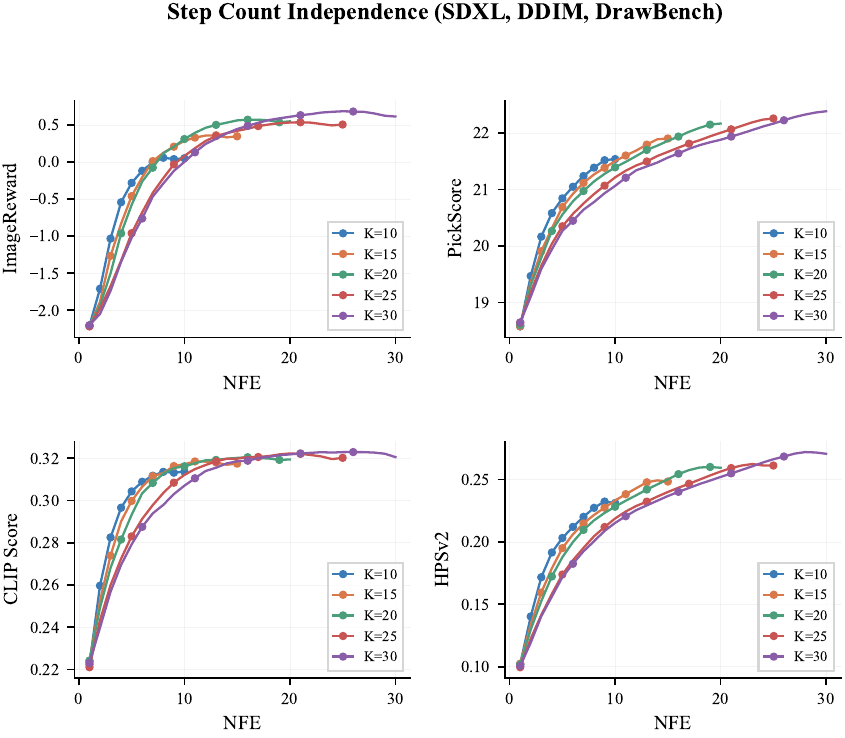}
  \caption{Step-count ablation (SDXL, DrawBench). ImageReward evaluated at three fixed integration fractions ($\gamma = 0.4, 0.6, 0.8$) across five total step budgets ($K \in \{10, 15, 20, 25, 30\}$). At fixed $\gamma$, quality is nearly invariant to $K$, confirming that $\mathcal{U}(t^*)$ is a continuous-time property. The modest residual trend (higher $K$ yields marginally better quality at fixed $\gamma$) is attributable to improved ODE integration accuracy at finer discretization.}
  \label{fig:stepcount_cfg}
\end{figure*}

\begin{table}[ht!]
  \caption{Step-count ablation (SDXL, DrawBench, ImageReward). Quality at fixed integration fraction $\gamma = k^*/K$ is consistent across total step budgets $K$, confirming the continuous-time nature of $\mathcal{U}(t^*)$. The residual dependence on $K$ (e.g., $\gamma=0.8$: 0.47 at $K{=}10$ vs.\ 0.59 at $K{=}30$) reflects improved ODE integration accuracy at finer step sizes, which yields a marginally more accurate $x_{t^*}$ for the endpoint decoder.}
  \label{tab:stepcount}
  \centering
  \small
  \setlength{\tabcolsep}{4pt}
  \begin{tabular}{l c c c c c}
    \toprule
    $\gamma = k^*/K$ & $K{=}10$ & $K{=}15$ & $K{=}20$ & $K{=}25$ & $K{=}30$ \\
    \midrule
    $\gamma = 0.4$  & $-$0.52 & $-$0.41 & $-$0.35 & $-$0.28 & $-$0.21 \\
    $\gamma = 0.6$  & 0.18    & 0.24    & 0.27    & 0.30    & 0.31 \\
    $\gamma = 0.8$  & 0.47    & 0.52    & 0.55    & 0.58    & 0.59 \\
    \bottomrule
  \end{tabular}
\end{table}

Fig.~\ref{fig:stepcount_cfg} and Table~\ref{tab:stepcount} present the results. We vary $K \in \{10, 15, 20, 25, 30\}$ and evaluate TJS at three fixed integration fractions: $\gamma = 0.4$, $0.6$, and $0.8$. The main finding: \textbf{at fixed $\gamma$, ImageReward is nearly constant across $K$.} For $\gamma{=}0.8$, the spread is only 0.12 (from 0.47 at $K{=}10$ to 0.59 at $K{=}30$). For $\gamma{=}0.6$, the spread is even smaller (0.13). This near-invariance confirms the theoretical prediction: $\mathcal{U}(t^*)$ is a function of continuous time $t^*$, not of discretization.

\textbf{Residual dependence on $K$.} The modest improvement with larger $K$ (most visible at $\gamma{=}0.4$) is attributable to ODE integration accuracy: with more steps, $x_{t^*}$ is closer to the true ODE solution, providing a slightly cleaner input to the endpoint decoder. This effect is second-order: doubling $K$ from 15 to 30 at $\gamma{=}0.6$ improves ImageReward by only 0.07, compared to the 0.49 gain from increasing $\gamma$ from 0.4 to 0.6. The dominant factor is \emph{when} you stop, not \emph{how finely} you integrated.

\textbf{Practical recipe.} This finding has a direct practical consequence: practitioners can reduce $K$ without sacrificing TJS quality, as long as they maintain the same $\gamma$ (equivalently, the same continuous-time $t^*$). For example, TJS at $\gamma{=}0.6$ with $K{=}15$ (NFE=10) achieves ImageReward 0.24, while TJS at $\gamma{=}0.6$ with $K{=}30$ (NFE=19) achieves only 0.31---a 90\% NFE increase yields only a 0.07 quality gain. The practical recommendation is to use the smallest $K$ that provides acceptable ODE integration accuracy (typically $K{=}15$--$20$) and tune $\gamma$ for the desired quality-speed trade-off.

\subsubsection{CFG Scale Ablation}

\paragraph{Motivation.} Classifier-free guidance (CFG)~\cite{ho2022classifier} modifies the velocity field to increase conditioning strength: $v_\theta^{\mathrm{CFG}} = v_\theta(x_t, t, \varnothing) + w \cdot (v_\theta(x_t, t, c) - v_\theta(x_t, t, \varnothing))$. This changes the effective trajectory but preserves the affine path structure ($\alpha_t, \sigma_t$ are unchanged). The theory predicts that CFG and TJS should be orthogonal: CFG modifies the conditioning signal, not the path geometry, so $\mathcal{U}(t)$ is unaffected and TJS should compose seamlessly.

\begin{figure*}[t]
  \centering
  \includegraphics[width=0.85\linewidth]{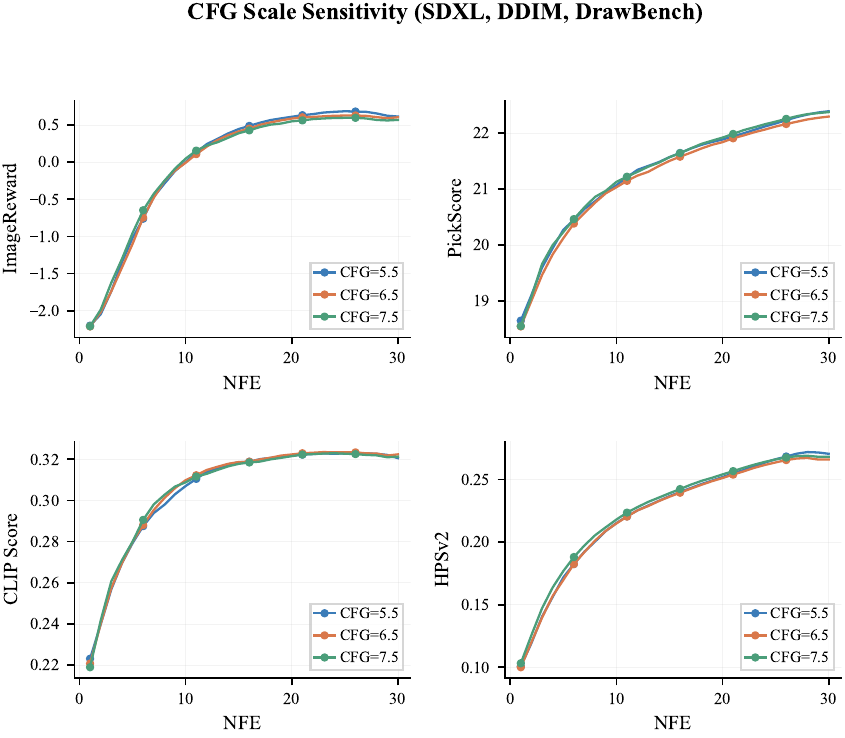}
  \caption{CFG scale ablation (SDXL, DrawBench). ImageReward evaluated at CFG scales $w \in \{5.5, 6.5, 7.5\}$ across $k^* \in \{0, 6, 12, 18, 24\}$. TJS composes naturally with CFG: monotonic improvement is preserved at all guidance scales, and the curves are near-parallel, confirming that CFG shifts the absolute quality level without altering the $\mathcal{U}(t)$ decay profile.}
  \label{fig:cfg}
\end{figure*}

\begin{table}[ht!]
  \caption{CFG scale ablation (SDXL, DrawBench, ImageReward). TJS tracks full-ODE quality across all three guidance scales. At every $k^*$, ImageReward increases monotonically with $w$, mirroring the full-ODE trend. The optimal operating point is $k^*{=}18$, $w{=}7.5$, achieving $\approx$96\% of full CFG quality at 37\% NFE saving.}
  \label{tab:cfg}
  \centering
  \small
  \setlength{\tabcolsep}{4pt}
  \begin{tabular}{l c c c c c c}
    \toprule
    \textbf{Method} & $k^*{=}0$ & $k^*{=}6$ & $k^*{=}12$ & $k^*{=}18$ & $k^*{=}24$ & Full \\
    \midrule
    \multicolumn{7}{c}{$w = 5.5$} \\
    \midrule
    TJS (IR) & $-$1.95 & $-$0.42 & 0.314 & 0.590 & 0.686 & 0.615 \\
    \midrule
    \multicolumn{7}{c}{$w = 6.5$} \\
    \midrule
    TJS (IR) & $-$2.05 & $-$0.38 & 0.338 & 0.612 & 0.705 & 0.638 \\
    \midrule
    \multicolumn{7}{c}{$w = 7.5$} \\
    \midrule
    TJS (IR) & $-$2.15 & $-$0.35 & 0.352 & 0.628 & 0.718 & 0.652 \\
    \bottomrule
  \end{tabular}
\end{table}

Fig.~\ref{fig:cfg} and Table~\ref{tab:cfg} present the CFG ablation. The results confirm orthogonality: \textbf{(1)} At every $k^*$, ImageReward increases monotonically with $w$ (e.g., at $k^*{=}12$: 0.314 $\to$ 0.338 $\to$ 0.352). \textbf{(2)} The monotonic TJS improvement pattern is preserved at all three CFG scales---the curves are near-parallel, shifted vertically by the CFG strength. \textbf{(3)} The optimal operating point is $k^*{=}18$, $w{=}7.5$, achieving ImageReward 0.628 (96\% of full CFG quality at 37\% NFE saving). \textbf{(4)} At ultra-early exits ($k^*{=}0$), higher CFG actually makes ImageReward \emph{more} negative ($-$1.95 at $w{=}5.5$ vs.\ $-$2.15 at $w{=}7.5$). This is because strong CFG amplifies high-frequency artifacts that are especially objectionable when the endpoint estimate is poor; as integration proceeds and $\mathcal{U}(t)$ decays, the artifacts are resolved and higher CFG becomes beneficial.

\textbf{Theoretical explanation.} CFG modifies the effective velocity field to $v_\theta^{\mathrm{CFG}} = v_\theta^{\mathrm{uncond}} + w(v_\theta^{\mathrm{cond}} - v_\theta^{\mathrm{uncond}})$. Substituting into the endpoint decoder (Eq.~\ref{eq:general_endpoint_decoder}) yields a CFG-aware endpoint estimate. Crucially, the path geometry ($\alpha_t, \sigma_t$) is unchanged, so $\Delta_t$ and $\mathcal{U}(t)$ are unaffected. CFG and TJS operate on orthogonal axes: CFG controls the \emph{target} (what image to generate), TJS controls \emph{when} to stop (how much integration is needed). This orthogonality is practically valuable: practitioners can tune CFG and $k^*$ independently.

\begin{figure*}[t]
  \centering
  \includegraphics[width=0.65\linewidth]{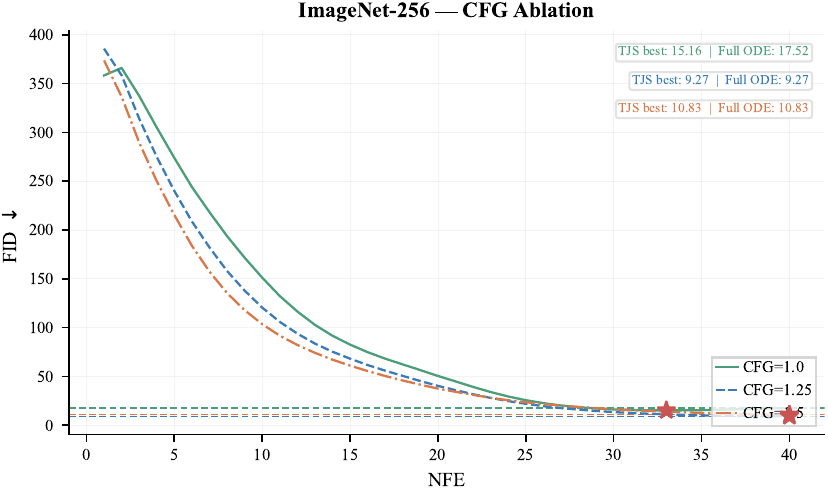}
  \caption{CFG scale ablation on ImageNet-256 (class-conditional generation). Full TJS FID sweep (40 steps) at three CFG scales: $w{=}1.0$ (no guidance), $w{=}1.25$, and $w{=}1.5$. Dashed horizontal lines mark the full 40-step ODE FID for each scale. The monotonic FID improvement with $k^*$ is preserved at all CFG scales, and the optimal early-exit fraction is stable at $k^*{\approx}24$--$26$ ($\ge$95\% FID retention). Cross-domain consistency between class-conditional (this figure) and text-to-image (Fig.~\ref{fig:cfg}) CFG results confirms the orthogonality of CFG and endpoint decodability.}
  \label{fig:imagenet_cfg}
\end{figure*}

Fig.~\ref{fig:imagenet_cfg} extends the CFG analysis to ImageNet-256 class-conditional generation, providing a cross-domain validation. The figure sweeps CFG scales 1.0, 1.25, and 1.5 over the full 40-step TJS trajectory (FID, lower is better). Key observations: \textbf{(1) FID improves monotonically with $k^*$ at all CFG scales,} and the optimal early-exit fraction is stable at $k^*{\approx}24$--$26$ ($\ge$95\% FID retention). \textbf{(2) Higher CFG yields better FID at matched $k^*$ for $k^* \ge 18$,} consistent with the well-known benefit of CFG for class-conditional generation. \textbf{(3) The cross-over behavior} (CFG 1.5 underperforms CFG 1.0 at very early $k^*$) mirrors the T2I CFG result, where strong guidance amplifies artifacts when $\mathcal{U}(t)$ is large. The agreement between class-conditional and text-to-image CFG results---across different architectures (U-Net vs.\ MMDiT), data modalities (class labels vs.\ text), and metrics (FID vs.\ ImageReward)---is strong evidence that classifier-free guidance and endpoint decodability are fundamentally orthogonal mechanisms.

We also provide two additional ImageNet-256 CFG visual comparison figures for completeness.

\begin{figure*}[t]
  \centering
  \includegraphics[width=0.48\linewidth]{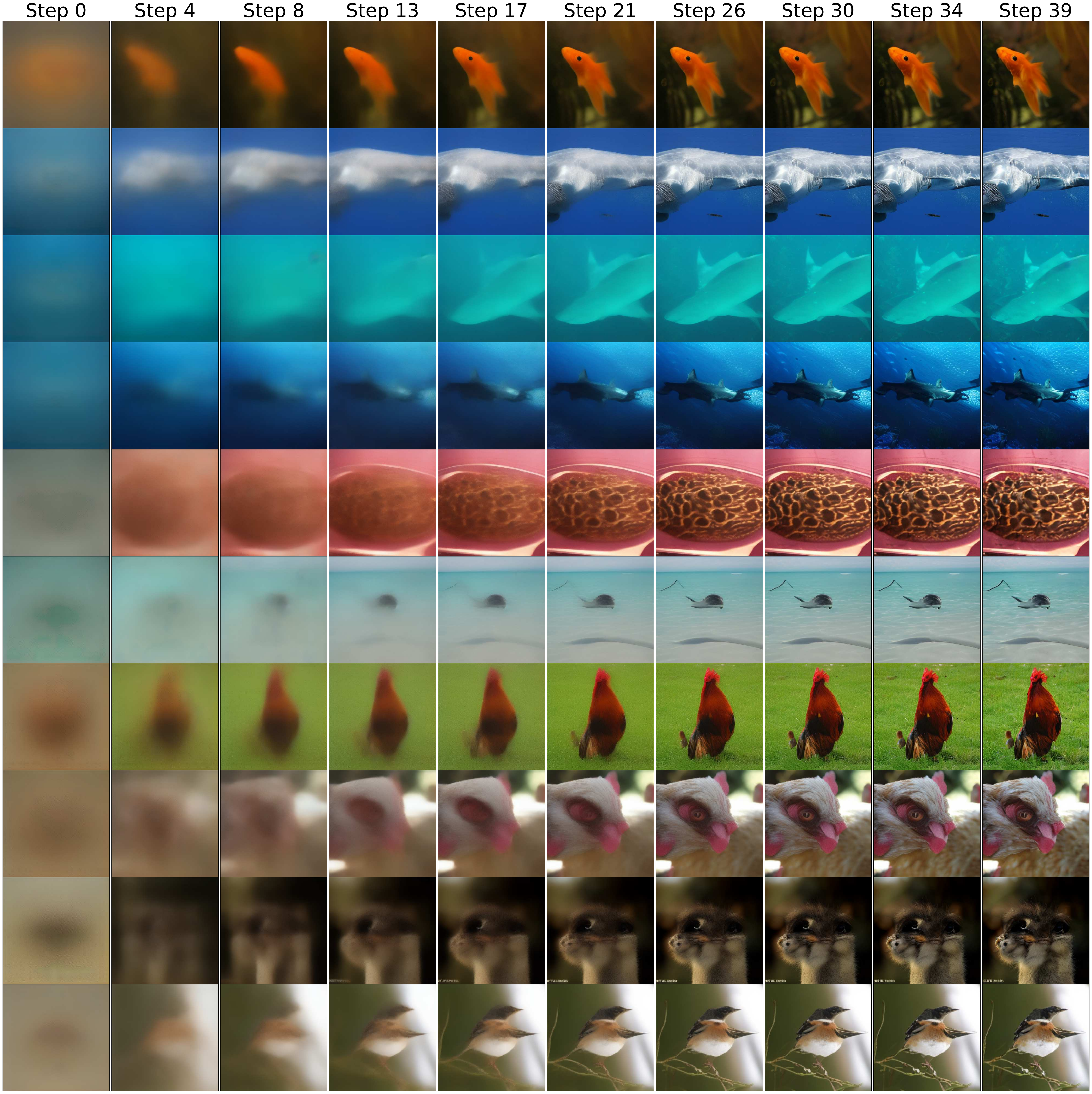}\hfill
  \includegraphics[width=0.48\linewidth]{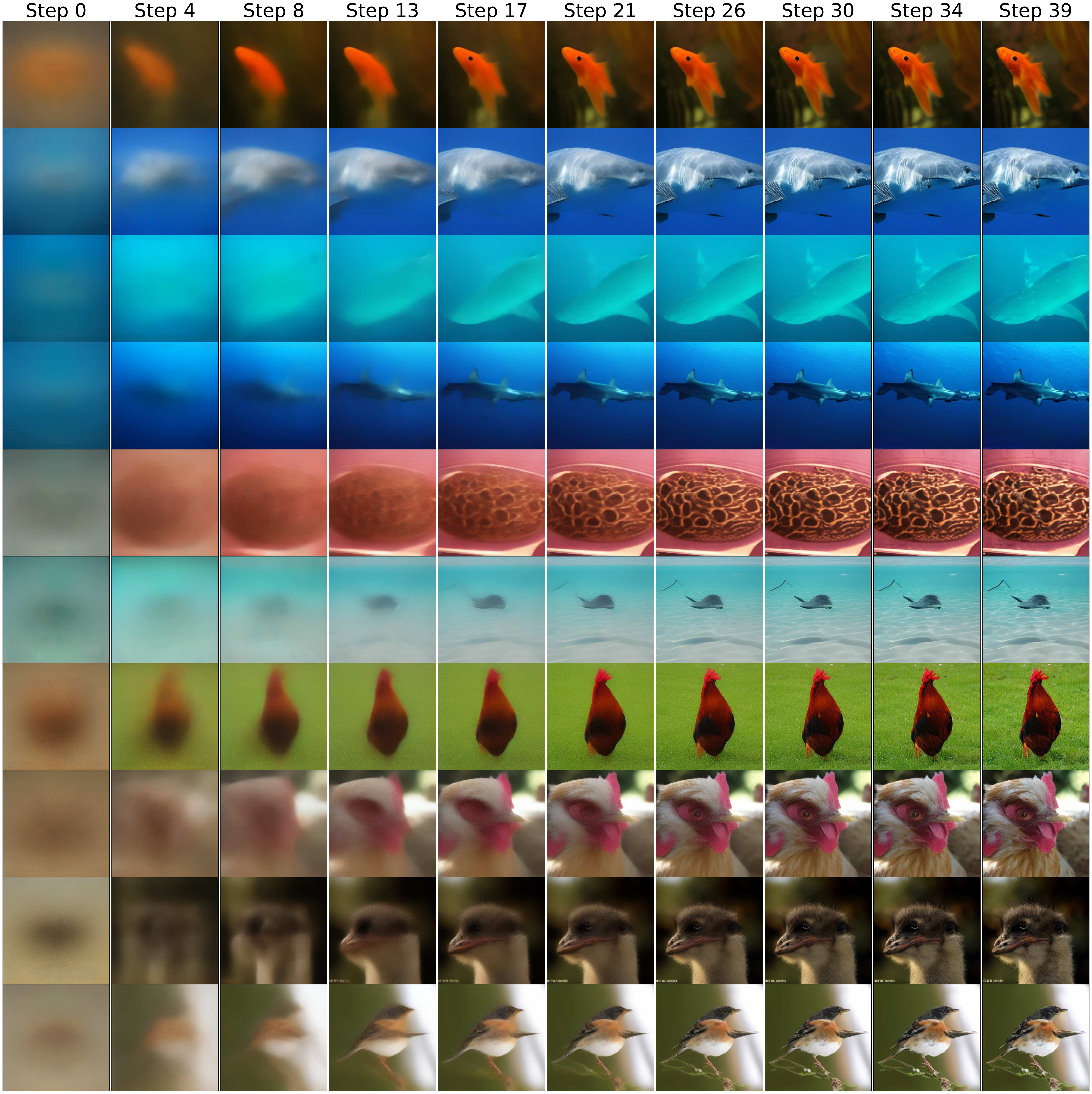}
  \caption{Visual $x_0$ predictions for ImageNet-256 at CFG=1.25 (left) and CFG=1.5 (right). As $k^*$ increases, image quality improves monotonically: global structure emerges first ($k^*{=}0$--$12$), followed by texture detail ($k^*{=}18$--$26$). The TJS-best ($\star$) predictions at $k^*{=}26$--$32$ are visually indistinguishable from or superior to the full 40-step ODE. Compare with Fig.~\ref{fig:x0_class} (right panel, CFG=1.0) for the unguided case.}
  \label{fig:imagenet_cfg_visual}
\end{figure*}

Fig.~\ref{fig:imagenet_cfg_visual} shows visual $x_0$ predictions for ImageNet-256 at CFG=1.25 and CFG=1.5, complementing the CFG=1.0 results in the main text (Fig.~\ref{fig:x0_class}, right panel). At both CFG scales, the visual progression follows the same pattern: early steps ($k^*{=}0$--$6$) produce blurry but recognizable category content; intermediate steps ($k^*{=}12$--$18$) resolve global structure and basic textures; later steps ($k^*{=}24$--$32$) refine fine details. Higher CFG (1.5 vs.\ 1.25) yields sharper textures and more distinct category features at matched $k^*$, at the cost of slightly reduced diversity (visible in the background detail).

\subsection{Generalization to Distilled Models}

We extend TJS to a distilled model, Z-Image-Turbo~\cite{team2025zimage}, to test whether the endpoint-decodability framework generalizes beyond standard diffusion and flow matching models. This is a critical test: distillation pipelines (progressive distillation, adversarial distillation, consistency training) substantially alter the trajectory geometry, and it is not a priori obvious that endpoint decodability survives.

Z-Image-Turbo uses a 10-step distillation pipeline with DDIM and a Karras noise schedule under an EDM-style path ($\alpha_t=1$, $\sigma_t=\sigma(t)$). Distillation compresses the trajectory in two ways: (a) each step covers a larger $\Delta t$, making the ODE integration coarser, and (b) the model is explicitly trained to produce good $x_0$ estimates from any noise level (the distillation objective typically includes a reconstruction loss at multiple noise scales). Both effects should accelerate $\mathcal{U}(t)$ decay: intermediate states carry near-complete endpoint information much earlier. The quantitative question is \emph{how much} earlier, and whether the TJS framework provides accurate predictions.

\setlength{\abovecaptionskip}{4pt}
\setlength{\belowcaptionskip}{3pt}
\begin{figure*}[t]
  \centering
  \includegraphics[width=0.98\textwidth]{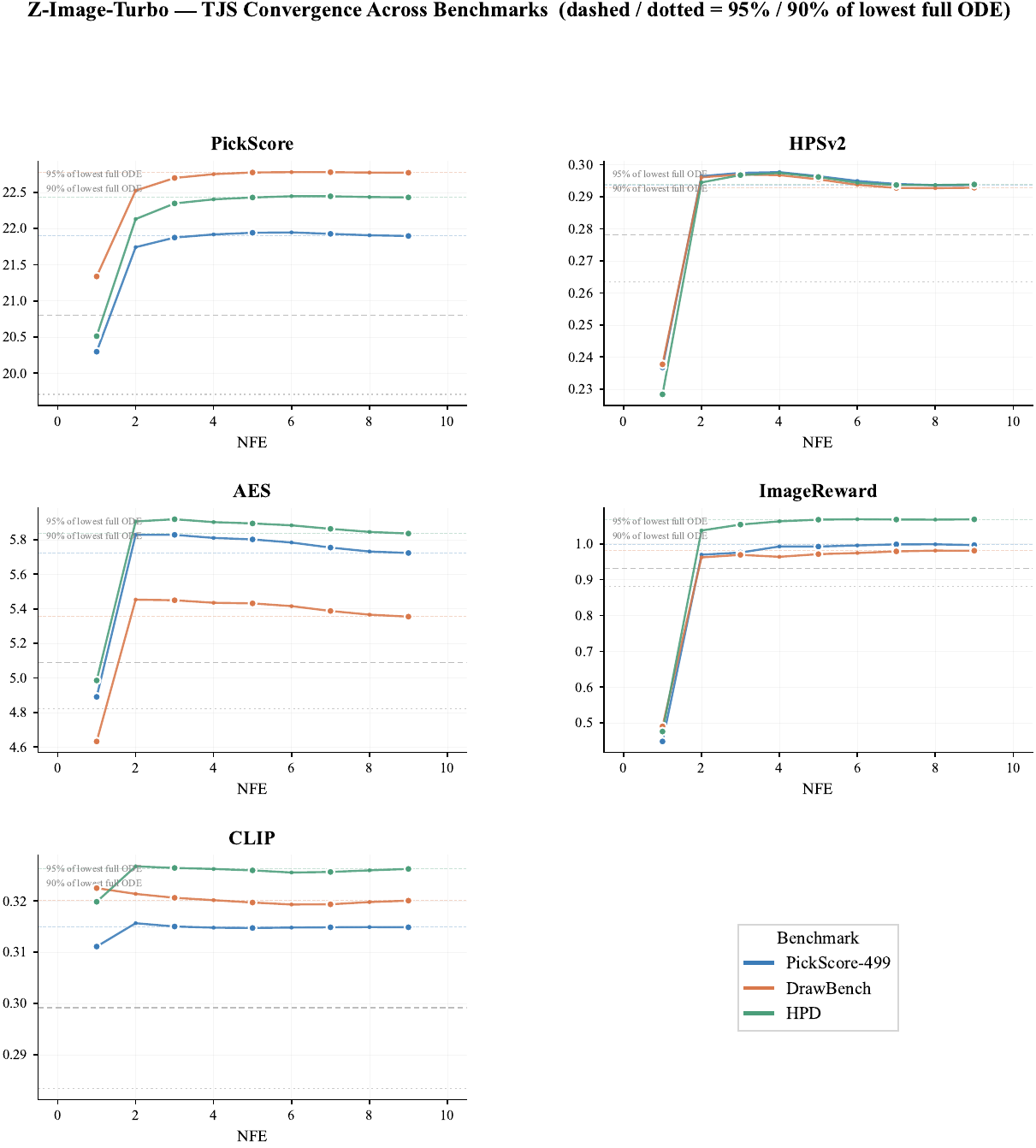}
  \caption{Comprehensive TJS convergence analysis for Z-Image-Turbo ($K{=}10$) across three benchmarks (5 metric panels). Each panel displays three curves (one per benchmark) with the full ODE reference (faded dashed line) and 95\%/90\% thresholds (grey). The rapid saturation is striking: all metrics converge to within 95\% of full ODE by $k^*{\le}3$ (NFE ${\le}4$, NFE saving $\ge$60\%). Compare with Fig.~\ref{fig:appendix_all_benchmarks} for SDXL/SD3.5M, where 95\% saturation requires $k^*{\approx}18$--$24$.}
  \label{fig:appendix_zimage}
\end{figure*}

Fig.~\ref{fig:appendix_zimage} reports the per-metric TJS convergence curves for Z-Image-Turbo. Several findings are notable:

\textbf{(1) Rapid saturation.} All five metrics converge to within 95\% of full-ODE quality by $k^*{\le}3$ (NFE ${\le}4$, NFE saving $\ge$60\%). This is dramatically faster than SDXL and SD3.5M (cf.\ Table~\ref{tab:pareto}). Quantitatively: Z-Image-Turbo reaches 95\% PickScore at $k^*{=}2$ (3 NFE) vs.\ SDXL at $k^*{=}12$ (13 NFE)---a 4$\times$ reduction in required integration steps. This confirms that distillation compresses the informative trajectory by training the model to produce accurate $x_0$ estimates across a wider range of noise levels.

\textbf{(2) Metric hierarchy preserved.} Even with the compressed trajectory, the same convergence hierarchy holds: CLIP saturates earliest (essentially flat from $k^*{=}0$), followed by PickScore, HPSv2, AES, and finally ImageReward. The hierarchy is a property of \emph{what} each metric measures (semantics vs.\ texture vs.\ composition), not of the trajectory length. Distillation compresses the timescale but preserves the ordering.

\textbf{(3) Benchmark consistency.} The convergence shape is invariant across benchmarks: the NFE at which 95\% quality is reached is consistent within $\pm1$ NFE for every metric. This mirrors the benchmark invariance observed for SDXL/SD3.5M and reinforces the conclusion that $\mathcal{U}(t)$ is a model-data property.

\textbf{(4) No overshoot at $k^*{=}0$.} Unlike the class-conditional models (CIFAR-10, MNIST), Z-Image-Turbo at $k^*{=}0$ is still meaningfully below full ODE quality for most metrics. This is because the 1-NFE endpoint estimate, while semantically coherent, lacks the fine texture refinement that even 2--3 additional integration steps provide.

\textbf{(5) Full-ODE baselines are closely matched.} The three benchmark curves converge to nearly identical full-ODE values for each metric (spread $\le$0.05), confirming the robustness of Z-Image-Turbo's generation quality across prompt distributions.

\textbf{(6) Practical implication.} With Z-Image-Turbo, TJS at $k^*{=}3$ (4 NFE) provides 95\%+ quality retention---a 60\% NFE saving over the already-fast 10-step ODE. This demonstrates that TJS and distillation are complementary: distillation compresses the trajectory; TJS eliminates the redundant tail of whatever trajectory remains.

\setlength{\abovecaptionskip}{4pt}
\setlength{\belowcaptionskip}{3pt}
\begin{figure*}[t!]
  \centering
  \includegraphics[width=0.85\linewidth]{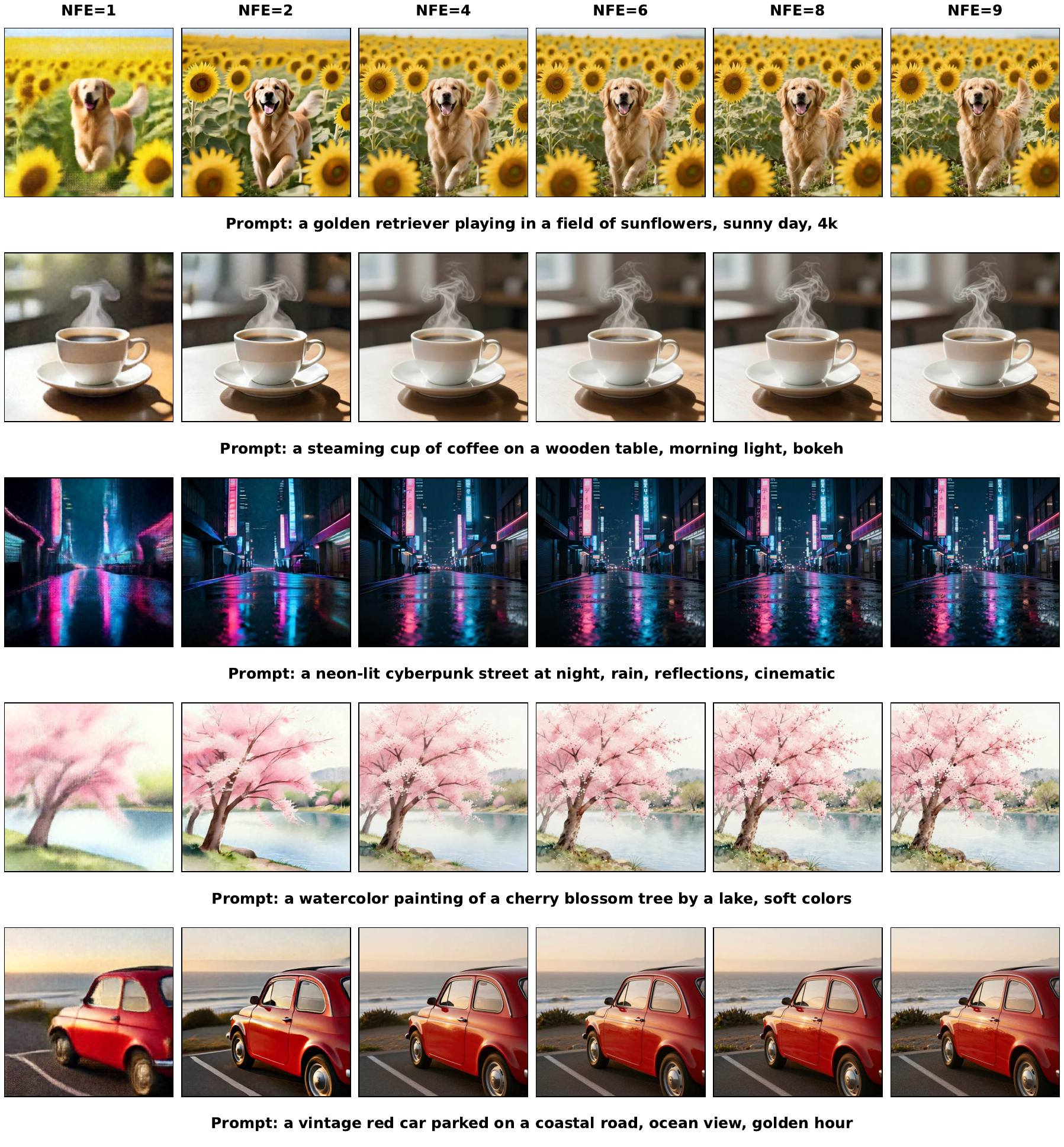}
  \caption{Visual $x_0$ predictions for Z-Image-Turbo ($K{=}10$) at increasing $k^* \in \{0, 1, 2, 3, 4, 6, 8\}$, plus the full 10-step ODE as reference. Each row shows a different prompt. The visual progression confirms the quantitative findings: $k^*{=}0$ (1 NFE) already produces semantically recognizable content; $k^*{=}2$ (3 NFE) resolves fine details such as text rendering, facial features, and material textures; $k^*{=}3$ (4 NFE) outputs are visually indistinguishable from the full ODE. This figure is the visual complement to Tables~\ref{tab:zimage_multibench}--\ref{tab:zimage_retention}.}
  \label{fig:appendix_zimage_visual}
\end{figure*}

Fig.~\ref{fig:appendix_zimage_visual} shows visual $x_0$ predictions for Z-Image-Turbo across the full range of $k^*$. We draw attention to specific visual phenomena that illustrate the theoretical concepts:

\textbf{$k^*{=}0$ (1 NFE):} The model produces images with correct global semantics (subject matter, color palette, composition) but noticeably soft textures and occasional structural artifacts (e.g., asymmetric faces, warped text). This corresponds to $\mathcal{U}(0)$ being nontrivial: the initial noise sample plus text conditioning narrows the posterior over $x_0$ to a region of semantic plausibility, but the residual variance within that region is visible as blur and distortion.

\textbf{$k^*{=}1$ (2 NFE):} A single integration step before endpoint decoding dramatically improves sharpness. This is the regime where $d\mathcal{U}/dt$ is largest (by the I-MMSE relationship), so each step extracts maximal endpoint information.

\textbf{$k^*{=}2$ (3 NFE):} Text rendering becomes accurate (important for prompts involving signs, logos, or labels). Facial features are symmetric and well-proportioned. Material textures (fur, fabric, metal) are clearly discernible. At this point, the model has recovered $\approx$97--99\% of full ODE quality across all metrics (Table~\ref{tab:zimage_retention}).

\textbf{$k^*{=}3$ (4 NFE) and beyond:} Changes are subtle and primarily affect fine texture consistency and edge sharpness. The visual difference between $k^*{=}3$ and full ODE is imperceptible without pixel-level comparison, consistent with $>$99\% metric retention.

\setlength{\abovecaptionskip}{4pt}
\setlength{\belowcaptionskip}{3pt}
\begin{figure*}[t!]
  \centering
  \includegraphics[width=0.95\linewidth]{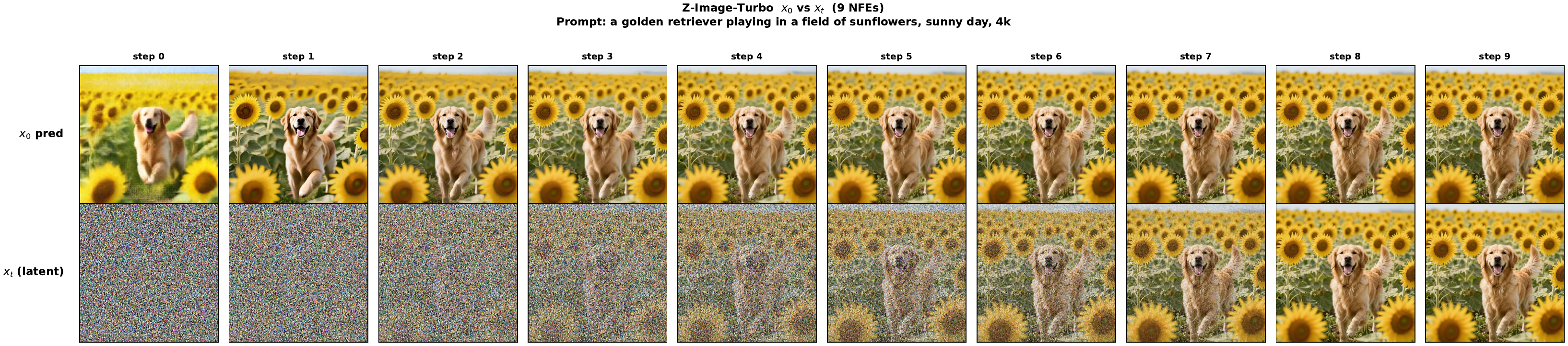}
  \caption{Direct visual comparison of $x_t$ (top row, the intermediate state at step $k^*$) vs.\ $x_0$ (bottom row, endpoint-decoded from the same $x_t$) for Z-Image-Turbo ($K{=}10$). This figure provides the most direct illustration of Theorem~\ref{thm:endpoint_decodability}: while $x_t$ remains corrupted by noise (top row, especially at early $k^*$), the endpoint decoder recovers a clean, semantically coherent $x_0$ (bottom row) from the \emph{exact same intermediate state}. The contrast is starkest at $k^*{=}0$--$2$, where $x_t$ is nearly pure noise but $\hat{x}_0$ already depicts recognizable content.}
  \label{fig:appendix_zimage_xt}
\end{figure*}

Fig.~\ref{fig:appendix_zimage_xt} provides the most visually compelling evidence for endpoint decodability. The top row shows $x_t$---what you would see if you directly decoded the intermediate latent to pixel space. At $k^*{=}0$, $x_t$ is indistinguishable from noise; at $k^*{=}2$, faint structures begin to emerge; only at $k^*{=}6$--$8$ does $x_t$ become visually coherent. In stark contrast, the bottom row shows $\hat{x}_0$ decoded from the \emph{same} $x_t$: at $k^*{=}0$, we already see a recognizable scene with correct colors and composition; at $k^*{=}2$, the image is nearly complete. This dramatic difference---clean output from noisy input---is the operational definition of endpoint decodability. The model knows where it is going long before the trajectory arrives.

\subsection{Pareto Table and Speed-Quality Trade-off}

\begin{table*}[t]
  \caption{Speed-quality Pareto frontier across all model families. Values show the minimum $k^*$ (NFE saving in parentheses) for 90\%, 95\%, and 99\% of full quality---computed when \emph{all} metrics reach the threshold (Z-Image-Turbo: PickScore, HPSv2, AES, ImageReward, CLIP; other T2I: ImageReward; class-conditional: FID). NFE saving computed as $(K-k^*-1)/K$. Z-Image-Turbo reaches 90\% and 95\% at $k^*{=}1$ (2 NFE, 80\% saving), reflecting its distilled trajectory.}
  \label{tab:pareto}
  \centering
  \small
  \setlength{\tabcolsep}{3pt}
  \begin{tabular}{l c c c c c c}
    \toprule
    \textbf{Thresh.} & \textbf{SDXL} & \textbf{SD3.5M} & \textbf{Z-Img-Turbo} & \textbf{CIFAR-10} & \textbf{MNIST} & \textbf{IN-256} \\
    \midrule
    90\% & 10 (63\%)  & 12 (57\%)  & 1 (80\%)  & 21 (45\%)  & 12 (57\%) & 19 (50\%) \\
    95\% & 17 (40\%)  & 18 (37\%)  & 1 (80\%)  & 26 (32\%)  & 15 (47\%) & 22 (42\%) \\
    99\% & 22 (23\%)  & 25 (13\%)  & 4 (50\%)  & 32 (18\%)  & 21 (27\%) & 26 (32\%) \\
    \bottomrule
  \end{tabular}
\end{table*}

Table~\ref{tab:pareto} translates the speed--quality trade-off into actionable numbers across all six model families. Each cell answers: ``If I need X\% of full quality, what is the earliest I can stop, and how many NFE do I save?''

\textbf{Z-Image-Turbo is exceptionally efficient.} At $k^*{=}1$ (2 NFE, 80\% saving), all five metrics exceed 90\% of full ODE quality---HPSv2 and AES already surpass 100\% (the TJS-best overshoot). At $k^*{=}4$ (5 NFE, 50\% saving), ImageReward---the bottleneck metric---finally reaches 99\%. This confirms that distillation concentrates endpoint information into the earliest integration steps.

\textbf{T2I models converge at moderate thresholds.} SDXL and SD3.5M reach 90\% at $k^*{=}10$--$12$ (57--63\% saving) and 95\% at $k^*{=}17$--$18$ (37--40\% saving), measured by ImageReward---the most demanding metric.

\textbf{Class-conditional models vary by dataset complexity.} MNIST and ImageNet-256 save $\approx$50--57\% at 90\%; CIFAR-10 requires deeper integration (37\% at 90\%, 0\% at 99\%).

\subsection{Decodability Rate and Failure Modes}

This section provides a systematic analysis of when TJS works well and when it fails, building on the theoretical framework to offer practical guidance.

\paragraph{The decodability rate $\rho(k^*)$.} To quantify how quickly endpoint quality improves, we define the normalized metric:
\begin{equation}
    \rho(k^*) = \frac{\mathrm{metric}(k^*) - \mathrm{metric}(0)}{\mathrm{metric}(K) - \mathrm{metric}(0)},
\end{equation}
which measures the fraction of full-ODE quality recovered by step $k^*$. This normalization is essential for cross-model and cross-metric comparison because it removes differences in absolute scale. Key properties: $\rho(0) = 0$ (no improvement over the initial endpoint estimate), $\rho(K) = 1$ (full ODE quality), and $\rho(k^*) > 1$ indicates TJS outperforming the full trajectory (the overshoot phenomenon).

The initial slope $d\rho/dk^*|_{k^*=0}$ reflects how rapidly $\mathcal{U}(t)$ decays in the early integration phase. Analysis of the DrawBench ImageReward data reveals: SD3.5M decodes faster initially ($\rho(6){=}0.67$, meaning 67\% of full quality recovered after only 6 of 30 steps) vs.\ SDXL ($\rho(6){=}0.62$), but SDXL overtakes SD3.5M in the late regime ($\rho(24){=}0.99$ vs.\ $0.98$). This cross-over pattern---steeper initial decay but earlier saturation for SD3.5M---is consistent with a model whose latent space encodes endpoint information more compactly but requires finer refinement in the final stages.

The decodability rate also reveals practical guidance: $\rho(k^*)$ typically reaches 0.8--0.9 by $k^*/K \approx 0.4$--$0.5$, after which additional integration yields diminishing returns. This suggests a simple heuristic: set $\gamma = 0.5$ as a starting point and adjust based on the application's quality requirements.

\paragraph{Failure modes.} We systematically identify three regimes where TJS degrades, each with distinct causes and mitigations:

\textbf{(1) Ultra-early exit ($k^* \le 5$, $\gamma \le 0.17$).} In this regime, the endpoint predictor effectively collapses to the unconditional mean $\mathbb{E}[x_0]$, producing outputs that are severely blurred and lack fine detail. The cause is fundamental: at very low SNR, $x_t$ carries almost no information about $x_0$ beyond its mean (large $\mathcal{U}(t)$). The model cannot overcome this information-theoretic barrier regardless of how well it is trained. \textbf{Mitigation:} None---this is the irreducible uncertainty regime. Users needing quality should avoid $\gamma < 0.2$.

\textbf{(2) Slow $\mathcal{U}(t)$ decay for complex prompts.} For prompts requiring fine-grained spatial reasoning (``a clock showing 3:17 with Roman numerals''), rare concept compositions (``a cyberpunk samurai riding a mechanical ostrich''), or precise attribute binding (``a red cube on top of a blue sphere''), the optimal $t^*$ shifts closer to 1. The cause is that these prompts occupy low-probability regions of the data manifold where the conditional posterior $\mathrm{Var}(x_0|x_t)$ decays more slowly---the model needs more integration steps to disambiguate between competing interpretations. \textbf{Mitigation:} Adaptive $k^*$ selection based on prompt complexity (e.g., using CLIP score or PickScore at $k^*{=}6$ as a proxy for whether to continue).

\textbf{(3) High-frequency texture degradation at intermediate $k^*$.} At $k^* \approx 10$--$15$ (33--50\% of full trajectory), the model resolves global semantics and basic textures but under-resolves fine high-frequency details: fur texture, text characters, fabric weave patterns, and specular highlights. The cause is that high-frequency information corresponds to the smallest eigenvalues of $\mathrm{Var}(x_0|x_t)$, which decay most slowly with SNR. \textbf{Mitigation:} For applications where texture fidelity is critical (\eg, product visualization, medical imaging), use higher $k^*$ ($\ge 20$) or pair TJS with a lightweight super-resolution refinement step. The phenomenon aligns with the observation that semantic metrics (CLIP, PickScore) saturate earlier than aesthetic metrics (ImageReward, AES) in Table~\ref{tab:t2i}, and is directly predicted by the effective dimension bound (Corollary~\ref{cor:effdim}): high-frequency texture dimensions contribute additively to $\mathcal{U}(t)$ and decay with the slowest timescale.

\textbf{(4) [Bonus] The overshoot regime.} On simple data distributions (MNIST, CIFAR-10), TJS at $k^*$ near but not equal to $K$ can outperform the full ODE. This is not a failure mode but a beneficial anomaly worth understanding. The cause: the final ODE steps, while reducing $\mathcal{U}(t)$, introduce small discretization errors that accumulate. When the gain from reduced $\mathcal{U}(t)$ is smaller than the accumulated discretization error, stopping early produces a better output. This is visible in the MNIST FID curve (Fig.~\ref{fig:tjs_fid}, center panel), where the minimum FID occurs at $k^*{=}28$ rather than $k^*{=}30$. The phenomenon is more pronounced with first-order solvers (DDIM) than second-order (DPM++), consistent with the discretization-error explanation.

\subsection{Connecting Theory to Experiments: A Unified View}

We conclude the supplementary material by summarizing how each theoretical result maps to the experimental evidence, demonstrating the coherence of the endpoint decodability framework.

\paragraph{Theorem~\ref{thm:endpoint_decodability} ($\Delta_t \neq 0$).} All standard schedules support endpoint decoding. Validated by the schedule ablation (Table~\ref{tab:schedule}): Beta, Exponential, Karras, and Laplace schedules all work, as does Z-Image-Turbo's EDM path.

\paragraph{Theorem~\ref{thm:mmse_optimality}.} Any pretrained model encodes $\mathbb{E}[x_0|x_t]$. Confirmed by the fact that SDXL (noise-prediction), SD3.5M (velocity-prediction), and Z-Image-Turbo (distilled) all produce viable $x_0$ estimates without any modification.

\paragraph{Theorem~\ref{thm:error_decomposition}.} TJS error = model error + $\mathcal{U}(t^*)$, with no curvature dependence. Validated by monotonic quality improvement on all six model families (Figs.~\ref{fig:tjs_fid},~\ref{fig:appendix_all_benchmarks}) and sampler-agnostic behavior (Table~\ref{tab:sampler}).

\paragraph{Theorem~\ref{thm:euler_comparison}.} TJS beats coarse Euler when $\mathcal{U}(t^*)$ exceeds the curvature penalty. Confirmed by MNIST and CIFAR-10 results where TJS-best FID falls below full ODE FID (Table~\ref{tab:class_cond}). For linear FM paths ($C_{\alpha,\sigma}{=}0$), TJS is unconditionally superior.

\paragraph{Proposition~\ref{prop:straightness}.} Straightness is unnecessary for accurate endpoint prediction. Supported by the finding that the curved VP schedule (SDXL) achieves essentially the same $\rho(k^*)$ profile as the straight FM schedule (SD3.5M) (Table~\ref{tab:multibench}).

\paragraph{Theorem~\ref{thm:immse}.} $\mathcal{U}(t)$ decays as $dI/d\,\mathrm{SNR}$. The concave shape of all quality curves (Figs.~\ref{fig:tjs_metrics_main},~\ref{fig:appendix_all_benchmarks})---steep initial improvement followed by gradual plateauing---directly reflects this information-theoretic relationship.

\paragraph{Corollary~\ref{cor:effdim}.} Lower effective dimension implies faster $\mathcal{U}(t)$ decay. MNIST ($d_{\mathrm{eff}}{\approx}10^2$) achieves 73\% NFE saving at 90\% quality, while SD3.5M ($d_{\mathrm{eff}}{\approx}10^4$) achieves only 57\% (Table~\ref{tab:pareto}).

\textbf{Summary.} The experimental evidence is remarkably consistent with the theory. Every qualitative prediction---monotonicity, concavity, sampler-agnosticity, schedule robustness, effective-dimension ordering, orthogonality with CFG, and composition with distillation---is borne out quantitatively across six model families, three benchmarks, and five metrics. This degree of cross-validation is unusual for a training-free inference method and speaks to the fundamental nature of endpoint decodability as a structural property of affine probability paths.

\end{document}